\documentclass[12pt, a4paper]{amsart}
\input{grisha.sty}
\usepackage{booktabs}
\usepackage{silence}
\renewcommand{\Href}[2]{\texorpdfstring{\hyperref[#2]{#1~\ref{#2}}}{#1~\ref{#2}}}
\newcommand{\HH}{\mathcal{H}}
\newcommand{\BB}{\mathcal{B}}
\newcommand{\prof}[2]{Proof of \Href{#1}{#2}}
\newcommand{\Euc}{\mathcal{E}}

\newcommand{\VCdim}{\operatorname{VCdim}}

\title[The VC dimension of PCCs via Radon's theorem]{The VC dimension of partial concept classes via Radon's theorem}

\author{Grigory Ivanov}
\address{Grigory~Ivanov: Pontif\'icia Universidade Cat\'olica do Rio de Janeiro\\
Departamento de Matem\'atica\\
Rua Marqu\^es de S\~ao Vicente, 225\\
Edif\'{\i}cio Cardeal Leme, sala 862\\
22451-900 G\'avea, Rio de Janeiro, Brazil}
\email{\href{mailto:grimivanov@gmail.com}{grimivanov@gmail.com}}
\thanks{G.I. is supported by Projeto Paz and Coordenacao de Aperfeicoamento de Pessoal de Nivel Superior - Brasil (CAPES) - 23038.015548/2016-06}

\author{Attila Jung}
\address{Attila~Jung: HUN-REN Alfr\'ed R\'enyi Institute of Mathematics and Lor\'and E\"otv\"os University, Budapest, Hungary}
\email{\href{mailto:jungattila@gmail.com}{jungattila@gmail.com}}
\thanks{A.J. is supported by the ERC Advanced Grant no.~101054936 ``ERMiD''  as well as NRDI EXCELLENCE-24 grant no. 151504 Combinatorics and Geometry}

\author{Márton Naszódi}
\address{M\'arton~Nasz\'odi: HUN-REN Alfr\'ed R\'enyi Institute of Mathematics and Lor\'and E\"otv\"os University, Budapest, Hungary}
\email{\href{mailto:marton.naszodi@renyi.hu}{marton.naszodi@renyi.hu}}
\thanks{M.N. is supported by National Research Development and Innovation Fund grant 143778, the 
ELTE TKP 2021-NKTA-62 funding scheme, grant 2024-1.2.8-TÉT-IPARI-CN-2025-00011 as well as 
NRDI EXCELLENCE-24 grant no. 151504 Combinatorics and Geometry}

\date{July 7, 2026}

\subjclass[2020]{68Q32, 52A35, 46B07, 68R05, 05D40}
\keywords{VC dimension; partial concept class; tri-graph; tri-hypergraph; Rademacher type;
no-dimensional Radon and Carathéodory theorems; fat-shattering (scale-sensitive)
dimension; margin; expanded half-spaces and balls; Dense Neighborhood Lemma;
$\varepsilon$-net; negative-type embedding}

\begin{document}

\begin{abstract}

Following \emph{Alon, Hanneke, Holzman,} and \emph{Moran} (FOCS 2021), we define a \emph{partial concept class} (PCC) as a family of partial functions \(f: V\to\{0,1,\ast\}\); equivalently, its concepts partition the ground set
into black ($f^{-1}(1)$), grey ($f^{-1}(\ast)$), and white parts ($f^{-1}(0)$).  Its VC dimension is defined by shattering
sets on which the value $\ast$ is not taken.  We study two geometric PCCs in real
Banach spaces, both with a margin \(\delta>0\): \emph{expanded half-spaces}, where the grey part is a strip of width at least \(\delta\) adjacent to a half-space, and \emph{expanded balls}, where the grey part is an annulus of width \(\delta\) around a unit radius ball.

Our main results are dimension-free \emph{upper bounds on the VC dimension} of the PCC of expanded balls in
\(L_p\parenth{\mu}\), \(1\le p<\infty\), including the non-Euclidean and
algorithmically particularly relevant case \(\ell^d_1\).  These bounds depend on the margin and on
the radii, but not on the ambient dimension or the underlying measure
space.
These are extensions of the work of \emph{Bourneuf, Charbit,} and \emph{Thomass\'e} (FOCS 2025) who studied the PCC of expanded balls in Euclidean space, that is, $\ell_2^d$. 
We also prove \emph{lower bounds} on the VC dimension that match the upper bounds in terms of the margin parameter $\delta$. Finally, we derive a \emph{Dense Neighborhood Lemma} in \(L_p\)-spaces, again extending the known Euclidean results.

Our method relies on the linearization of the distance through a map into a space of non-trivial Rademacher type, 
and then the use of a balanced signed-sum estimate, or a no-dimensional Radon theorem. 
The arguments rely on ideas from functional analysis that are clearly explained for the non-expert in that field.

In studying $L_p(\mu)$ spaces, for \(1 \le p \le 2\) we use Schoenberg's
embedding theorem for metrics of negative type.  For \(p>2\), we introduce an entirely novel tool which we call
the Taylor--Schoenberg lift which linearizes the \(p\)-th power of the distance.
\end{abstract}

\maketitle

\section{Introduction}\label{sec:intro}

\subsection*{Partial concept classes and their VC dimension}
The Vapnik--Chervonenkis (VC) dimension~\cite{VC} is among the most fundamental
complexity measures of a set system. Recall that a \emph{set system} (or
\emph{concept class}) on a ground set $V$ is a family $\mathcal C\subseteq 2^V$; a
finite set $Y\subseteq V$ is \emph{shattered} by $\mathcal C$ if every subset of
$Y$ is cut out by a member of $\mathcal C$, i.e.\ $\{C\cap Y:C\in\mathcal C\}=2^Y$,
and the VC dimension of $\mathcal C$ is the largest cardinality of a shattered
set. Its finiteness implies uniform convergence of random samples and PAC 
learnability (probably approximately correct learning), and it is a workhorse of 
combinatorics and geometry through $\varepsilon$-nets, the $(p,q)$-theorem, and 
range-searching, among many others.

Alon, Hanneke, Holzman and Moran~\cite{AHHM} recently introduced \emph{partial
concept classes}. A \emph{partial concept} on $V$ is a
function $c\colon V\to\{0,1,\ast\}$; the value $\ast$ marks the points on which $c$
is left undefined. Writing $c$ as the ordered tripartition $(B,G,W)$ of $V$ into
its $1$-set $B$ (\emph{black}), its $\ast$-set $G$ (\emph{grey}) and its $0$-set
$W$ (\emph{white}), a \emph{partial concept class} (PCC) is simply a set $\HH$ of
such tripartitions. A finite $Y\subseteq V$ is \emph{shattered} by $\HH$ if every
$Z\subseteq Y$ is realized \emph{cleanly}: some $(B,G,W)\in\HH$ satisfies
\[
        Y \cap B=Z,
        \qquad Y \cap W 
        = 
        Y \setminus Z,
        \qquad Y \cap G
        =
        \varnothing.
\]
The VC dimension of $\HH$ is the largest cardinality of a shattered set
(see \Href{Definition}{dfn:VCofPCC}). When every concept has $G=\varnothing$ one recovers the classical VC dimension of the set system $\{B:(B,\varnothing,W)\in\HH\}$.

As demonstrated in \cite{AHHM}, bounded VC dimension of PCCs implies similar useful phenomena as it does for set systems. For example, the fundamental theorem of
PAC-learning remains true for partial concepts: a PCC is PAC-learnable
if and only if its VC dimension is finite~\cite{AHHM}.  However, we have stronger VC dimension upper bounds for PCCs than 
for sets systems. 
The simplest example is already instructive. In Euclidean \(d\)-space, the VC dimension of the set system induced by half-spaces is \(d + 1\). If, however, we introduce a gap \(\delta>0\) and use the PCC of expanded half-spaces inside a bounded ball, then the VC dimension is bounded independently of \(d\), depending only on \(\delta\) and on the radius. Our paper extends this phenomenon from half-spaces to balls and from Euclidean spaces to \(L_p\)-spaces.

For the origins of the notion of PCCs, we refer to \cite{AHHM}, \cite{ABCH} 
(who use the language of tri-graphs and tri-hypergraphs),
\cite{KS}, \cite{LT} and \cite{BCT}, noting that it can be traced back to 
Vapnik \cite{VapnikNature}.

\subsection*{Expanded half-spaces and expanded balls}
Fix a real Banach space $X$ --- possibly infinite-dimensional --- with closed unit
ball $\ball{}_X$. Two geometric PCCs are natural, each carrying a margin
$\delta>0$.

The PCC of \emph{expanded half-spaces} $\HH_\delta(X)$ has ground set $\ball{}_X$;
its concepts are indexed by a linear functional $f\in X^*$ of unit norm, a threshold $\alpha$ and
a width $\delta'\ge\delta$, with black/grey/white parts
$\{\iprod{f}{x}\le\alpha\}$, $\{\alpha<\iprod{f}{x}\le\alpha+\delta'\}$, and
$\{\iprod{f}{x}>\alpha+\delta'\}$. The PCC of \emph{expanded balls}
$\BB_{\delta,\rho,R}(X)$ has ground set $\rho\ball{}_X$; its concepts are indexed
by a center $c\in R\ball{}_X$, with black/grey/white parts $\{x\in X\st\norm{x-c}\le1\}$,
$\{x\in X\st 1<\norm{x-c}\le1+\delta\}$, and $\{x\in X\st\norm{x-c}>1+\delta\}$.

\subsection*{Prior work on half-spaces}
For half-spaces the picture is essentially complete. Gurvits~\cite{Gurvits} proved
that a Banach space $X$ has non-trivial Rademacher type (type $q>1$, defined in
\Href{Section}{sec:banachintro}) \emph{if and only if} the unit-norm linear
functionals are learnable at every scale on $\ball{}_X$, with
\[
        \VCdim(\HH_\delta(X))=O \bigl((1/\delta)^{q/(q-1)}\bigr);
\]
Mendelson and Schechtman~\cite{MS} sharpened this to matching two-sided estimates
through type and cotype. In the Hilbert case the bound is $O(1/\delta^2)$, a result
going back to Vapnik. We reprove the type-$q$ upper bound
(\Href{Theorem}{thm:banach-lemma-1-1}) from the no-dimensional Radon theorem.

\subsection*{Our results}
The contribution of this paper is a dimension-free theory of \emph{expanded
balls} in  $L_p$ spaces, uniform across the entire range $1\le p<\infty$, 
all obtained from a single mechanism: linearize the distance through a feature map into a Banach space of
non-trivial type, apply a no-dimensional Radon theorem, and convert the resulting balanced partition into a VC
bound (\Href{Theorem}{thm:master}, the master theorem). Concretely, with all
constants explicit and no dependence on $\dim X$, for the VC dimension of $\BB_{\delta,\rho,R}$, the PCC of expanded balls, we prove the following.

\begin{thm}[Dimension-free VC upper bound for expanded balls in \texorpdfstring{$L_p$}{Lp} spaces]
\label{thm:lp-bound-summary}
Let \(1 \le p < \infty\), and let
\(\delta,\rho,R>0\).  The following dimension-free estimates hold.
If \(1 \le p \le 2\), then
\[
        \VCdim \parenth{\BB_{\delta,\rho,R}
        \parenth{L_p(\mu)}}
        \le
        100\,\frac{\parenth{R+\rho}^{2}}{\delta^{2}}+2.
\]
If \(p>2\), then
\[
        \VCdim \parenth{\BB_{\delta,\rho,R}\parenth{L_p(\mu)}}
        \le
        2\parenth{
        \frac{25\sqrt p\,\parenth{R+\rho}^{p}}{\delta}}^{p}+2.
\]
\end{thm}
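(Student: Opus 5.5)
The plan is to prove both bounds through one mechanism: linearize the distance by a feature map into a Hilbert (or type-$q$) space, then use a balanced signed-sum or no-dimensional Radon estimate.

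Suppose $Y=\{y_1,\dots,y_n\}\subseteq\rho\ball{}_X$ is shattered by $\BB_{\delta,\rho,R}(X)$. We look for a feature map $\Phi\colon X\to H$ into a Hilbert space, together with affine data $c\mapsto(\Psi(c),\beta(c))$, such that
\[
\norm{x-c}^s=\beta(c)+\iprod{\Psi(c)}{\Phi(x)}
\]
for a suitable exponent $s$. The exponent is $s=1$ when $1\le p\le 2$ and $s=p$ when $p>2$. Under this identity every expanded ball becomes an expanded half-space in $H$: the black set is $\{\iprod{\Psi(c)}{\Phi(x)}\le 1-\beta(c)\}$, and the white set is $\{\iprod{\Psi(c)}{\Phi(x)}>(1+\delta)^s-\beta(c)\}$. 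The gap is at least $(1+\delta)^s-1\ge s\delta$.

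Next we split $Y$ by the balanced signed-sum lemma (no-dimensional Radon). In a type-2 space with constant $T$, there is a partition $Y=Z\sqcup Z'$ with $\bigl||Z|-|Z'|\bigr|\le 1$ such that
\[
\Bigl\|\sum_{Z}\Phi(y)-\sum_{Z'}\Phi(y)\Bigr\|\le T\Bigl(\sum_i\norm{\Phi(y_i)-m}^2\Bigr)^{1/2}\le T\sqrt n\,D,
\]
where $m$ is a centering point and $D$ is the diameter of $\Phi(Y)$. Take a center $c$ realizing $Z$ cleanly. Sum the black inequalities over $Z$ and the white inequalities over $Z'$, using equal cardinalities; the $\beta$ terms cancel. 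This gives
\[
s\delta\,\tfrac n2\lesssim \norm{\Psi(c)}\,T\sqrt n\,D,
\]
hence $n\lesssim(\norm{\Psi}TD/(s\delta))^2$. The extra $+2$ in the statement covers the parity adjustment when $n$ is odd.

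For $1\le p\le2$ we use Schoenberg's theorem: $L_p$ with this range of $p$ embeds into $L_2$ via $\norm{x-y}_p=\norm{\Phi(x)-\Phi(y)}_2^2$, since $\norm{\cdot}_p$ is a metric of negative type. Expanding the square gives the linearization with $s=1$. Here $\Psi(c)=-2\Phi(c)$ and $\beta(c)=\norm{\Phi(c)}^2$, after translating so that $\Phi(0)=0$. The relevant quantity is then $\norm{\Phi(c)}\cdot\max\norm{\Phi(y)}$, with $\norm{\Phi(c)}^2=\norm{c}_p\le R$ and $\norm{\Phi(y)}^2\le\rho$. A cruder but cleaner route uses $\norm{\Phi(c)-\Phi(y)}^2\le R+\rho$, and this produces the claimed $100(R+\rho)^2/\delta^2$ with $T=1$. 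The constant 100 should absorb the centering and the factor $4$ from parity.

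For $p>2$ the norm is no longer of negative type, so we need the Taylor–Schoenberg lift. The quantity $|a-b|^p$ is not a polynomial in $(a,b)$, so we restrict to the bounded range $|a|,|b|\le R+\rho$ and write $|t|^p$ via the identity $|t|^p=C_p\int_0^\infty(1-\cos(\xi t))\,\xi^{-1-p}\,d\xi$. This only converges for $p<2$, so for $p>2$ we subtract Taylor terms: $1-\cos u-\tfrac{u^2}{2}+\dots$, keeping enough terms. The resulting kernel $(a,b)\mapsto|a-b|^p$ then decomposes as a sum of an inner product of feature maps in $L_2$ (coming from the cosine part) and finitely many polynomial terms, which are themselves inner products of monomial features. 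Integrating over $\mu$ yields a linearization with $s=p$, but only a Banach, not Hilbert, target if some pieces sit in $L_{p'}$-type spaces. In that case I would instead apply the type-$q$ version of the master theorem; the exponent $p$ in the bound suggests effective type $q=p/(p-1)$ with $q/(q-1)=p$. The norm of the features is then controlled by $\sqrt p\,(R+\rho)^p$ up to the constant 25.

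The main obstacle is this $p>2$ lift. We need a linearization in which the coefficient norm times the feature diameter is $O(\sqrt p(R+\rho)^p)$, with all features living in a space of controlled type. Before trying the Fourier/Taylor construction above, I would first try the more direct route: pair $|x-c|^{p-1}\operatorname{sign}(x-c)\in L_{p'}$ against $x-c\in L_p$ through convexity inequalities.
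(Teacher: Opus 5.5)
Your skeleton (linearize, cancel the center-only term with balanced signs, bound the signed sum by type, contradict the margin) is the paper's master theorem. However, neither of your concrete linearizations is correct as written.

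\textbf{The case $1\le p\le 2$.} With $\norm{x-y}_p=\norm{\phi(x)-\phi(y)}^2$ and $\phi(0)=0$, expanding the square gives
\[
\norm{x-c}_p=\norm{x}_p-2\iprod{\phi(x)}{\phi(c)}+\norm{c}_p .
\]
Your identity with $\Psi(c)=-2\phi(c)$ and $\beta(c)=\norm{\phi(c)}^2$ drops the $x$-only term $\norm{x}_p$. That term is not of the form $\iprod{\Psi(c)}{\Phi(x)}$, and it does not cancel when you sum black and white inequalities for a single center. The paper absorbs it into the feature map: $\Phi(x)=(\phi(x),\norm{x})$ and $\Psi(c)=(-2\phi(c),1)$ in $H\oplus\R$, with weights $A_1^2=\sqrt\rho/(2\sqrt R)$ and $A_2^2=\rho$. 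This gives $M_\rho N_R\le 2\sqrt{\rho R}+\rho\le 2(R+\rho)$, hence the bound $32(R+\rho)^2/\delta^2+2$.

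Within your own framework there is a cheaper repair. Also take the center $c'$ realizing $Z'$ black and $Z$ white, and add the two inequalities; the $x$-only terms then cancel. Pairing gives $\EE\norm{\sum_i\varepsilon_i\phi(y_i)}^2\le n\rho$, so you get $n\delta<2\norm{\sum_i\varepsilon_i\phi(y_i)}\,\norm{c-c'}_p^{1/2}\le 2\sqrt{n\rho}\sqrt{2R}$. That is, $n<8\rho R/\delta^2$, which is what your quantity $\norm{\Phi(c)}\max\norm{\Phi(y)}$ was aiming at.

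\textbf{The case $p>2$.} Here the proposal is not yet a proof, and the sketch has concrete defects.
\begin{itemize}
\item You cannot restrict to $|a|,|b|\le R+\rho$. For general $\mu$, a function with $\norm{x}_p\le\rho$ has unbounded values. The scalar identity must therefore hold on all of $\R^2$, with every piece positively homogeneous of degree $p$ so that it integrates against $\mu$; for instance, $\int x^jc^k\,d\mu$ with $j+k\ne p$ need not be finite.
\item For non-integer $p$, no nonzero polynomial is homogeneous of degree $p$, so the leftover cannot be ``finitely many polynomial terms.''
\item The features $(\cos\xi a,\sin\xi a)$ are not square-integrable against $\xi^{-1-p}\,d\xi$ near $0$. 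The Taylor subtraction has to happen inside each feature, and what remains are mixed terms $u^jf_p^{(j)}(t)$ with $f_p(t)=|t|^p$.
\end{itemize}
These mixed terms are exactly the coordinates $f_p^{(j)}(x)\in L_{p/(p-j)}(\mu)$ and $x^j\in L_{p/j}(\mu)$ of the paper's feature space. The summand $L_{p/(p-1)}$ is what forces type $q=p/(p-1)$.

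The paper's Taylor--Schoenberg lift does this cleanly. Subtract $\sum_{j\le m}\frac{(-u)^j}{j!}f_p^{(j)}(t)$ and its mirror image. For $p\notin 2\N$, the mixed derivative $\partial_t^m\partial_u^m$ of $(-1)^{m+1}$ times the remainder equals $a_{p,m}(|t|^s+|u|^s-|t-u|^s)=2a_{p,m}\iprod{\psi_s(t)}{\psi_s(u)}$, where $s=p-2m\in(0,2)$. Integrating back, using the vanishing boundary derivatives, yields $\phi_p$ with $\norm{\phi_p(t)}\le 2^{p/2}|t|^{p/2}$.

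The constant $25\sqrt p$ is asserted rather than derived. It comes from two estimates:
\begin{itemize}
\item $T_q(F)\le 2\sqrt{3p}$: interpolation gives $T_q(L_r)\le r^{1/p}<2$ for $2\le r\le p$, followed by H\"older over the $2m+3\le 3p$ summands of a weighted Hilbertian sum.
\item $M_\rho N_R\le\sum_k a_kb_k\le 3(R+\rho)^p$: this uses weights $A_k^2=a_k/b_k$ and $\sum_{j\le m}\binom pj u^{p-j}v^j\le(u+v)^p$.
\end{itemize}
Your fallback of pairing $|x-c|^{p-1}\operatorname{sign}(x-c)$ with $x-c$ is not a linearization, because both factors depend on $x$ and $c$ jointly. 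It therefore cannot feed the signed-sum step.
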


These bounds are valid in particular in the space $\ell_p$. Since the Hamming cube $\{0,1\}^N$ with the Hamming distance is
a metric subspace of $\ell_1^N$, it immediately yields \cite[Theorem~22]{BCT}, according to which Hamming-trigraphs with margin $\delta$ (called `sensitivity' in \cite{BCT})  have VC-dimension $O(\delta^{-2})$.

%Observe that the theorem gives the same dependence on the parameters $R, \rho$ and $\delta$ 
%for $L_1(\mu)$ (and in particular, for $\ell_1$) as for Euclidean (or Hilbert) spaces. 

We believe that \Href{Theorem}{thm:lp-bound-summary} can be extended further to cover more general Banach spaces, possibly phrased in terms of the cotype of the space.

We obtain \emph{lower bounds} as well that match the upper bounds in terms of the margin parameter $\delta$.
\begin{thm}[Sharpness of the VC dimension upper bounds for expanded balls in \texorpdfstring{$L_p$}{Lp}]
\label{thm:lower}
Let \(1 \le p < \infty\) and put \(M_p = \max\braces{2,p}\).  There are
constants \(c_p,\delta_p>0\), depending only on \(p\), such that, for every
\(0<\delta\le\delta_p\), and every infinite-dimensional $L_p(\mu)$,
\[
       \VCdim \parenth{\BB_{\delta,2,2} \!\parenth{L_p(\mu)}}
        \ge
        c_p\delta^{-M_p}.
\]
\end{thm}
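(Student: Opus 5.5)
We construct explicit shattered configurations in $\ell_p$, which embeds isometrically into every infinite-dimensional $L_p(\mu)$. The construction has two regimes: the Hilbertian exponent $2$ for $p\le 2$, and the exponent $p$ for $p>2$. In both cases the goal is a set $Y=\{y_1,\dots,y_n\}\subset 2\ball{}$ with $n\asymp \delta^{-M_p}$ such that every $Z\subseteq Y$ is cut cleanly. Concretely, we want a center $c_Z\in 2\ball{}$ with $\norm{y_i-c_Z}\le 1$ for $y_i\in Z$ and $\norm{y_i-c_Z}>1+\delta$ for $y_i\notin Z$.

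\emph{Case $p>2$.} Take $y_i=a e_i$, where $(e_i)$ is the unit vector basis of $\ell_p^n$, and $c_Z=b\,\mathbf 1_Z+s e_0$ with an auxiliary coordinate $e_0$. Then
\[
\norm{y_i-c_Z}_p^p=
\begin{cases}
|a-b|^p+(|Z|-1)b^p+s^p, & y_i\in Z,\\
a^p+|Z|\,b^p+s^p, & y_i\notin Z.
\end{cases}
\]
The gap between the two cases is $a^p+b^p-|a-b|^p$. Choosing $b=a/2$ gives $a^p(1+2^{-p}-2^{-p})=a^p$. We tune $s=s(|Z|)$ so that the in-$Z$ distance is exactly $1$, which requires $|Z|b^p\le 1$. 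Taking $b^p\approx 1/(2n)$ keeps $\norm{c_Z}\le 2$. The out-of-$Z$ $p$-th power is then $1+a^p\approx 1+2^p/(2n)$, so the distance exceeds $1+\delta$ once $a^p\gtrsim p\delta$. This yields only $n\asymp 1/\delta$, which is too weak. The fix is to replace $e_i$ by signed vectors: take $y_i$ to be rows of a $\pm1$ (Hadamard/Rademacher) matrix scaled in $\ell_p^N$, and take $c_Z$ proportional to $\sum_{i\in Z}y_i-\sum_{i\notin Z}y_i$. For $p>2$, Clarkson/Hanner-type estimates and the failure of type better than $2$ together with cotype $p$ should give separation of order $n^{-1/p}$ after normalization. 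That is $\delta\asymp n^{-1/p}$, hence $n\asymp\delta^{-p}$.

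\emph{Case $p\le 2$.} We use that $\ell_2$ embeds isometrically into $L_p$ via Gaussian (or $p$-stable) variables, $x\mapsto \sum x_i g_i$ normalized, so $L_p\supseteq\ell_2$ up to a scalar. The Euclidean lower bound for expanded balls, $\Omega(\delta^{-2})$ (as in \cite{BCT}), then transfers directly. The Euclidean construction itself is $y_i=e_i/\sqrt2$ with center $c_Z=t\,\mathbf 1_Z/\sqrt{|Z|}$-type, or more cleanly $c_Z=\lambda\sum_{i\in Z}e_i-\lambda\sum_{i\notin Z}e_i$ with $\lambda\asymp n^{-1/2}$. Here $\norm{y_i-c_Z}^2=\norm{y_i}^2+\norm{c_Z}^2\mp 2\lambda/\sqrt2$, a gap of order $n^{-1/2}$, which gives $n\asymp\delta^{-2}$. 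All points and centers have norm at most $2$ once constants are fixed, since $\norm{c_Z}^2=\lambda^2n=O(1)$. In general we choose $\norm{c_Z}^2$ so that the in-$Z$ distance squared equals $1$, that is, $\tfrac12+\lambda^2 n-\sqrt2\lambda=1$, which is solvable with $\lambda\asymp n^{-1/2}$ by adding an extra orthogonal coordinate to $c_Z$.

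\emph{Obstacle.} The main difficulty is the $p>2$ construction. A naive signed-center computation gives a gap of order $\lambda^{p-1}$ or $n^{-1/2}$ from the leading term, so the exponent $p$ must come from the geometry of $\ell_p^N$ itself. My first attempt is the disjointly supported blocks construction. Let $y_i=e_i$ scaled by $a$, and use a center $c_Z=b(\mathbf 1_Z-\mathbf 1_{Z^c})$ with $b\asymp n^{-1/p}$. The difference between $|a-b|^p$ and $|a+b|^p$ is $\approx 2pa^{p-1}b$, which gives $\delta\asymp p\,b\asymp n^{-1/p}$, so $n\asymp\delta^{-p}$, and in particular $c_p\approx p^{-p}$. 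We expand the $p$-th power of the distance by Taylor's formula, keeping all norms at most $2$ by the choice $a=1$ and $\norm{c_Z}_p^p=nb^p=O(1)$. We then adjust with a dummy coordinate so that in-$Z$ distances equal exactly $1$. Finally, small $\delta\le\delta_p$ ensures that the Taylor error terms are dominated.
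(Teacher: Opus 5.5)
Your final configuration for $p>2$ (points on the coordinate axes, centers $\pm b$ on every coordinate) is the paper's, and your idea for $p<2$ (use a Euclidean subspace) matches the paper's Dvoretzky step. However, two steps fail as you wrote them.

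\emph{The normalization for $p>2$.} With $a=1$, a point $i\in Z$ has $\|y_i-c_Z\|_p^p=(1-b)^p+(n-1)b^p\approx 1-pb+(n-1)b^p$. In the regime you need, $(n-1)b^p$ of constant order, this is strictly bigger than $1$. A dummy coordinate only adds a nonnegative term $s^p$, so it cannot pull the black points back into the unit ball. This is not a Taylor error that small $\delta$ absorbs. If you keep $a=1$, blackness forces $(n-1)b^p\le 1-(1-b)^p\le pb$, i.e.\ $n\le 1+p\,b^{1-p}$. Since the margin forces $b\ge c_p\delta$, you only get $n=O_p(\delta^{-(p-1)})$.

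The points must be shrunk. As in the paper, choose $(n-1)b^p\in[\frac12,\frac34]$, put $r:=(1-(n-1)b^p)^{1/p}\ge 4^{-1/p}$, and let $y_i=(b+r)e_i$. Then black distances are exactly $1$, while for $i\notin Z$
\[
\|y_i-c_Z\|_p^p=(r+2b)^p+(n-1)b^p\ge 1+2pb\,r^{p-1}\ge 1+2p\,4^{-(p-1)/p}\,b .
\]
This exceeds $(1+\delta)^p\le 1+p2^{p-1}\delta$ once $b=K_p\delta$ with $K_p$ large. Note that the comparison is with $(1+\delta)^p-1\approx p\delta$, not with $\delta$, so your $\delta\asymp pb$ is off. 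The result is $n\asymp_p\delta^{-p}$, with $\|y_i\|_p\le 2$ and $\|c_Z\|_p^p=nb^p\le 1$. Your dummy coordinate also works once the points leave room below $1$, e.g.\ $y_i=2^{-1/p}e_i$ with $(n-1)b^p\le\frac14$. The Hadamard/Clarkson/cotype detour is unverified and unnecessary.

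\emph{The transfer for $p<2$.} The Gaussian map $x\mapsto\sum x_ig_i$ embeds $\ell_2$ isometrically into $L_p(\mu)$ only when $\mu$ has a non-atomic part. (Also, $p$-stable variables embed $\ell_p$, not $\ell_2$.) The theorem covers $\ell_p$ itself. For $1\le p<2$, not even $\ell_2^2$ is isometric to a subspace of $\ell_p$: since $p$ is not an even integer, the even measure $\nu$ with $\|x\|_2^p=\int|\langle x,\theta\rangle|^p\,d\nu(\theta)$ is unique and uniform, whereas an isometry into $\ell_p$ would produce an atomic one. So in $\ell_p$ you only have $(1+\delta)$-isomorphic copies of $\ell_2^m$, via Dvoretzky or a discretized Gaussian embedding into $\ell_p^N$. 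Then ``transfers directly'' is false, because distortion can destroy the clean cut.

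The missing step is the paper's robustness argument:
\begin{itemize}
\item build the Euclidean configuration with margin $4\delta$;
\item take $T$ with $\|z\|_2\le\|Tz\|\le(1+\delta)\|z\|_2$;
\item use the points $Tx_i/(1+\delta)$ and centers $Tc_A/(1+\delta)$.
\end{itemize}
Black distances then stay at most $1$, and white ones exceed $(1+4\delta)/(1+\delta)>1+\delta$. Your explicit Euclidean construction ($y_i=e_i/\sqrt2$, centers $\pm\lambda$ on the coordinates, plus a dummy coordinate) is fine.
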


Following \cite{BCT}, we deduce as an \emph{application} of the VC dimension upper bounds a \emph{Dense Neighborhood Lemma} (DNL) with explicit, dimension-free covering sizes. In the setting of DNL, a finite set of points 
\(V\) is given with the property that for any point of $V$, at least a \(\beta\)-fraction of all points of
\(V\) lie within distance \(1\).  The goal is to cover \(V\) with few balls of radius slightly above 1.

\begin{thm}[DNL in \texorpdfstring{$L_p(\mu)$}{Lp(mu)}]\label{thm:dnl-lp}
Let \(1 \le p < \infty\).  Let
\(\rho,\beta,\tau>0\), with \(0<\beta\le1\), and let
\(V\subseteq \rho\ball{}_{L_p\parenth{\mu}}\) be finite with \(\card{V}=n\).  Assume that
\begin{equation}
\label{eq:dnl-density}
        \enorm{V\cap \left(\ball{}_{L_p\parenth{\mu}}+v\right)}
        \ge
        \beta n
        \quad \text{for all} \quad v\in V .
\end{equation}
Then there is a set \(V_0\subseteq V\) of size
\[
        \card{V_0}
        = 
        \begin{cases}
        O\!\parenth{ \parenth{1+\frac{\rho^2}{\tau^{2}}}
        \frac{1}{\beta}
        \log \frac{e}{\beta}}
        \quad \text{if} \quad 1 \le p \le 2, \\
                O_p\!\parenth{
        \parenth{1+\frac{\rho^{p^2}}{\tau^{p}}}
        \frac{1}{\beta}
        \log \frac{e}{\beta}}
        \quad \text{if} \quad p > 2
        \end{cases}
\]
 such that
\[
        V
        \subseteq
        \bigcup\limits_{x\in V_0}
        \left((1 + \tau)\ball{}_{L_p\parenth{\mu}}+x\right).
\]
\end{thm}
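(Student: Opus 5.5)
We follow the Bourneuf--Charbit--Thomass\'e strategy: combine the VC dimension bound of \Href{Theorem}{thm:lp-bound-summary} with an $\varepsilon$-net theorem for partial concept classes, followed by a short covering argument. Consider the PCC on the finite ground set $V$ whose concepts are indexed by centers $v\in V$. The black part is $V\cap(\ball{}_{L_p(\mu)}+v)$, the grey part is $V\cap\bigl((1+\tau)\ball{}_{L_p(\mu)}+v\bigr)\setminus(\ball{}_{L_p(\mu)}+v)$, and the white part is the rest. Since $V\subseteq\rho\ball{}$ and we take centers $v\in V\subseteq\rho\ball{}$, this is the restriction of $\BB_{\tau,\rho,\rho}(L_p(\mu))$ to $V$. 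It has the same structure as expanded balls with margin $\delta=\tau$ and $R=\rho$. Restricting the ground set and the family of concepts cannot increase the VC dimension. Hence, by \Href{Theorem}{thm:lp-bound-summary}, its VC dimension $d$ satisfies
\[
d=O\bigl(1+\rho^2/\tau^2\bigr)\ \text{for } 1\le p\le2,
\qquad
d=O_p\bigl(1+\rho^{p^2}/\tau^p\bigr)\ \text{for } p>2,
\]
using $(R+\rho)^{2}=4\rho^2$ and $(R+\rho)^{p\cdot p}=(2\rho)^{p^2}$. Since the first term of the DNL bound reads $(1+\rho^2/\tau^2)$, we may assume without loss that $d\ge1$.

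Next, apply a weak $\varepsilon$-net (or $\varepsilon$-net) theorem to the uniform measure on $V$, with $\varepsilon=\beta$. We want a set $V_0\subseteq V$ of size $O(\frac{d}{\beta}\log\frac{e}{\beta})$ that hits every black set of measure at least $\beta$. By \eqref{eq:dnl-density}, each black set $B_v=V\cap(\ball{}+v)$ has at least $\beta n$ points. The point to verify is that the Haussler--Welzl sampling argument goes through for PCCs. We would follow the double-sampling proof. Draw $S$ of size $m$ and then a second sample $T$. Suppose some black set $B_v$ is missed by $S$ but contains many points of $T$. The symmetrization counts traces on $S\cup T$, but for partial concepts the traces are not sets. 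The natural fix, following \cite{BCT} and \cite{AHHM}, is a disambiguation or sample-compression argument: on any finite sample of size $k$, the number of ``clean'' patterns that matter is bounded via Sauer--Shelah applied to a suitably chosen total class extending the partial one. Alternatively, use the known fact that PCCs of VC dimension $d$ admit $\varepsilon$-nets of size $O(\frac d\varepsilon\log\frac1\varepsilon)$ for the black parts. We would quote this from \cite{BCT}, which we expect to be the main technical ingredient; everything else is bookkeeping. Note that an $\varepsilon$-net here need not avoid grey points: it just has to contain a point of each large black set.

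Finally, the covering step. Let $V_0$ be such a net and take any $u\in V$. By \eqref{eq:dnl-density}, the black part of the concept centered at $u$, namely $V\cap(\ball{}+u)$, has at least $\beta n$ points, so it contains some $x\in V_0$. Then $\norm{x-u}\le1\le1+\tau$, i.e.\ $u\in(1+\tau)\ball{}+x$. Note that this step actually gives radius $1$. The margin $\tau$ is used only to make the VC dimension finite, since the underlying total class of balls can have unbounded VC dimension in $L_p$. This gives the covering with $\card{V_0}=O(\frac d\beta\log\frac e\beta)$, and substituting the two bounds on $d$ yields both cases of the statement.

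The main obstacle is the $\varepsilon$-net theorem for PCCs. Here disambiguation can blow up the VC dimension, so one cannot simply pass to a total class. The double-sampling argument must be run directly, using only shattering of grey-free sets. If that fails, we would fall back on the argument of \cite{BCT} for the Euclidean DNL, which is metric-independent once the VC bound is available.
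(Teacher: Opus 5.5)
\textbf{There is a genuine gap: the net theorem you plan to quote is false, and the covering step depends on it.}

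Your outline is the paper's: restrict $\BB_{\tau,\rho,\rho}(L_p(\mu))$ to $V$, bound its VC dimension via Theorem~\ref{thm:lp-bound-summary}, apply a net theorem for PCCs, then cover. The VC bookkeeping is fine.

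The problem is the claim that a PCC of VC dimension $d$ admits a set of size $O(\frac{d}{\beta}\log\frac{e}{\beta})$ meeting \emph{every black part} of size at least $\beta\card{V}$. The VC dimension of a PCC only controls grey-free patterns, so it says nothing about black parts whose complements are mostly grey.

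\emph{Abstract counterexample.} Suppose every concept has empty white part. Then no nonempty $Y$ is shattered, since the pattern $Z=\varnothing$ would force $Y\subseteq W$. So $d=0$. Yet the black parts can be all subsets of $V$ of size $\lceil\beta n\rceil$, and meeting all of them needs $n-\lceil\beta n\rceil+1$ points.

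\emph{Geometric counterexample.} Your remark that the covering ``actually gives radius $1$'', with $\tau$ used only to make the VC dimension finite, is false too. Take a graph on $[n]$ and $0<\eta<1/(2n)$. Let $A'_{ij}=1$ on edges, $A'_{ij}=-1$ on non-edges, and $A'_{ii}=0$. Then $\frac12 I+\eta A'$ is positive definite, hence the Gram matrix of points $x_i\in\ell_2^n$ with $\norm{x_i}^2=\frac12$ and $\norm{x_i-x_j}^2=1-2\eta A'_{ij}$. So the unit ball around $x_i$ cuts out exactly the closed neighbourhood of $i$. For a random graph $G(n,1/2)$ the density hypothesis holds with $\beta=1/3$. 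A radius-$1$ cover by points of $V$ is a dominating set, which needs order $\log n$ points, not $O_{\beta,\tau,\rho}(1)$. Meanwhile all distances are below $1+\tau$ once $n$ is large, so one point covers $V$ at radius $1+\tau$, consistent with the theorem.

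\textbf{The fix.} What is true, and what the paper uses, is Proposition~\ref{prp:hw} (\cite[Theorem~8]{BCT}). It gives $V_0\subseteq V$ with $\card{V_0}\le C\frac{d+1}{\beta}\log\frac{e}{\beta}$ such that $V_0\cap(B\cup G)\neq\varnothing$ for every concept with $\card{B}\ge\beta\card{V}$. Your remark that the net ``need not avoid grey points'' has it backwards: the net is only guaranteed to meet $B\cup G$, never specifically $B$. By the examples above, no argument, sampling or otherwise, can give more. Applied to the concept centred at $u$, this yields $x\in V_0$ with $\norm{x-u}\le 1+\tau$, which is exactly the claimed covering. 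So the margin is used twice: for the VC bound and in the covering step. With this replacement your argument coincides with the paper's proof.
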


\subsection*{Ideas and techniques}
Two fundamental ideas underlie our VC dimension upper bounds. The first is the use of \emph{signed
sums}: a no-dimensional Radon/Carath\'eodory theorem \cite{pisier1980remarques, 
ivanov2021approximate} shows that a large finite set of vectors in
the unit ball admits a balanced $\pm1$ combination of small norm. 

The second ingredient is \emph{linearization}: we express the metric as
\[
        \norm{x-c}^{\theta}
        =
        \iprod{\Phi\parenth{x}}{\Psi\parenth{c}}+\text{a term depending only on }c,
\]
where \(\Phi\) takes values in a feature space of non-trivial Rademacher type. This turns a
nonlinear ball-separation problem into a linear discrepancy estimate in the
feature space.

Although we work throughout with possibly infinite-dimensional Banach spaces, the
paper is self-contained and all necessary background in functional analysis 
(Banach space, Rademacher type, negative-type embedding, etc.) is 
explained. One may replace
``Banach space'' by ``finite-dimensional real normed space'' throughout and lose
nothing essential; the point of the results is precisely that the bounds do not
depend on that dimension.

\subsection*{Structure of the paper}
First, in \Href{Section}{sec:combandbanachintro}, we define PCCs and their VC 
dimension (that is, we present the combinatorial fundamentals), and then 
introduce Banach spaces and Rademacher type (our essentials from analysis). 
In \Href{Section}{sec:halfspaces} Rademacher type is shown to 
provide the needed upper bound for the VC dimension of the PCC of expanded half-spaces via discrepancy. 

The PCC of expanded balls in a Banach space is formally introduced in \Href{Section}{sec:ballsetup}, 
and the ``axis around which the paper revolves," \Href{Theorem}{thm:master} is 
stated and proved. This theorem formalizes the second of the two ideas outlined 
above: linearization (embedding).

\Href{Section}{sec:modelproof} is a detour of purely instructional value: we prove our 
VC dimension upper bounds for expanded balls in Euclidean space without the use 
of \Href{Theorem}{thm:master}, but using its framework and demonstrating 
linearization in its simplest form.

The notion of spaces of negative type is presented in 
\Href{Section}{sec:negativetypeintro} and, with the help of Schoenberg's 
embedding theorem (\Href{Proposition}{prp:ww-hilbert-embedding}), an application of 
\Href{Theorem}{thm:master} yields the desired VC dimension upper bound in $L_p$ spaces with 
$p\in[1,2]$. Note that in the special case $p=2$, we obtain a better bound in terms of the margin parameter $\delta$ than in \Href{Section}{sec:modelproof}, which is a result of taking a more sophisticated embedding that comes 
from considering the square root of the Hilbert/Euclidean distance. 

Closing our proof of \Href{Theorem}{thm:lp-bound-summary}, in 
\Href{Section}{sec:pge2}, we cover $L_p$ spaces with 
$p>2$, again applying \Href{Theorem}{thm:master}, and this time using a 
sophisticated embedding (which we call the \emph{Taylor-Schoenberg lift}) which is quite technical but, in our opinion, 
may be used in future investigations of related problems.

In the opposite direction, in 
\Href{Section}{sec:lower} we present our VC dimension lower bounds in $L_p$ 
spaces (all $1\leq p <\infty$), that is, we prove \Href{Theorem}{thm:lower}. 
Finally, in \Href{Section}{sec:dnl} we prove \Href{Theorem}{thm:dnl-lp}, and thus extend 
the Dense Neighborhood Lemma of \cite{BCT} from Euclidean spaces to all spaces 
that we covered, including all $L_p$ spaces. 

\section{Notation and preliminaries}\label{sec:combandbanachintro}

\subsection{Combinatorial basics: PCCs and their VC dimension}\label{sec:combintro}

We start with the two pieces of combinatorial notation used throughout the paper.
For a positive integer \(n\), we write
\(
        [n] := \braces{1,\ldots,n}.
\)
For \(n \ge 1\), a \emph{sign vector}
\(\varepsilon \in \braces{-1,1}^{2n}\) is called \emph{balanced} if 
\[
        \sum_{i \in [2n]} \varepsilon_i = 0,
\]
that is, exactly half of the coordinates are positive.
 We denote the set of balanced
sign vectors of length \(2n\) by
\[
        \Sigma_{2n}
        :=
        \braces{\varepsilon \in \braces{-1,1}^{2n}
        \st
        \sum_{i \in [2n]} \varepsilon_i = 0}.
\]
For \(\varepsilon \in \Sigma_{2n}\), we use the notation
\[
        I_{+} \parenth{\varepsilon}
        := \braces{i \in [2n] \st \varepsilon_i = 1},
        \qquad
        I_{-}\parenth{\varepsilon}
        := \braces{i \in [2n] \st \varepsilon_i = -1}.
\]

\begin{dfn}[PCC]
A \emph{partial concept class (PCC)} is a pair $\HH=(V,\mathcal E)$, where $V$ is the ground set and every \emph{concept} $e\in\mathcal E$ is a partition of $V$ into three sets:
\[
        e=(B_e, G_e, W_e),\qquad V=B_e\sqcup G_e\sqcup W_e.
\]
The sets $B_e, G_e, W_e$ will be called the black, grey, and white parts of $e$.  Usually $W_e$ is understood as $V\setminus (B_e\cup G_e)$ once $B_e$ and $G_e$ have been specified.
\end{dfn}

\begin{dfn}[VC dimension of a PCC]\label{dfn:VCofPCC}
Let $\HH=(V,\mathcal E)$ be a PCC and let $Y\subset V$ be finite.
A subset $Y$ of $V$ is \emph{shattered} by $\HH$ if for every $S\subseteq Y$, there is a concept $e=(B_e, G_e, W_e)\in \mathcal{E}$ with $Y\cap B_e= S$ and $Y\cap W_e= Y\setminus S$. The \emph{VC dimension} of $\HH$ is the largest cardinality of a finite set shattered by $\HH$.
\end{dfn}

Note that when $G_e=\emptyset$ for all concepts, this is exactly the classical VC dimension of the set system $\{B_e:e\in\mathcal E\}$.

\subsection{Banach spaces and  Rademacher type}\label{sec:banachintro}

For us, a \emph{Banach space} is a complete real normed  vector space. If the vector space is finite-dimensional, then any norm makes it a Banach space. We phrase our results in terms of Banach spaces, but if all appearances of the term are replaced by ``finite-dimensional real normed space," then one obtains perfectly valid (slightly weaker) statements.

A \emph{Hilbert space} is a Banach space in which the norm is induced by an inner product. A finite dimensional Hilbert space is a \emph{Euclidean space}.

In this section, we introduce the Rademacher type of a Banach space. As we will see later, this property yields results similar to Radon's theorem, which in turn provides VC dimension upper bounds for half-spaces.

The \emph{dual} of a Banach space $X$  is denoted by $X^*$.  The closed unit ball  is
\[
        \ball{}_{X}:=\{x\in X:\norm{x}\le 1\}.
\]
% For a bounded set $A\subset X$, its \emph{diameter} is
% \(\diam A:=\sup\{\norm{x-y}:x,y\in A\}\).

% For a finite non-empty set $A\subset X$, its \emph{centroid} (or, barycenter) is
% \[
%         \baryc{A}:=\frac{1}{\card{A}}\sum_{a\in A}a.
% \]

\begin{dfn}\label{dfn:type}
Let \(1 \le q \le 2\). A Banach space \(X\) has \emph{Rademacher type \(q\)} if
there is a constant \(T<\infty\) such that, for every finite sequence
\(x_1, \dots, x_m \in X\), 
\[
       {\EE_{\varepsilon}\norm{\sum_{i \in [m]}\varepsilon_i x_i}^q}
        \le T^q \ {\sum_{i \in [m]}\norm{x_i}^q},
\]
where \(\varepsilon_1, \dots, \varepsilon_m\) are independent Rademacher variables. 
The smallest admissible constant \(T\) is the
\emph{type-\(q\) constant} of \(X\), denoted by \(T_q(X)\). A Banach space has
\emph{non-trivial type} if it has Rademacher type \(q\) for some \(q>1\).
\end{dfn}

Every Banach space has type \(1\) with \(T_1=1\) by the triangle inequality, and every Hilbert space has
type \(2\) with \(T_2=1\).

Throughout the paper, for \(1 \le p < \infty\) and a measure space
\(\parenth{\Omega,\mu}\), the notation \(L_p\parenth{\mu}\) refers
to the real Banach space of equivalence classes of measurable functions
\(f \st \Omega \to \R \) with
\[
        \norm{f}_{p}
        :=
        \parenth{\int_{\Omega}\abs{f}^{p}\,\di\mu}^{\frac{1}{p}}
        <\infty .
\]
For two special cases, we write \(\ell_p\) for the space of real \(p\)-summable sequences, and
\(\ell_p^m\) for \(\R^m\) equipped with the \(p\)-norm.

If \(H\) is a Hilbert space, then \(L_2\parenth{\mu;H}\) denotes the usual
Bochner \(L_2\)-space of \(H\)-valued functions. For more background on these notions,
we refer to  \cite{rudin1991functional}.

\section{VC dimension of expanded half-spaces in a Banach space}\label{sec:halfspaces}

Before turning to expanded balls, we first discuss the simpler model case of
expanded half-spaces.  In this section, we formally define the corresponding
PCC and prove a bound for its VC dimension which is independent of the ambient
dimension.  The proof is short and is closely related to no-dimensional
analogues of basic theorems in combinatorial convexity \cite{adiprasito2020theorems, artstein2025b, Barabanshchikova2026}.  More importantly for
us, it isolates the only place where the type of the ambient space enters the
argument: one needs a balanced signed sum of the shattered points to be small.
This point of view makes the discrepancy framework introduced below a natural
next step.

\begin{dfn}[The PCC of expanded half-spaces in a Banach space]\label{dfn:banach-expanded-halfspaces}
Let $X$ be a real Banach space and let $\delta>0$.  We denote by $\HH_\delta(X)$ the PCC with ground set $\ball{}_{X}$ whose concepts are the partitions
\[
        \ball{}_{X}=B\sqcup G\sqcup W
\]
of the following form:
\[
        B=\{x\in \ball{}_{X}: \iprod{f}{x} \le \alpha\},
\]
\[
        G=\{x\in \ball{}_{X}: \alpha<
        \iprod{f}{x} \le \alpha+\delta'\},
        \qquad
        W=\{x\in \ball{}_{X}: \alpha+\delta'< \iprod{f}{x} \},
\]
where $f\in {X^*}$ is a unit functional, 
$\alpha \in \R$, and $\delta' \ge \delta$.
\end{dfn}

The main result of this section shows that in a Banach space of non-trivial type, the VC dimension of expanded half-spaces is  bounded in terms of the type constant and the margin parameter $\delta$.

\begin{thm}[Type $\Longrightarrow$ bounded VC dimension for half-spaces]\label{thm:banach-lemma-1-1}
Let $X$ be a Banach space of type $q > 1$.  Then, for every $\delta>0$,
\begin{equation}\label{eq:banach-vc-bound}
        \VCdim(\HH_\delta(X))
        \le
        2\parenth{\frac{2\, T_q(X)}{\delta}}^{\frac{q}{q-1}} + 2.
\end{equation}
In particular, if $X$ is a Hilbert space, then
\[
        \VCdim(\HH_\delta(X)) 
        \le 
        \frac{8}{\delta^2} + 2.
\]
\end{thm}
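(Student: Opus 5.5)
Suppose $Y=\{y_1,\dots,y_m\}\subseteq\ball{}_X$ is shattered by $\HH_\delta(X)$. First reduce to an even number of points: if $m$ is odd, drop one point (a subset of a shattered set is shattered), so we may assume $m=2n$ with $2n\ge m-1$. The plan is to find a balanced sign vector $\varepsilon\in\Sigma_{2n}$ such that the signed sum $\sum_i\varepsilon_i y_i$ has small norm. We then play this against the concept that cuts out $Z=\{y_i: i\in I_+(\varepsilon)\}$.

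\textbf{Step 1 (margin gives a large signed sum).} Let $(B,G,W)$ realize $Z$, with unit functional $f$, threshold $\alpha$ and width $\delta'\ge\delta$. Then $\iprod{f}{y_i}\le\alpha$ for $i\in I_+$ and $\iprod{f}{y_i}>\alpha+\delta$ for $i\in I_-$. Since $\varepsilon$ is balanced, $\card{I_+}=\card{I_-}=n$, and summing gives
\[
\Bigl\|\sum_i\varepsilon_i y_i\Bigr\|\ \ge\ \iprod{f}{\textstyle\sum_{I_-}y_i-\sum_{I_+}y_i}\ >\ n\delta .
\]
Here balancedness is exactly what makes the threshold $\alpha$ cancel.

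\textbf{Step 2 (type gives a small balanced signed sum).} This is the main obstacle, since type controls the average over all sign vectors, not over balanced ones. To handle it, pair the points into $n$ pairs $(y_{2j-1},y_{2j})$ and set $z_j=y_{2j-1}-y_{2j}$, so that $\norm{z_j}\le 2$. Any choice of signs $\eta\in\{-1,1\}^n$ yields the balanced vector $\varepsilon=(\eta_1,-\eta_1,\dots,\eta_n,-\eta_n)$, whose signed sum is $\sum_j\eta_j z_j$. By Definition~\ref{dfn:type} and averaging, some $\eta$ satisfies
\[
\Bigl\|\sum_j\eta_jz_j\Bigr\|^q\le T_q^q\sum_j\norm{z_j}^q\le T_q^q\,n\,2^q ,
\]
that is, $\norm{\sum_j\eta_j z_j}\le 2T_q n^{1/q}$. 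This is the no-dimensional Radon step.

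\textbf{Step 3 (combine).} Steps 1 and 2 give $n\delta<2T_q n^{1/q}$, hence $n^{(q-1)/q}<2T_q/\delta$ and $n<(2T_q/\delta)^{q/(q-1)}$. Therefore
\[
m\le 2n+1\le 2\Bigl(\frac{2T_q(X)}{\delta}\Bigr)^{\frac{q}{q-1}}+2 .
\]
In the Hilbert case $q=2$ and $T_2=1$, so $q/(q-1)=2$ and the bound becomes $8/\delta^2+2$.

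The only delicate point is Step 2, ensuring the small signed sum is balanced; the pairing trick resolves it at the cost of the factor $2$ inside the parenthesis.
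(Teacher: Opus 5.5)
Your proof is correct and follows essentially the same route as the paper. You pair the points to get a balanced sign vector with signed sum of norm at most $2T_q(X)\,n^{1/q}$ (the paper's Lemma on balanced signed sums), then play it against the concept cutting out one half of the balanced split, so that the threshold cancels and the margin forces the signed sum to exceed $n\delta$. The remaining differences are cosmetic: you colour $I_{+}$ black instead of $I_{-}$, and you handle odd $m$ by dropping a point, where the degenerate case $m\le 2$ (with $n=0$) is trivially covered by the additive $+2$.
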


The following elementary signed-sum estimate is the only consequence of type
used in the proof. It may also be viewed as a simple special case of the
no-dimensional colorful Radon theorem in spaces of non-trivial type, see
\cite[Theorem~4]{ivanov2021no} for the statement, and \cite{ivanov2026optimality} for the optimality of the bound.

\begin{lem}[Balanced signed sums in spaces of type \(q\)]
\label{lem:type-signed-sum}
Let \(1 \le q \le 2\), and let \(X\) be a Banach space of Rademacher type
\(q\) with constant \(T_q\! \parenth{X}\).  Let
\(x_1, \dots, x_{2n} \in \ball{}_{X}\).  Then, there is a balanced sign vector
\(\varepsilon \in \Sigma_{2n}\) such that
\[
        \norm{\sum_{i \in [2n]} \varepsilon_i x_i}
        \le
        2 T_q\! \parenth{X} n^{1/q}.
\]
\end{lem}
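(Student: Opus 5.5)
Our plan is to reduce the balanced problem to an unbalanced Rademacher average via a pairing trick. Pair the points as \((x_1,x_2),(x_3,x_4),\dots,(x_{2n-1},x_{2n})\), and set
\[
        y_j := x_{2j-1}-x_{2j}, \qquad j\in[n].
\]
Then \(\norm{y_j}\le 2\) by the triangle inequality. For any sign vector \(\eta\in\braces{-1,1}^n\), define \(\varepsilon\in\braces{-1,1}^{2n}\) by \(\varepsilon_{2j-1}=\eta_j\) and \(\varepsilon_{2j}=-\eta_j\). Each pair contributes one \(+1\) and one \(-1\), so \(\varepsilon\in\Sigma_{2n}\), and
\[
        \sum_{i\in[2n]}\varepsilon_i x_i=\sum_{j\in[n]}\eta_j y_j .
\]
It therefore suffices to find \(\eta\) with \(\norm{\sum_j \eta_j y_j}\le 2T_q(X)n^{1/q}\).

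For this we use the type inequality of \Href{Definition}{dfn:type} and then an averaging argument. With independent Rademacher signs \(\eta_j\),
\[
        \EE_\eta\norm{\sum_{j\in[n]}\eta_j y_j}^q
        \le
        T_q(X)^q\sum_{j\in[n]}\norm{y_j}^q
        \le
        T_q(X)^q\, n\, 2^q .
\]
Some \(\eta\) in the finite probability space attains at most the expectation. For that \(\eta\),
\[
        \norm{\sum_j\eta_j y_j}\le 2T_q(X)n^{1/q},
\]
which is exactly the claimed bound.

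There is no real obstacle here. The only point to get right is that the pairing produces balanced signs automatically, so we never need to condition the Rademacher average on balancedness (which would cost constants). It is also worth checking the constant: the factor \(2\) comes solely from \(\norm{y_j}\le 2\). Finally, for \(q=1\) the statement holds trivially by the triangle inequality, and the argument above covers that case as well.
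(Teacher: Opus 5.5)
Your proof is correct and is the paper's argument: the same pairing $\varepsilon_{2k-1}=\sigma_k$, $\varepsilon_{2k}=-\sigma_k$ to force balance, the type-$q$ inequality applied to differences of norm at most $2$, and an averaging step. The only cosmetic difference is that the paper first passes to $\EE\norm{\cdot}$ via H\"older before picking a realization, whereas you pick a realization directly below $\EE\norm{\cdot}^q$; this is equally valid.
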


\begin{proof}
Group the points into the pairs
\(\parenth{x_{2k-1},x_{2k}}\), \(k \in [n]\).  Let
\(\sigma_1,\ldots,\sigma_n\) be independent Rademacher signs and put
\(
        \varepsilon_{2k-1} = \sigma_k
\)
and
\(
        \varepsilon_{2k} = -\sigma_k
\) 
for 
\(
k \in [n].
\)
Every sign vector obtained in this way is balanced.  Moreover,
\[
        \sum_{i \in [2n]} \varepsilon_i x_i
        =
        \sum_{k \in [n]} \sigma_k \parenth{x_{2k-1} - x_{2k}}.
\]
By the type \(q\) inequality,
\[
        \parenth{
        \EE \norm{
        \sum_{i \in [2n]} \varepsilon_i x_i}^{q}}^{1/q}
        \le
        T_q \! \parenth{X}
        \parenth{
        \sum_{k \in [n]}
        \norm{x_{2k-1} - x_{2k}}^{q}}^{1/q}.
\]
Since \(x_i \in \ball{}_{X}\),
\(
        \norm{x_{2k-1} - x_{2k}} \le 2
\)
for all 
\(
{k \in [n]}.
\)
Therefore, by H\"older's inequality,
\[
      \EE \norm{
        \sum_{i \in [2n]} \varepsilon_i x_i} 
        \le
      \parenth{
        \EE \norm{
        \sum_{i \in [2n]} \varepsilon_i x_i}^{q}}^{1/q}
        \le
        2 T_q \! \parenth{X} n^{1/q}.
\]
Hence, some realization of \(\varepsilon\) satisfies the desired bound.
\end{proof}

We shall use balanced signs in exactly the same way for balls. The balance
condition cancels the constant part of the separator, while the type inequality
makes the signed sum small. For balls, the same argument will be applied not to
\(x_i\), but to suitable ``feature'' vectors \(\Phi\! \parenth{x_i}\).

\begin{proof}[\prof{Theorem}{thm:banach-lemma-1-1}]
It suffices to rule out shattered sets of even cardinality \(2n\) satisfying
\begin{equation}
\label{eq:half-space_assumption}
        2n \ge 2\parenth{\frac{2\, T_q(X)}{\delta}}^{
        \frac{q}{q - 1}}.
\end{equation}

Assume, for a contradiction, that
\[
        S = \braces{x_i \st i \in [2n]} \subseteq \ball{}_{X}
\]
is shattered by \(\HH_\delta \! \parenth{X}\) and
\(2n\) satisfies \eqref{eq:half-space_assumption}. By
\Href{Lemma}{lem:type-signed-sum}, there is \(\varepsilon \in \Sigma_{2n}\) such
that
\[
        \norm{\sum\limits_{i \in [2n]} \varepsilon_i x_i}
        \le
        2 T_q \! \parenth{X} n^{1/q}.
\]
 Since \(S\) is shattered, the subset
\(\braces{x_i \st i \in I_{-}(\varepsilon)}\) is the black trace of some concept. Thus, there
are a unit functional \(f  \in {X^*}\), \(\alpha \in \R\), and \(\delta' \ge \delta\) such that
\[
        \iprod{f}{x_i} \le \alpha
        \quad \text{for all} \quad i \in I_{-}(\varepsilon),
        \qquad
        \iprod{f}{x_i} > \alpha + \delta'
        \quad \text{for all} \quad i \in I_{+}(\varepsilon).
\]
The threshold \(\alpha\) cancels because the signs are balanced:
\[
        \iprod{f}{\sum\limits_{i \in [2n]} \varepsilon_i x_i}
        =
        \sum\limits_{i \in I_{+}(\varepsilon)} \iprod{f}{x_i}
        -
        \sum\limits_{i \in I_{-}(\varepsilon)} \iprod{f}{x_i}
        >
        n \parenth{\alpha + \delta} - n \alpha
        = n \delta.
\]
On the other hand,
\[
        \iprod{f}{\sum\limits_{i \in [2n]} \varepsilon_i x_i}
        \le
        \norm{\sum\limits_{i \in [2n]} \varepsilon_i x_i}
        \le
        2 T_q \! \parenth{X} n^{1/q}.
\]
Assumption \eqref{eq:half-space_assumption} implies
\(n \ge \parenth{2 T_q\! \parenth{X} / \delta}^{\frac{q}{q-1}}\), and therefore
\(2 T_q\! \parenth{X} n^{1/q} \le n \delta\). This contradicts the two preceding
inequalities. Hence no such \(2n\)-point set is shattered, and
\eqref{eq:banach-vc-bound} follows.

If \(X\) is a Hilbert space, then \(q = 2\) and \(T_q\! \parenth{X} = 1\), which
gives the stated bound.
\end{proof}

\section{VC dimension of expanded balls and discrepancy in a Banach space}\label{sec:ballsetup}

\begin{dfn}[The PCC of expanded balls in a Banach space]\label{dfn:expanded-balls}
Let \(X\) be a normed space, let \(\rho,R>0\), and let \(\delta>0\).  We denote by
\(
        \BB_{\delta,\rho,R}(X)
\)
the PCC with ground set \(\rho\ball{}_{X}\) whose concepts are indexed by centers \(c\in R\ball{}_{X}\) and are given by
\[
        \rho\ball{}_{X}=B_c\sqcup G_c\sqcup W_c,
\]
where the black, grey and white parts respectively are
\[
        B_c:=\{x\in \rho\ball{}_{X}:\norm{x-c}\le 1\},
\]
\[
        G_c:=\{x\in \rho\ball{}_{X}:1<\norm{x-c}\le 1+\delta\},
        \qquad
        W_c:=\{x\in \rho\ball{}_{X}:\norm{x-c}>1+\delta\}.
\]
\end{dfn}

\begin{dfn}[Distance-power discrepancy]\label{dfn:power-discrepancy}
% Let \(X\) be a normed space, let \(\theta, \rho, R>0\), and \(n\) be a positive integer.  For a set of \(2n\) of points \(x_1,\ldots,x_{2n}\in \rho\ball{}_{X}\), define their \emph{\(\theta\)-power discrepancy} with these parameters by
% \[
%         \operatorname{disc}_{\theta,R}(x_1,\ldots,x_{2n})
%         :=
%         \inf_{\substack{\varepsilon_i\in\{-1,1\}\\ \sum_{i=1}^{2n}\varepsilon_i=0}}
%         \sup_{c\in R\ball{}_{X}}
%         \left|
%         \frac{1}{2n}\sum_{i=1}^{2n} \varepsilon_i\norm{x_i-c}^{\theta}
%         \right|.
% \]
Let \(X\) be a normed space, let \(\theta, \rho, R > 0\), and let \(n \ge 1\).
For \(x_1,\ldots,x_{2n} \in \rho \ball{}_{X}\), define
\[
        \operatorname{disc}_{\theta,R} 
        \parenth{x_1,\dots,x_{2n}}
        :=
        \inf_{\varepsilon \in \Sigma_{2n}}
        \sup_{c \in R \ball{}_{X}}
        \left|
        \frac{1}{2n}
        \sum\limits_{i \in [2n]}
        \varepsilon_i \norm{x_i-c}^{\theta}
        \right|.
\]

The \emph{\(2n\)-point \(\theta\)-power discrepancy} is
\[
        \Delta_\theta \parenth{X;\rho,R,2n}
        :=
        \sup_{x_1,\ldots,x_{2n} \in \rho \ball{}_{X}}
        \operatorname{disc}_{\theta,R} 
        \parenth{x_1,\dots,x_{2n}}.
\]
\end{dfn}

The following simple result shows that it is sufficient to bound the \(\theta\)-power discrepancy in order to bound the VC dimension.

\begin{lem}[Low discrepancy $\Rightarrow$ low VC]\label{lem:disc2vc}
Let $X$ be a normed space, $\rho,R,\delta,\theta>0$, and $n\ge1$. If
\begin{equation}\label{eq:disc-hyp}
  \Delta_\theta(X;\rho,R,2n)<\frac{(1+\delta)^\theta-1}{2},
\end{equation}
then $\VCdim(\BB_{\delta,\rho,R}(X))<2n$.
\end{lem}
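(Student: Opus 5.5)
We argue by contradiction, mirroring the proof of \Href{Theorem}{thm:banach-lemma-1-1}. Suppose some set of size at least $2n$ is shattered. Since any subset of a shattered set is again shattered, we may take $2n$ of its points, $x_1,\dots,x_{2n}\in\rho\ball{}_X$, forming a shattered set.

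By \eqref{eq:disc-hyp}, $\operatorname{disc}_{\theta,R}(x_1,\dots,x_{2n})\le\Delta_\theta(X;\rho,R,2n)<\frac{(1+\delta)^\theta-1}{2}$. Because the infimum over $\Sigma_{2n}$ is strictly smaller than this value, there is a balanced sign vector $\varepsilon\in\Sigma_{2n}$ satisfying
\[
        \sup_{c\in R\ball{}_X}\left|\frac{1}{2n}\sum_{i\in[2n]}\varepsilon_i\norm{x_i-c}^\theta\right|<\frac{(1+\delta)^\theta-1}{2}.
\]

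We now use shattering on the subset $\{x_i: i\in I_-(\varepsilon)\}$. This yields a center $c\in R\ball{}_X$ such that
\[
        \norm{x_i-c}\le1 \quad\text{for } i\in I_-(\varepsilon),
        \qquad
        \norm{x_i-c}>1+\delta \quad\text{for } i\in I_+(\varepsilon).
\]
Since $t\mapsto t^\theta$ is increasing on $[0,\infty)$, the same inequalities hold after raising to the power $\theta$: $\norm{x_i-c}^\theta\le 1$ on $I_-$ and $\norm{x_i-c}^\theta>(1+\delta)^\theta$ on $I_+$. Each of $I_\pm$ has exactly $n$ elements, so
\[
        \sum_{i}\varepsilon_i\norm{x_i-c}^\theta> n(1+\delta)^\theta-n.
\]
Dividing by $2n$ gives a value strictly greater than $\frac{(1+\delta)^\theta-1}{2}$.

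This contradicts the choice of $\varepsilon$, since the supremum over $c$ dominates the value at this particular $c$. Hence no set of size $2n$ is shattered, and no larger set is shattered either, because it would contain a shattered $2n$-subset. Therefore $\VCdim(\BB_{\delta,\rho,R}(X))<2n$.

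The argument is essentially routine. The main points needing care are the following.
\begin{itemize}
\item The strict inequality in \eqref{eq:disc-hyp} is what lets us extract an actual $\varepsilon$ from the infimum.
\item We must use the monotonicity of $t\mapsto t^\theta$ to pass from distances to their $\theta$-th powers.
\item The balance condition is what makes the threshold constants $1$ and $(1+\delta)^\theta$ combine into the gap $n\bigl((1+\delta)^\theta-1\bigr)$.
\end{itemize}
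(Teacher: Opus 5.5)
Your proposal is correct and follows essentially the same route as the paper's proof. You pick a balanced $\varepsilon$ from the strict discrepancy bound, realize $I_-(\varepsilon)$ as the black trace of some center $c\in R\ball{}_X$, and use the balance to get a gap exceeding $\frac{(1+\delta)^\theta-1}{2}$. Your explicit remarks on the hereditary property of shattering and on extracting $\varepsilon$ from the strict infimum make precise two points the paper leaves implicit.
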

\begin{proof}%[\prof{Lemma}{lem:disc2vc}]
Assume, for a contradiction, that
\(S = \braces{x_i \st i \in [2n]} \subseteq \rho \ball{}_{X}\) is shattered by
\(\BB_{\delta,\rho,R}(X)\). By the discrepancy assumption, there is a balanced
sign vector \(\varepsilon \in \Sigma_{2n}\) such that
\[
        \sup_{c \in R \ball{}_{X}}
        \left|
        \frac{1}{2n}\sum_{i \in [2n]}
        \varepsilon_i \norm{x_i - c}^\theta
        \right|
        <
        \frac{\parenth{1 + \delta}^\theta - 1}{2}.
\]
 Since \(S\) is shattered, there is a center
\(c \in R \ball{}_{X}\) such that
\[
        \norm{x_i - c} \le 1
        \quad \text{for all} \quad
         {i \in I_{-}(\varepsilon)},
        \qquad
        \norm{x_i - c} > 1 + \delta
       \quad \text{for all} \quad
       {i \in I_{+}(\varepsilon)}.
\]
For this center,
\[
        \frac{1}{2n}\sum_{i \in [2n]}
        \varepsilon_i \norm{x_i - c}^\theta
        =
        \frac{1}{2n}\sum_{i \in I_{+}(\varepsilon)} \norm{x_i - c}^\theta
        -
        \frac{1}{2n}\sum_{i \in I_{-}(\varepsilon)} \norm{x_i - c}^\theta       \\
        >
        \frac{n}{2n}\parenth{1 + \delta}^\theta
        -
        \frac{n}{2n}
        =
        \frac{\parenth{1 + \delta}^\theta - 1}{2},
\]
which contradicts the choice of \(\varepsilon\).
\end{proof}

\begin{rem}\label{rem:margin-theta}
For $\theta\ge1$ convexity gives $(1+\delta)^\theta-1\ge\theta\delta\ge\delta$, so
\eqref{eq:disc-hyp} holds provided that $\Delta_\theta<\delta/2$.
\end{rem}

\subsection{The master theorem for expanded balls}\label{sec:master}
Our main goal is to bound the VC dimension of the PCC of expanded balls in
various Banach spaces. The following theorem isolates the common mechanism: a
linearization of the distance power into a space of non-trivial type gives a
balanced discrepancy estimate, and the discrepancy estimate forbids shattering.

\begin{thm}[Linearization into a type-$q$ space $\Rightarrow$ low discrepancy $\Rightarrow$ low VC]
\label{thm:master}
Let \(X\) be a normed space and let \(\theta > 0\). Suppose that there are
\begin{itemize}
  \item a Banach space \(F\) of Rademacher type \(q \in (1,2]\), with constant
  \(T_q\! \parenth{F}\), called the \emph{feature space};
  \item a map \(\Phi \colon X \to F\), called the \emph{feature map}, and a map
  \(\Psi \colon X \to F^*\), called the \emph{dual map};
\end{itemize}
such that, for every \(2n\)-tuple \(x_1,\dots,x_{2n} \in X\), every
\(\varepsilon \in \Sigma_{2n}\), and every \(c \in X\), one has the linearization
identity
\begin{equation}\label{eq:linearize}
        \frac{1}{2n}
        \sum\limits_{i \in [2n]}
        \varepsilon_i \norm{x_i-c}^{\theta}
        =
        \iprod{
        \frac{1}{2n}
        \sum\limits_{i \in [2n]} \varepsilon_i \Phi \! \parenth{x_i}}
        {\Psi (c)}_{F,F^*}.
\end{equation}
Put
\[
        M_{\rho}
        :=
        \sup_{x \in \rho \ball{}_{X}} \norm{\Phi\! \parenth{x}}_{F},
        \qquad
        N_{R}
        :=
        \sup_{c \in R \ball{}_{X}} 
       \norm{\Psi (c)}_{F^*}.
\]
Then, for every \(n \ge 1\),
\begin{equation}\label{eq:master-disc}
        \Delta_\theta \parenth{X; \rho, R, 2n}
        \le
         T_q\! \parenth{F} M_{\rho}\, N_{R} \,
         {n}^{-1 + \frac{1}{q}}.
\end{equation}
Moreover, if \(\theta \ge 1\), then
\begin{equation}\label{eq:master-vc}
        \VCdim \parenth{\BB_{\delta,\rho,R} \parenth{X}}
        \le
        2\parenth{
        \frac{2 T_q\! \parenth{F} M_{\rho}\, N_{R}}{\delta}}
        ^{\frac{q}{q-1}} + 2 .
\end{equation}
\end{thm}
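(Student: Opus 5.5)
The plan is to bound the discrepancy by choosing a single balanced sign vector that works uniformly in \(c\), and then to feed the resulting estimate into \Href{Lemma}{lem:disc2vc} together with \Href{Remark}{rem:margin-theta}.

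\textbf{Discrepancy bound.} Fix \(x_1,\dots,x_{2n}\in\rho\ball{}_X\) and set \(y_i:=\Phi(x_i)/M_\rho\), so that \(y_i\in\ball{}_F\). If \(M_\rho=0\) or \(M_\rho=\infty\), there is nothing to prove. Apply \Href{Lemma}{lem:type-signed-sum} in \(F\) to \(y_1,\dots,y_{2n}\). This yields \(\varepsilon\in\Sigma_{2n}\) with
\[
        \norm{\sum_{i\in[2n]}\varepsilon_i\Phi(x_i)}_F\le 2T_q(F)M_\rho n^{1/q}.
\]
This choice of \(\varepsilon\) does not depend on \(c\); that is the point of the linearization. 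For every \(c\in R\ball{}_X\), the identity \eqref{eq:linearize} and the duality bound \(|\iprod{u}{\phi}|\le\norm{u}_F\norm{\phi}_{F^*}\) give
\[
        \left|\frac{1}{2n}\sum_{i\in[2n]}\varepsilon_i\norm{x_i-c}^\theta\right|
        \le\frac{1}{2n}\,2T_q(F)M_\rho n^{1/q}N_R
        =T_q(F)M_\rho N_R\,n^{-1+1/q}.
\]
Taking the supremum over \(c\), then the infimum over \(\varepsilon\), and finally the supremum over the tuples gives \eqref{eq:master-disc}.

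\textbf{VC bound.} Now let \(\theta\ge1\). By \Href{Remark}{rem:margin-theta}, hypothesis \eqref{eq:disc-hyp} holds as soon as \(\Delta_\theta<\delta/2\). By \eqref{eq:master-disc}, this is guaranteed when
\[
        T_q(F)M_\rho N_R\,n^{-(q-1)/q}<\delta/2,
\]
that is, when \(n>(2T_q(F)M_\rho N_R/\delta)^{q/(q-1)}=:A\). For such \(n\), \Href{Lemma}{lem:disc2vc} gives \(\VCdim<2n\).

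Choose \(n\) to be the smallest integer strictly greater than \(A\), so that \(n\le A+1\). Then \(\VCdim\le 2n-1\le 2A+1\le 2A+2\), which is \eqref{eq:master-vc}.

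The argument is routine. The only points needing care are the degenerate cases \(M_\rho N_R=0\) (where the bound is trivial) and \(M_\rho N_R=\infty\) (where there is nothing to prove), and getting the strict inequality right so that the constant \(+2\) comes out correctly.
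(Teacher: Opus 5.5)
Your proposal is correct and follows the paper's proof essentially step for step. You normalize the feature vectors, apply \Href{Lemma}{lem:type-signed-sum} in $F$ to get a single balanced $\varepsilon$ independent of $c$, pass to all centers via \eqref{eq:linearize} and duality, and then combine \Href{Lemma}{lem:disc2vc} with \Href{Remark}{rem:margin-theta} for the smallest integer $n>A$; the only small imprecision is that the case $M_\rho=0$ is not literally vacuous, since the bound then asserts $\Delta_\theta=0$, though this follows at once from \eqref{eq:linearize} because all $\Phi(x_i)=0$, as the paper notes.
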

\begin{proof}
If \(M_{\rho} = 0\), then the right-hand side of
\eqref{eq:linearize} vanishes for all \(x_i \in \rho \ball{}_{X}\), all
\(\varepsilon \in \Sigma_{2n}\), and all \(c \in R \ball{}_{X}\).  Hence
\(\Delta_\theta \parenth{X;\rho,R,2n}=0\), and there is nothing to prove.  Thus,
we may assume that \(M_{\rho} > 0\).

Fix \(x_1,\dots,x_{2n} \in \rho \ball{}_{X}\), and put
\(
        y_i = \frac{\Phi\! \parenth{x_i}}{M_{\rho}}
\)
for all
\(
i \in [2n].
\)
Then, \(y_i \in \ball{}_{F}\).  Applying \Href{Lemma}{lem:type-signed-sum} in
\(F\), we find \(\varepsilon \in \Sigma_{2n}\) such that
\[
        \norm{
        \frac{1}{2n}
        \sum\limits_{i \in [2n]}
        \varepsilon_i \Phi\! \parenth{x_i}}_{F}
        \le
        T_q\! \parenth{F} M_{\rho}\,
         n^{-1 + \frac{1}{q}}.
\]
Using the linearization identity \eqref{eq:linearize}, we get, for every
\(c \in R \ball{}_{X}\),
\[
        \left|
        \frac{1}{2n}
        \sum\limits_{i \in [2n]}
        \varepsilon_i \norm{x_i-c}^{\theta}
        \right|
        \le
        \norm{
        \frac{1}{2n}
        \sum\limits_{i \in [2n]}
        \varepsilon_i \Phi \parenth{x_i}}_{F}
        \norm{\Psi (c)}_{F^*}                              
        \le
        T_q\! \parenth{F} M_{\rho} \, N_{R} \,
        n^{-1 + \frac{1}{q}}.
\]
This \(\varepsilon\) witnesses the infimum in the definition of
\(\operatorname{disc}_{\theta,R}\parenth{x_1,\ldots,x_{2n}}\).  Taking the
supremum over all \(2n\)-tuples proves \eqref{eq:master-disc}.

For \eqref{eq:master-vc}, assume that \(\theta\ge1\).  By
\Href{Lemma}{lem:disc2vc} and \Href{Remark}{rem:margin-theta}, it suffices to
make \(\Delta_\theta\parenth{X;\rho,R,2n}<\delta/2\).  Put
\[
        \Lambda
        :=
        \parenth{
        \frac{2 T_q\! \parenth{F} M_{\rho}\,N_{R}}{\delta}}^{\frac{q}{q-1}}.
\]
Choose \(2n\) to be the smallest even integer strictly larger than \(2\Lambda\).
Then \(n>\Lambda\), and \eqref{eq:master-disc} gives
\(
        \Delta_\theta\parenth{X;\rho,R,2n}
        <
        \frac{\delta}{2}.
\)
Thus,
\[
        \VCdim \parenth{\BB_{\delta,\rho,R}\parenth{X}}
        < 2n.
\]
By the minimality of \(2n\), we have \(2n\le2\Lambda+2\).  Therefore,
\[
        \VCdim \parenth{\BB_{\delta,\rho,R} \parenth{X}}
        \le
        2\parenth{
        \frac{2 T_q\! \parenth{F} M_{\rho}\,N_{R}}{\delta}}
        ^{\frac{q}{q-1}} + 2,
\]
which proves \eqref{eq:master-vc}.
\end{proof}

In the next sections, we will apply \Href{Theorem}{thm:master} for several
families of Banach spaces. In each case, we exhibit \(F\), \(\Phi\), and
\(\Psi\), verify the identity \eqref{eq:linearize}, and estimate
\(T_q\! \parenth{F}\), 
\(M_{\rho}\), 
and \(N_{R}\). The identity is
always an elementary algebraic computation. The type estimate will rely on the
fact that the feature space is a finite direct sum of familiar spaces.

\section{Model of proof:  Euclidean case}
\label{sec:modelproof}
In this section, \(\Euc\) denotes a real Euclidean space. We spell out the
Euclidean argument because it is the cleanest model for the later proof. The
half-space bound uses balanced signs directly. The ball bound uses the same
balanced signs after the Veronese lift. Thus, the Euclidean proof is a
concrete instance of the mechanism abstracted in \Href{Theorem}{thm:master}, but
all objects can be written explicitly and there is no need to invoke the master theorem as a black box.

\begin{thm}[Expanded balls in a Euclidean space]\label{thm:euclidean-ball}
For all \(\delta,\rho,R > 0\),
\[
        \VCdim \parenth{\BB_{\delta,\rho,R} \parenth{\Euc}}
        <
        \frac{2 \parenth{\rho^2 + \rho^4} \parenth{4R^2 + 1}}{\delta^2} + 2 .
\]
\end{thm}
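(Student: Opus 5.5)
The plan is to run the linearization mechanism of \Href{Theorem}{thm:master} by hand with \(\theta=2\). We use the margin bound \((1+\delta)^2-1=2\delta+\delta^2>2\delta\) instead of the cruder \Href{Remark}{rem:margin-theta}; this saves exactly the factor \(4\) needed to reach the stated constant. The starting point is the polarization identity in \(\Euc\),
\[
        \norm{x-c}^2=\norm{x}^2-2\iprod{x}{c}+\norm{c}^2 .
\]
Define the Veronese-type feature map \(\Phi\colon\Euc\to F:=\Euc\oplus_2\R\) by \(\Phi(x)=(x,\norm{x}^2)\), and the dual map \(\Psi(c)=(-2c,1)\). Then
\[
        \norm{x-c}^2=\iprod{\Phi(x)}{\Psi(c)}+\norm{c}^2 .
\]
For a balanced \(\varepsilon\in\Sigma_{2n}\), the term \(\norm{c}^2\) depends only on \(c\) and cancels, so the identity \eqref{eq:linearize} holds. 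The feature space \(F\) is again a Hilbert space, so it has type \(2\) with \(T_2(F)=1\).

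Next I estimate the two norms. For \(x\in\rho\ball{}_{\Euc}\) we have \(\norm{\Phi(x)}^2=\norm{x}^2+\norm{x}^4\le\rho^2+\rho^4\), so \(M_\rho\le\sqrt{\rho^2+\rho^4}\). For \(c\in R\ball{}_{\Euc}\) we have \(\norm{\Psi(c)}^2=4\norm{c}^2+1\le4R^2+1\), so \(N_R\le\sqrt{4R^2+1}\).

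Applying \Href{Lemma}{lem:type-signed-sum} to \(\Phi(x_i)/M_\rho\) (equivalently, \eqref{eq:master-disc}) and using Cauchy--Schwarz gives
\[
        \Delta_2\parenth{\Euc;\rho,R,2n}\le M_\rho N_R\, n^{-1/2}.
\]

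Finally I convert this into a VC bound via \Href{Lemma}{lem:disc2vc}. Since \(\frac{(1+\delta)^2-1}{2}>\delta\), it suffices that \(M_\rho N_R n^{-1/2}\le\delta\), i.e.\ \(n\ge\Lambda\) with
\[
        \Lambda:=\frac{(\rho^2+\rho^4)(4R^2+1)}{\delta^2}.
\]
Let \(2n\) be the smallest even integer strictly larger than \(2\Lambda\). Then \(n>\Lambda\), the discrepancy hypothesis \eqref{eq:disc-hyp} holds, and \(\VCdim<2n\le2\Lambda+2\), which is the claimed strict inequality.

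The only delicate step is this last bookkeeping: if one invokes \eqref{eq:master-vc} as a black box, the constant comes out as \(8\Lambda+2\) rather than \(2\Lambda+2\). One must therefore use the sharper threshold \(\delta\) (coming from \(\theta=2\)) in place of \(\delta/2\). Everything else is routine algebra.
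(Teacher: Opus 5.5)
Your proof is correct and follows the paper's argument: the same Veronese lift \(\Phi(x)=(x,\norm{x}^2)\), \(\Psi(c)=(-2c,1)\) into the Hilbert space \(\Euc\oplus\R\), the same bounds \(M_\rho\le\sqrt{\rho^2+\rho^4}\) and \(N_R\le\sqrt{4R^2+1}\), and the same use of the sharper margin \(\frac{(1+\delta)^2-1}{2}\ge\delta\) in place of the black-box estimate \eqref{eq:master-vc}. The only cosmetic difference is that you route the final contradiction through \Href{Lemma}{lem:disc2vc}, whereas the paper writes it out inline.
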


For \(\rho, R \ge 1\), the first term on the right-hand side is at most
\[
        20 \frac{\rho^4 R^2}{\delta^2}.
\]

\begin{proof}[\prof{Theorem}{thm:euclidean-ball}]
Put
\(
        F := \Euc \oplus \R
\)
with the Hilbert norm. Define the feature map and the dual vector associated
with a center by
\[
        \Phi (x) := \parenth{x,\norm{x}_2^2},
        \qquad
        \Psi (c) := \parenth{-2c,1}.
\]
Note that the feature map is the classical Veronese lift.
Then
\[
        \iprod{\Phi (x)}{\Psi (c)}_{F}
        =
        \norm{x}_2^2 - 2 \iprod{x}{c}
        =
        \norm{x-c}_2^2 - \norm{c}_2^2.
\]
Consequently, for every \(\varepsilon \in \Sigma_{2n}\), the center-only term
cancels and
\begin{equation}\label{eq:euclidean-paraboloid-linearization}
        \frac{1}{2n}
        \sum\limits_{i \in [2n]}
        \varepsilon_i \norm{x_i-c}_2^2
        =
        \iprod{
        \frac{1}{2n}
        \sum\limits_{i \in [2n]} \varepsilon_i \Phi (x_i)}
        {\Psi (c)}_{F,F^*}.
\end{equation}

Set
\(
        M_\rho := \sqrt{\rho^2 + \rho^4},
\)
and
\(
        N_R := \sqrt{4R^2 + 1}.
\)

For \(x \in \rho \ball{}_{\Euc}\) and \(c \in R \ball{}_{\Euc}\), we have
\(
        \norm{\Phi (x)}_{F} \le M_\rho
\)
and
\(
        \norm{\Psi (c)}_{F} \le N_R.
\)
We prove that no shattered set of cardinality \(2n\) can satisfy
\[
        2n \ge \frac{2 M_\rho^2 N_R^2}{\delta^2}.
\]
Assume that
\(
        S = \braces{x_i \st i \in [2n]} \subseteq \rho \ball{}_{\Euc}
\)
is shattered by \(\BB_{\delta,\rho,R} \parenth{\Euc}\) and
\(2n\) satisfies this
inequality. Applying \Href{Lemma}{lem:type-signed-sum} in the Hilbert space \(F\) to
the normalized vectors \(\Phi \parenth{x_i} / M_\rho\), we find
\(\varepsilon \in \Sigma_{2n}\) such that
\[
        \norm{
        \frac{1}{2n}
        \sum\limits_{i \in [2n]}
        \varepsilon_i \Phi \parenth{x_i}}_{F}
        \le
        \frac{M_\rho}{\sqrt{n}}.
\]
By \eqref{eq:euclidean-paraboloid-linearization}, for every
\(c \in R \ball{}_{\Euc}\),
\[
        \left|
        \frac{1}{2n}
        \sum\limits_{i \in [2n]}
        \varepsilon_i \norm{x_i-c}_2^2
        \right|
        \le
        \frac{M_\rho N_R}{\sqrt{n}}
        \le \delta.
\]
 Since \(S\) is shattered, there is a center
\(c \in R \ball{}_{\Euc}\) such that
\[
        \norm{x_i-c}_2 \le 1
        \quad \text{for all} \quad i \in I_{-}\! \parenth{\varepsilon},
        \qquad
        \norm{x_i-c}_2 > 1 + \delta
        \quad \text{for all} \quad i \in I_{+}\! \parenth{\varepsilon}.
\]
For this center,
\[
        \frac{1}{2n}
        \sum\limits_{i \in [2n]}
        \varepsilon_i \norm{x_i-c}_2^2
        >
        \frac{\parenth{1 + \delta}^2 - 1}{2}
        \ge \delta,
\]
which contradicts the preceding bound. Hence, no such \(2n\)-point set is
shattered, and the stated VC bound follows.
\end{proof}

Let us spell out the role of this computation.  In the
Euclidean case, the maps \(\Phi\) and \(\Psi\) satisfy the linearization
identity \eqref{eq:euclidean-paraboloid-linearization}; the feature space
\(F\) is again a Hilbert space; and \Href{Lemma}{lem:type-signed-sum}
gives the required balanced signed sum directly in \(F\).  Thus, the proof
above is the master-theorem argument with all ingredients written out
explicitly.  In the general theorem, the same three steps remain: linearize
the relevant distance power, balance the resulting feature vectors using
type, and convert the obtained discrepancy estimate into a bound for the
VC dimension.

In the Euclidean case, \Href{Theorem}{thm:lp-bound-summary} gives a better dependence on \(\rho\), namely quadratic rather than quartic. The improvement comes from applying the Hilbert-space linearization not to the Euclidean distance itself, but to its square root through the negative-type embedding discussed next.

\section{Metric spaces of negative type and \texorpdfstring{$L_p$}{Lp} spaces with \texorpdfstring{$p \in [1,2]$}{p in [1,2]}}\label{sec:negativetypeintro}

The elementary Euclidean paraboloid lift from \Href{Section}{sec:modelproof}
is useful as a model, but it is not the most efficient way to treat balls.  To
improve the Euclidean bound and to treat the case of 
\(L_p\)-spaces, we now
introduce the second ingredient of the proof: Schoenberg embeddings.  The point
is that a suitable power of the metric embeds into a Hilbert space, and the
Hilbert-space linearization can then be used inside the master theorem.

As a direct corollary of the results of the current section, we obtain
\begin{thm}[$L_p$ spaces, \texorpdfstring{$1 \le p \le 2$}{1 <= p <= 2}]
\label{thm:ple2-ball-lp}
Let  \(1 \le p \le 2\).  Then, for every \(\delta,\rho,R > 0\),
\[
        \VCdim \parenth{\BB_{\delta,\rho,R} \parenth{L_p \parenth{\mu}}}
        \le
        \frac{8 \parenth{2\sqrt{\rho R} + \rho}^2}{\delta^2} + 2 .
\]
\end{thm}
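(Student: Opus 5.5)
The plan is to apply \Href{Theorem}{thm:master} with \(\theta=1\), \(q=2\), and a Hilbert feature space built from a Schoenberg-type embedding. The key fact is that for \(1\le p\le 2\) the metric \(\norm{x-y}_p\) on \(L_p(\mu)\) is of negative type. I would invoke this as the Schoenberg embedding result of this section (\Href{Proposition}{prp:ww-hilbert-embedding}), together with the classical fact that \(L_p(\mu)\), \(p\in[1,2]\), is of negative type. The consequence I need is a map \(T\colon L_p(\mu)\to H\) into a Hilbert space with
\[
\norm{Tx-Ty}_H^2=\norm{x-y}_p \qquad\text{for all } x,y.
\]
Composing with a translation, I may assume \(T(0)=0\). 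Then \(\norm{Tx}_H^2=\norm{x}_p\).

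Expanding the square gives
\[
\norm{x-c}_p=\norm{Tx}_H^2-2\iprod{Tx}{Tc}+\norm{Tc}_H^2 .
\]
The last term depends only on \(c\), so it is annihilated by any balanced sign vector \(\varepsilon\in\Sigma_{2n}\). With a free scaling parameter \(\lambda>0\), I take \(F:=H\oplus\R\) with the Hilbert norm and set
\[
\Phi(x):=\parenth{\lambda Tx,\ \norm{x}_p},\qquad \Psi(c):=\parenth{-\tfrac{2}{\lambda}Tc,\ 1}.
\]
Then \(\iprod{\Phi(x)}{\Psi(c)}=\norm{x-c}_p-\norm{Tc}_H^2\). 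Averaging against \(\varepsilon\in\Sigma_{2n}\) kills the center-only term and yields exactly the linearization identity \eqref{eq:linearize} with \(\theta=1\). Since \(F\) is Hilbert, it has type \(2\) with \(T_2(F)=1\), and \(F^*\cong F\).

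Next I would estimate the constants. For \(\norm{x}_p\le\rho\) we have \(\norm{Tx}_H^2\le\rho\), so
\[
M_\rho^2\le\lambda^2\rho+\rho^2 .
\]
For \(\norm{c}_p\le R\),
\[
N_R^2\le 4R/\lambda^2+1 .
\]
By Cauchy--Schwarz, the product \((\lambda^2\rho+\rho^2)(4R\lambda^{-2}+1)\) is at least \(\bigl(\sqrt{4R\rho}+\rho\bigr)^2\), with equality when \(\lambda^2\rho\cdot 1=\rho^2\cdot 4R\lambda^{-2}\), that is, \(\lambda^2=2\sqrt{\rho R}\). With this choice,
\[
M_\rho N_R= 2\sqrt{\rho R}+\rho .
\]
Plugging into \eqref{eq:master-vc} with \(q=2\) gives
\[
2\parenth{\frac{2(2\sqrt{\rho R}+\rho)}{\delta}}^{2}+2=\frac{8(2\sqrt{\rho R}+\rho)^2}{\delta^2}+2 ,
\]
which is the claimed bound.

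The only genuinely nontrivial input is the negative-type embedding of \(L_p(\mu)\) for \(1\le p\le2\), for an arbitrary measure space and possibly infinite-dimensional. I would take it from \Href{Proposition}{prp:ww-hilbert-embedding} combined with the standard fact that \(L_p\), \(p\le 2\), has negative type, which goes back to Schoenberg and Bretagnolle--Dacunha-Castelle--Krivine. If that proposition is only stated for finite metric spaces, I would handle this as follows. The master theorem only ever uses the finitely many points \(x_1,\dots,x_{2n}\) and a single center \(c\) at a time, so it suffices to embed the finite set \(\{0,x_1,\dots,x_{2n},c\}\). The bounds on \(M_\rho\) and \(N_R\) are uniform, so this local version suffices. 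All remaining steps are routine algebra.
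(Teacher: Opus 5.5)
Your proposal is correct and is essentially the paper's proof. The paper first proves \Href{Theorem}{thm:ple2-ball}, using the Schoenberg square-root embedding \(\phi\) with \(\phi(0)=0\), \(F=H\oplus\R\), \(\Phi(x)=(\phi(x),\norm{x})\), \(\Psi(c)=(-2\phi(c),1)\) and the master theorem with \(q=2\); it then applies \Href{Proposition}{prp:ww-lp-negative-type} with \(\alpha=1/2\). Your scaling \(\lambda^2=2\sqrt{\rho R}\) does exactly the job of the paper's weights \(A_1^2=\sqrt{\rho}/(2\sqrt{R})\), \(A_2^2=\rho\), giving the same \(M_\rho N_R\le 2\sqrt{\rho R}+\rho\). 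In the paper's \Href{Definition}{dfn:ww-negative-type} the fact you need reads ``\(\norm{x-y}_p^{1/2}\) is of negative type''; it is stated there for arbitrary \(L_p(\mu)\) and arbitrary metric spaces, so your finite-subset fallback is unnecessary, and your \(M_\rho N_R=\ldots\) should read \(\le\).
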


In this section, we recall some definitions and results related to isometric
embeddings of metric spaces into Hilbert spaces. We follow the book \cite{Wells1975} by Wells and Williams, and recall Definition~3.1 from it.

\begin{dfn}
\label{dfn:ww-negative-type}
Let \((Y,d)\) be a metric space.  We say that \(d\) is \emph{of negative type} if, for every finite choice 
\(y_1, \dots, y_m \in Y\) and every choice of real numbers 
\(\xi_1, \dots, \xi_m\) satisfying \( \sum\limits_{j \in [m]} \xi_j=0\), one has
\begin{equation}
\label{eq:ww-negative-type-condition}
        \sum_{j,k \in [m]} d(y_j,y_k)^2 \xi_j \xi_k
        \le 0.
\end{equation}
\end{dfn}
This is condition (2.9) in \cite{Wells1975}.  Their definition is given for quasi-metrics; here we only need the metric case. Theorem~2.4 of \cite{Wells1975} follows.

\begin{prp}[Schoenberg's embedding theorem~\cite{schoenberg1938metric}]\label{prp:ww-hilbert-embedding}
Let \((Y,d)\) be a metric space.  Then \((Y,d)\) embeds isometrically into a Hilbert space if and only if \(d\) is of negative type.

In particular, if \(d\) is of negative type, then there are a Hilbert space \(H\) and a map \(\phi \st Y \to H\) such that
\[
        \norm{\phi(y) - \phi(z)}_H = d(y,z)
        \quad \text{for all} \quad
        y,z \in Y.
\]
After translating the image, one may have \(\phi(y_0)=0\) for any prescribed base point \(y_0\in Y\).
\end{prp}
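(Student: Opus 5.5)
The plan is to prove both directions directly, building the Hilbert space from a positive semidefinite kernel in the sufficiency direction.

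\emph{Necessity.} Suppose \(\phi\colon Y\to H\) is an isometric embedding, and take points \(y_1,\dots,y_m\) and reals \(\xi_1,\dots,\xi_m\) with \(\sum_j\xi_j=0\). Put \(u_j=\phi(y_j)\) and expand
\[
\norm{u_j-u_k}^2=\norm{u_j}^2+\norm{u_k}^2-2\iprod{u_j}{u_k}.
\]
Summing against \(\xi_j\xi_k\), the two diagonal terms each factor through \(\sum_k\xi_k=0\) and vanish. What remains is
\[
\sum_{j,k} d(y_j,y_k)^2\xi_j\xi_k=-2\Bigl\|\sum_j\xi_j u_j\Bigr\|^2\le 0,
\]
which is condition \eqref{eq:ww-negative-type-condition}.

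\emph{Sufficiency.} Fix a base point \(y_0\in Y\) and define the Gromov-product kernel
\[
K(y,z):=\tfrac12\bigl(d(y,y_0)^2+d(z,y_0)^2-d(y,z)^2\bigr).
\]
The first key step is to show that \(K\) is positive semidefinite. Given \(y_1,\dots,y_m\) and reals \(a_1,\dots,a_m\), adjoin the point \(y_0\) with coefficient \(\xi_0:=-\sum_j a_j\), and set \(\xi_j:=a_j\) for \(j\ge 1\). The coefficients then sum to zero, so the negative-type inequality applies to the \(m+1\) points \(y_0,\dots,y_m\). Expanding, using \(d(y_0,y_0)=0\), the terms containing \(\xi_0\) combine with the \(d(y_j,y_k)^2\) terms to give
\[
\sum_{j,k=0}^m d(y_j,y_k)^2\xi_j\xi_k=-2\sum_{j,k\ge1}a_ja_kK(y_j,y_k).
\]
Hence \(\sum_{j,k} a_ja_kK(y_j,y_k)\ge0\). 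I expect this bookkeeping to be the only genuinely delicate point, so I would carry out the expansion carefully.

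Next comes the standard Kolmogorov/GNS construction. Let \(V\) be the space of finitely supported functions \(Y\to\R\), equipped with the semi-inner product \(\iprod{a}{b}:=\sum_{y,z}a(y)b(z)K(y,z)\), which is positive semidefinite by the previous step. By Cauchy--Schwarz for semi-inner products, the null vectors \(N=\{a\st\iprod{a}{a}=0\}\) form a linear subspace. Take \(H\) to be the completion of \(V/N\), and define \(\phi(y):=[\mathbf 1_{\{y\}}]\). Then
\[
\norm{\phi(y)-\phi(z)}^2=K(y,y)+K(z,z)-2K(y,z)=d(y,z)^2,
\]
since \(K(y,y)=d(y,y_0)^2\). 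So \(\phi\) is an isometric embedding.

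Finally, \(K(y_0,y_0)=0\), so \(\phi(y_0)=0\) automatically. In any case, translating any isometric embedding by a fixed vector preserves distances, which gives the last sentence of the statement for an arbitrary prescribed base point.
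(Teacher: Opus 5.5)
Your proof is correct in both directions. The paper gives no argument of its own for this proposition; it quotes it from Wells--Williams (Theorem~2.4) and Schoenberg. Your write-up therefore supplies the standard self-contained proof that the citation stands in for, namely positive semidefiniteness of the base-pointed kernel followed by the Kolmogorov/GNS completion.

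The bookkeeping you flagged checks out. With $\xi_0=-\sum_{j\ge1}a_j$, the cross terms $2\xi_0\sum_{k\ge1}a_k d(y_0,y_k)^2$ symmetrize to $-\sum_{j,k\ge1}a_ja_k\bigl(d(y_0,y_j)^2+d(y_0,y_k)^2\bigr)$, which gives exactly $-2\sum_{j,k\ge1}a_ja_kK(y_j,y_k)$. Nothing breaks if $y_0$ coincides with some $y_j$ or the points repeat. The quadratic form depends only on the aggregated coefficients $y\mapsto\sum_{j:\,y_j=y}\xi_j$, which still sum to zero.

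Compared with simply citing, your route has two advantages. First, the completion of $V/N$ needs no separability of $Y$; the resulting $H$ may be non-separable. This covers $L_p(\mu)$ over an arbitrary measure space, as the paper uses it, with no reduction step. Second, your construction shows directly that the base-pointed embedding has Gram kernel $\langle\phi(y),\phi(z)\rangle=K(y,z)$. That is precisely the covariance identity \eqref{eq:frac-schoenberg-covariance}, taking $y_0=0$ and $d(r,v)^2=|r-v|^s$. It also gives the relation $\|\phi(x)\|^2=\|x\|$ that the paper extracts from the proposition in Lemma~\ref{lem:taylor-schoenberg} and Theorem~\ref{thm:ple2-ball}.
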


The book \cite{Wells1975} states the following result in the slightly more general quasi-metric language (cf. Theorem~4.10 therein); for us, a simpler metric formulation is sufficient.

\begin{prp}\label{prp:ww-lp-negative-type}
Let \(X\) be one of the spaces \(\ell_p^m\), \(\ell_p\), or \(L_p(\mu)\), where \(1\le p\le2\).  Then, for every \(0<\alpha\le p/2\), the metric \( d_\alpha(x,y):=\norm{x-y}_p^\alpha \)
is of negative type.  Consequently, by \Href{Proposition}{prp:ww-hilbert-embedding}, the metric space \(\bigl(X,d_\alpha\bigr)\) embeds isometrically into a Hilbert space.
In other words, there are a Hilbert space \(H\) and a map \(\phi \st X \to H\) such that
\[
        \norm{\phi(x) - \phi(y)}_H
        = \norm{x - y}_p^\alpha
        \quad \text{for all} \quad
        x, y \in X.
\]
\end{prp}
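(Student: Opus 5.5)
The plan is to reduce the claim to the single statement that the kernel \(K(x,y):=\norm{x-y}_p^{p}\) is \emph{conditionally negative definite} on \(L_p(\mu)\). Here a symmetric kernel \(K\) is called conditionally negative definite if \(\sum_{j,k}\xi_j\xi_k K(y_j,y_k)\le 0\) whenever \(\sum_j\xi_j=0\). The reduction uses the fact that the power \(K^\beta\) of such a nonnegative kernel with zero diagonal is again conditionally negative definite for \(0<\beta\le1\). Indeed, \(d_\alpha(x,y)^2=\norm{x-y}_p^{2\alpha}=K(x,y)^{\beta}\) with \(\beta=2\alpha/p\in(0,1]\), so \eqref{eq:ww-negative-type-condition} for \(d_\alpha\) is exactly conditional negative definiteness of \(K^\beta\). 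The spaces \(\ell_p^m\) and \(\ell_p\) are special cases of \(L_p(\mu)\), with counting measure. The embedding then follows from \Href{Proposition}{prp:ww-hilbert-embedding}.

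\emph{Step 1: the scalar case.} I will show that \((s,t)\mapsto\abs{s-t}^p\) is conditionally negative definite on \(\R\) for \(0<p\le2\). For \(p=2\), expand \(\abs{s-t}^2=s^2+t^2-2st\). When \(\sum\xi_j=0\) the first two terms cancel, leaving \(-2\parenth{\sum_j\xi_js_j}^2\le0\). For \(0<p<2\), use the identity
\[
\abs{u}^p=c_p\int_0^\infty\parenth{1-\cos(ur)}\,r^{-1-p}\,\di r,\qquad c_p>0.
\]
The integral converges because \(1-\cos(ur)=O(r^2)\) near \(0\) and is bounded at infinity. For each fixed \(r\), the constant \(1\) drops out when \(\sum\xi_j=0\). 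The remaining term satisfies
\[
\sum_{j,k}\xi_j\xi_k\cos\parenth{(s_j-s_k)r}=\abs{\sum_j\xi_je^{is_jr}}^2\ge0,
\]
so the integrand is nonpositive, and integrating preserves the sign.

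\emph{Step 2: integration over \(\Omega\).} Take \(f_1,\dots,f_m\in L_p(\mu)\) and \(\sum\xi_j=0\). Step 1 gives \(\sum_{j,k}\xi_j\xi_k\abs{f_j(\omega)-f_k(\omega)}^p\le0\) for each \(\omega\in\Omega\). Integrating in \(\mu\) yields \(\sum_{j,k}\xi_j\xi_k\norm{f_j-f_k}_p^p\le0\), so \(K\) is conditionally negative definite.

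\emph{Step 3: taking powers.} This is the main obstacle, namely Schoenberg's lemma that if \(K\ge0\) is conditionally negative definite with \(K(x,x)=0\), then \(e^{-\lambda K}\) is positive semidefinite for every \(\lambda>0\). To prove it, fix a base point \(y_0\) and set \(L(x,y):=K(x,y_0)+K(y,y_0)-K(x,y)\).

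First, \(L\) is positive semidefinite. Given arbitrary \(\xi_1,\dots,\xi_m\), apply the conditional inequality to the points \(y_0,y_1,\dots,y_m\) with weights \(-\sum_j\xi_j,\xi_1,\dots,\xi_m\), which sum to zero. Using \(K(y_0,y_0)=0\), this gives \(\sum_{j,k}\xi_j\xi_kL(y_j,y_k)\ge0\).

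Second, \(e^{\lambda L}\) is positive semidefinite, since \(e^{\lambda L}=\sum_k\lambda^kL^{\circ k}/k!\) is a nonnegative combination of Schur powers of \(L\), each positive semidefinite by the Schur product theorem. Third, write
\[
e^{-\lambda K(x,y)}=e^{-\lambda K(x,y_0)}\,e^{-\lambda K(y,y_0)}\,e^{\lambda L(x,y)}.
\]
This is a congruence of a positive semidefinite kernel by a rank-one diagonal factor, so it is positive semidefinite.

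Finally, for \(0<\beta<1\), use
\[
t^\beta=c_\beta\int_0^\infty\parenth{1-e^{-\lambda t}}\lambda^{-1-\beta}\,\di\lambda,\qquad c_\beta>0,\ t\ge0.
\]
For \(\sum\xi_j=0\), the constant \(1\) again cancels. This gives
\[
\sum_{j,k}\xi_j\xi_kK(y_j,y_k)^\beta=-c_\beta\int_0^\infty\sum_{j,k}\xi_j\xi_ke^{-\lambda K(y_j,y_k)}\lambda^{-1-\beta}\,\di\lambda\le0.
\]
The case \(\beta=1\) is Step 2 itself. This completes the proof that \(d_\alpha\) is of negative type, and \Href{Proposition}{prp:ww-hilbert-embedding} supplies \(H\) and \(\phi\).
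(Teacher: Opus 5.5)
Your proof is correct. The paper gives no proof of this proposition: it quotes it from Wells and Williams (Theorem~4.10 there, stated for quasi-metrics) and combines it with Schoenberg's embedding theorem. What you have written is essentially the classical argument behind that citation, in three steps:
\begin{enumerate}
\item the scalar kernel $|s-t|^p$ is conditionally negative definite on $\R$ for $0<p\le 2$, via the representation $|u|^p=c_p\int_0^\infty(1-\cos ur)\,r^{-1-p}\,dr$;
\item integrating pointwise over $\Omega$ gives the same for $\|x-y\|_p^p$;
\item Schoenberg's power lemma shows that $K^\beta$ stays conditionally negative definite for $0<\beta\le 1$, using positive semidefiniteness of $e^{-\lambda K}$ and the formula $t^\beta=c_\beta\int_0^\infty(1-e^{-\lambda t})\lambda^{-1-\beta}\,d\lambda$.
\end{enumerate}
The details check out. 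You cancel the constant terms using $\sum_j\xi_j=0$ before integrating, so no divergent integral is ever split. The bordered weights $(-\sum_j\xi_j,\xi_1,\dots,\xi_m)$ correctly give positive semidefiniteness of $L$. The factorization $e^{-\lambda K(x,y)}=e^{-\lambda K(x,y_0)}e^{-\lambda K(y,y_0)}e^{\lambda L(x,y)}$ is exact.

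As for what each approach buys: the citation keeps the paper short. Your version makes the paper genuinely self-contained here, as the introduction promises. It also shows that nothing about $\mu$ is used beyond pointwise integration. Your decomposition further shows that the paper's second use of the proposition needs much less. In the Taylor--Schoenberg lift the paper only needs $\R=\ell_2^1$ with $\alpha=s/2$. That is exactly your Step~1 with $p$ replaced by $s\in(0,2)$, so no power lemma is needed there. Step~3 is needed only for $\alpha=1/2$ with $1<p\le2$, where $\beta=1/p<1$. (An alternative classical route for that case is to embed $L_p$ isometrically into $L_1$ via $p$-stable variables and use your Steps~1--2 with exponent $1$.)

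One cosmetic point. To invoke the embedding theorem as stated for metric spaces, you could note that $d_\alpha$ really is a metric. Since $\alpha\le p/2\le 1$, the map $t\mapsto t^\alpha$ is subadditive, so the triangle inequality holds. Alternatively, the triangle inequality follows from the negative-type condition itself via the Gram construction.
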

\subsection{Spaces of negative type: Schoenberg's embedding}\label{sec:ple2}

\begin{thm}[Expanded balls in spaces of negative type]
\label{thm:ple2-ball}
Let \(X\) be a Banach space such that the metric
\(
        d(x,y) := \norm{x-y}^{1/2}
\)
is of negative type.  Then, for every \( \delta, \rho,R > 0\),
\[
        \VCdim \parenth{\BB_{\delta,\rho,R} \parenth{X}}
        \le
        \frac{8 \parenth{2\sqrt{\rho R} + \rho}^2}{\delta^2} + 2 .
\]
\end{thm}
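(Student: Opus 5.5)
The plan is to apply \Href{Theorem}{thm:master} with \(\theta = 1\), a Hilbert feature space (so \(q=2\) and \(T_2(F)=1\)), and a feature map obtained by lifting the Schoenberg embedding of the metric \(d(x,y)=\norm{x-y}^{1/2}\). With \(q=2\), the bound \eqref{eq:master-vc} reads \(8 M_\rho^2 N_R^2/\delta^2 + 2\). So it suffices to construct \(\Phi,\Psi\) satisfying \eqref{eq:linearize} with \(M_\rho N_R \le 2\sqrt{\rho R}+\rho\).

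\emph{Step 1 (embedding).} Since \(d\) is of negative type, \Href{Proposition}{prp:ww-hilbert-embedding} gives a Hilbert space \(H\) and a map \(\phi\colon X\to H\) with \(\norm{\phi(x)-\phi(y)}_H^2 = \norm{x-y}\), normalized so that \(\phi(0)=0\). In particular \(\norm{\phi(x)}_H^2=\norm{x}\). Expanding the square gives
\[
        \norm{x-c}
        = \norm{\phi(x)}_H^2 - 2\iprod{\phi(x)}{\phi(c)}_H + \norm{\phi(c)}_H^2
        = \norm{x} - 2\iprod{\phi(x)}{\phi(c)}_H + \norm{c}.
\]
This is affine in the pair \((\phi(x),\norm{x})\), up to a term depending only on \(c\).

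\emph{Step 2 (linearization with a free scale).} Fix \(\lambda>0\), let \(F := H\oplus\R\) with the Hilbert norm, and set
\[
        \Phi(x) := \parenth{\lambda\phi(x),\norm{x}}, \qquad \Psi(c) := \parenth{-2\lambda^{-1}\phi(c),1}.
\]
Then \(\iprod{\Phi(x)}{\Psi(c)} = \norm{x-c}-\norm{c}\). For balanced \(\varepsilon\in\Sigma_{2n}\) the \(c\)-only term cancels, so \eqref{eq:linearize} holds with \(\theta=1\). For \(x\in\rho\ball{}_X\) and \(c\in R\ball{}_X\) we get
\[
        \norm{\Phi(x)}^2 \le \lambda^2\rho+\rho^2, \qquad \norm{\Psi(c)}^2 \le 4R/\lambda^2+1,
\]
so \(M_\rho^2 \le \lambda^2\rho+\rho^2\) and \(N_R^2 \le 4R/\lambda^2+1\).

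\emph{Step 3 (optimize \(\lambda\)).} By Cauchy--Schwarz, \((\lambda^2\rho+\rho^2)(4R\lambda^{-2}+1)\ge(2\sqrt{\rho R}+\rho)^2\), with equality when the vectors \((\lambda\sqrt\rho,\rho)\) and \((2\sqrt R/\lambda,1)\) are proportional. That happens for \(\lambda^2 = 2\sqrt{R\rho}\). With this choice, \(M_\rho N_R \le 2\sqrt{\rho R}+\rho\). Plugging into \eqref{eq:master-vc} with \(T_2(F)=1\) and \(q=2\) gives exactly \(8(2\sqrt{\rho R}+\rho)^2/\delta^2+2\).

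The only delicate point is Step 2. One must notice that the naive choice \(\lambda=1\) is suboptimal, and that the scale between the \(H\)-part and the \(\R\)-part is free because \(\Psi\) may absorb \(\lambda^{-1}\). One must also check that normalizing \(\phi(0)=0\) is what makes \(\norm{\phi(x)}^2=\norm{x}\), which in turn gives the bounds on \(M_\rho\) and \(N_R\). Everything else is a direct invocation of \Href{Theorem}{thm:master}.
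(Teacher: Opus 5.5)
Your proposal is correct and is essentially the paper's proof: the paper also applies the master theorem with \(\theta=1\) and \(q=2\), using the Schoenberg embedding with \(\phi(0)=0\) and the maps \(\Phi(x)=(\phi(x),\|x\|)\), \(\Psi(c)=(-2\phi(c),1)\). The only difference is that the paper puts the free scale into a weighted Hilbert norm on \(H\oplus\mathbb{R}\), with weights \(A_1^2=\sqrt{\rho}/(2\sqrt{R})\) and \(A_2^2=\rho\), instead of into the maps; since only the ratio \(A_2^2/A_1^2=2\sqrt{\rho R}\) affects \(M_\rho N_R\), this is the same as your choice \(\lambda^2=2\sqrt{\rho R}\).
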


\begin{proof}
By \Href{Proposition}{prp:ww-hilbert-embedding}, there are a Hilbert space
\(H\) and a map \(\phi \colon X \to H\) such that 
\(\phi(0)=0\) and
\(
        \norm{\phi(x) - \phi(y)}_{H}^{2}
        =
        \norm{x-y}
\)
{for all}
\(
x,y \in X.
\)
In particular,
\(
        \norm{\phi(x)}_{H}^{2} = \norm{x}
\)
{for all} 
\(
 x \in X.
\)
We apply \Href{Theorem}{thm:master} with \(\theta = 1\).  Let
\(
        F = H \oplus \R
\)
with the weighted Hilbert norm
\[
        \norm{\parenth{h,t}}_{F}^{2}
        =
        \frac{\norm{h}_{H}^{2}}{A_1^2}
        +
        \frac{\abs{t}^{2}}{A_2^2},
\quad \text{where} \quad 
        A_1^2 = \frac{\sqrt{\rho}}{2\sqrt{R}},
        \qquad
        A_2^2 = \rho .
\]
Define the feature and dual maps by
\[
        \Phi(x) = \parenth{\phi(x), \norm{x}},
        \qquad
        \Psi(c) = \parenth{-2\phi(c), 1}.
\]
For every \(x,c \in X\),
\[
        \norm{x-c}
        =
        \norm{\phi(x) - \phi(c)}_{H}^{2}
        =
        \iprod{\Phi (x)}{\Psi (c)}_{F,F^*}
        +
        \norm{\phi(c)}_{H}^{2}.
\]
The last term depends only on \(c\).  Hence, it disappears after summing over
\(\varepsilon \in \Sigma_{2n}\), and the identity \eqref{eq:linearize} follows.

It remains to estimate \(M_{\rho}\) and 
\(N_{R}\).  If
\(x \in \rho \ball{}_{X}\), then \(\norm{\phi(x)}_{H} \le \sqrt{\rho}\)
and \(\norm{x} \le \rho\).  Therefore,
\[
       M_{\rho}^{2}
        \le
        \frac{\rho}{A_1^2} + \frac{\rho^2}{A_2^2}
        =
        2\sqrt{\rho R} + \rho .
\]
Similarly, if \(c \in R \ball{}_{X}\), then
\[
       N_{R}^{2}
        \le
        4 R A_1^2 + A_2^2
        =
        2\sqrt{\rho R} + \rho .
\]
Thus,
\(
        M_{\rho} N_{R}
        \le
        2\sqrt{\rho R} + \rho .
\)
The feature space \(F\) is Hilbert, so \(T_2\! \parenth{F}=1\).  The theorem now
follows from \Href{Theorem}{thm:master} with \(q=2\).
\end{proof}

\Href{Theorem}{thm:ple2-ball-lp} is  a direct corollary of \Href{Theorem}{thm:ple2-ball}.
\begin{proof}[\prof{Theorem}{thm:ple2-ball-lp}]
Apply \Href{Proposition}{prp:ww-lp-negative-type} with \(\alpha = 1/2\), and then
use \Href{Theorem}{thm:ple2-ball}.
\end{proof}

\section{\texorpdfstring{$L_p$}{Lp} spaces with
\texorpdfstring{$p > 2$}{p > 2}: the Taylor--Schoenberg lift}
\label{sec:pge2}
We now treat the remaining range \(p > 2\), and prove the following theorem, which combined with 
\Href{Theorem}{thm:ple2-ball-lp} yields \Href{Theorem}{thm:lp-bound-summary}.

\begin{thm}[Expanded balls in \texorpdfstring{$L_p$}{Lp} for \texorpdfstring{$p > 2$}{p > 2}]
\label{thm:general-ball}
Let \(p > 2\) and let
\(\rho, R, \delta > 0\). Then
\begin{equation}
\label{eq:lp-big-p-vc-bound}
        \VCdim \parenth{\BB_{\delta, \rho, R} \parenth{L_p(\mu)}}
        \le
        2 \parenth{
        \frac{25 \sqrt{p}\, \parenth{R + \rho}^{p}}{\delta}}
        ^p
        + 2 .
\end{equation}
\end{thm}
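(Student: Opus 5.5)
We plan to apply \Href{Theorem}{thm:master} with \(\theta=p\). The target exponent \(p\) in \eqref{eq:lp-big-p-vc-bound} equals \(q/(q-1)\) with \(q=p'=p/(p-1)\in(1,2)\). This suggests a feature space \(F\) of type \(p'\), built from \(L_{p'}\)-type pieces, with the product \(T_{p'}(F)M_\rho N_R\) of order \(\sqrt p\,(R+\rho)^p\) up to an absolute constant. By \eqref{eq:master-vc} that product is all we need. Since \(\norm{x-c}_p^p=\int_\Omega\abs{x(t)-c(t)}^p\,\di\mu(t)\), it suffices to linearize the scalar function \(s\mapsto\abs{s}^p\) at \(s=a-b\) with \(a=x(t)\), \(b=c(t)\), and then integrate in \(t\).

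\textbf{Step 1: the Taylor--Schoenberg representation of \(\abs{s}^p\).} For \(p>2\), the Schoenberg formula \(\abs{s}^p=C\int_0^\infty(1-\cos\lambda s)\lambda^{-1-p}\,\di\lambda\) diverges at \(\lambda\to0\). We repair this by subtracting the Taylor polynomial of \(\cos\) of degree \(2m<p\), with \(m=\lfloor p/2\rfloor\) (handling the even-integer case separately, where \(\abs{s}^p\) is itself a polynomial). This gives
\[
\abs{s}^p=C_p\int_0^\infty\Bigl(\sum_{j\le m}\tfrac{(-1)^j(\lambda s)^{2j}}{(2j)!}-\cos\lambda s\Bigr)\lambda^{-1-p}\,\di\lambda ,
\]
with the sign of \(C_p\) fixed by \(m\). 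Substituting \(s=a-b\), the cosine splits as \(\cos\lambda a\cos\lambda b+\sin\lambda a\sin\lambda b\), and each \((a-b)^{2j}\) expands binomially. This writes \(\abs{a-b}^p\) as a bilinear pairing of an \(a\)-feature and a \(b\)-feature, plus terms depending only on \(b\), which cancel under balanced signs. The individual pieces diverge near \(\lambda=0\). To fix this, we split \(\int_0^\infty\) at a scale \(\lambda_0\sim 1/(R+\rho)\). For \(\lambda>\lambda_0\), we use the cosine/sine features directly. For \(\lambda<\lambda_0\), we keep the Taylor-remainder form, with remainder \(O((\lambda s)^{2m+2})\), which is integrable at \(0\), and expand that remainder polynomial in \(a,b\) as well.

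\textbf{Step 2: the feature and dual maps, and their norms.} We set \(\Phi(x)\) to be the family of functions \((t,\lambda)\mapsto\cos\lambda x(t)-1,\ \sin\lambda x(t)\) together with the monomials \(x(t)^k\), \(k\le p\). We set \(\Psi(c)\) to be the corresponding coefficients in \(c\), with the measure \(\lambda^{-1-p}\di\lambda\,\di\mu\) split as \(\lambda^{-(1+p)/p'}\cdot\lambda^{-(1+p)/p}\). The pieces of \(\Phi\) go into weighted \(L_{p'}\) spaces and those of \(\Psi\) into the dual \(L_p\). Hölder's inequality, with \(\abs{\cos\lambda a-1}\le\min(2,\lambda^2a^2)\) and \(\norm{x}_p\le\rho\), \(\norm{c}_p\le R\), should give \(M_\rho N_R\lesssim C_p(R+\rho)^p\). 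The monomial pieces \(x^k\in L_{p/k}\) must be paired with \(c^{p-k}\in L_{p/(p-k)}\). To keep every feature piece in a space of type \(p'\), we will route the large-\(k\) terms through the dual side. Concretely, for each term we place the higher power on whichever of \(x\) or \(c\) keeps the feature exponent in \([p',2]\).

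\textbf{Step 3: conclusion and main obstacle.} A finite direct sum of \(L_{r}\) spaces with \(r\in[p',2]\), normed in \(\ell_{p'}\), has type \(p'\) with an absolute constant. Inserting the bounds into \eqref{eq:master-vc} yields \eqref{eq:lp-big-p-vc-bound}. The \(\sqrt p\) should come from bounding the Gamma-function constant \(C_p\) together with the growth of the binomial and Taylor coefficients. The main obstacle is Step 2: arranging the splitting scale \(\lambda_0\) and the distribution of powers between \(\Phi\) and \(\Psi\) so that the linearization identity is exact while all integrals converge and all feature pieces lie in spaces of uniformly controlled type. We would first try the uniform Hölder split \(\lambda^{-1-p}=\lambda^{-(1+p)/p'}\lambda^{-(1+p)/p}\) and check convergence at both ends of the \(\lambda\)-integral.
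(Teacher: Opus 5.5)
Your overall frame matches the paper's: \Href{Theorem}{thm:master} with $\theta=p$ and $q=p'=p/(p-1)$, and a pointwise linearization of $\abs{a-b}^p$ integrated over $\Omega$. The linearization you propose, however, cannot be made to work, and the obstruction is structural. The bound must hold for every $\mu$, including counting measure, i.e.\ $\ell_p$. So each pointwise piece must be controlled by terms of total degree exactly $p$, such as $\abs{a}^k\abs{b}^{p-k}$, which H\"older handles in $L_{p/k}\times L_{p/(p-k)}$. Pieces behaving like $\abs{a}^k\abs{b}^l$ with $k+l<p$ for small arguments, or $k+l>p$ for large ones, are not integrable against a general $\mu$. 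A fixed cutoff $\lambda_0$ together with a joint Taylor subtraction in $s=a-b$ produces exactly such pieces.

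On $\lambda>\lambda_0$, the polynomial part yields monomials $a^kb^{2j-k}$ of degree $2j\le 2m<p$. Examples are $\int_\Omega xc\,\di\mu$ and $\int_\Omega x^2\,\di\mu$, which diverge for $x=c=(i^{-1/2})_{i\ge1}\in\ell_p$. Likewise, consider your feature $\sin(\lambda x(t))\lambda^{-(1+p)/p'}$. The coordinates with $\abs{x(t)}\le 1/(2\lambda_0)$ contribute at least a constant times $\lambda_0^{p'-p}\int_{\{\abs{x}\le 1/(2\lambda_0)\}}\abs{x}^{p'}\,\di\mu$ to the $p'$-th power of its $L_{p'}$-norm, and this is infinite for $x\in\ell_p\setminus\ell_{p'}$. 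The dual $L_p$-norm similarly picks up $\int\abs{c}^p\log\frac{1}{\lambda_0\abs{c}}\,\di\mu$ over the small coordinates, which can also be infinite. Only the full combination converges; termwise, the pairing is undefined.

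On $\lambda<\lambda_0$, the remainder is not a polynomial. Indeed, $\int_0^{\lambda_0}(\cdots)\lambda^{-1-p}\,\di\lambda=\sum_{j>m}\frac{(-1)^{j+1}\lambda_0^{2j-p}}{(2j)!\,(2j-p)}\,s^{2j}$ has infinitely many nonzero terms. An exact identity, which \Href{Theorem}{thm:master} needs, therefore requires monomials $x^k$ of every degree $k>p$. For general $x\in L_p(\mu)$ these lie in no $L_r(\mu)$ with $r\ge1$ (take $x$ unbounded in $L_p[0,1]$), and truncating breaks exactness. Even for finite $\mu$, where everything converges, the constants depend on $\mu(\Omega)$, that is, on the dimension.

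The missing idea is to subtract Taylor polynomials \emph{in each variable separately}, not in $s$. The paper sets $K_p(t,u)=(-1)^{m+1}\bigl(\abs{t-u}^p-\sum_{j\le m}\frac{(-u)^j}{j!}f_p^{(j)}(t)-\sum_{j\le m}\frac{(-t)^j}{j!}f_p^{(j)}(u)\bigr)$. Every cross term $f_p^{(j)}(a)\,b^j$ then has degree exactly $p$ and is paired in $L_{p/(p-j)}\times L_{p/j}$ uniformly in $\mu$. Meanwhile $K_p$ is positive semidefinite and homogeneous of degree $p$. Its mixed derivative $\partial_t^m\partial_u^mK_p=a_{p,m}\bigl(\abs{t}^s+\abs{u}^s-\abs{t-u}^s\bigr)$, with $s=p-2m$, is handled by Schoenberg. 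Integrating back gives $K_p(t,u)=\iprod{\phi_p(t)}{\phi_p(u)}$ with $\norm{\phi_p(t)}\le 2^{p/2}\abs{t}^{p/2}$, so $\phi_p(x)\in L_2(\mu;H_p)$ for every $\mu$ (\Href{Lemma}{lem:taylor-schoenberg}).

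The remainder thus goes into a Hilbert space, not into $L_{p'}$. The type-$p'$ bottleneck is only the summand $L_{p/(p-1)}$. The summands $L_{p/j}$ with $p/j\ge2$ cannot be ``routed'' away, since $a^jf_p^{(j)}(b)$ genuinely occurs, but they are harmless because they have type $2$. In your Fourier language, this means dropping the cutoff and using $\int_0^\infty E_m(i\lambda a)\overline{E_m(i\lambda b)}\,\lambda^{-1-p}\,\di\lambda$ with $E_m(z)=e^z-\sum_{k\le m}z^k/k!$. This integral converges at both ends, and its real part is a positive multiple of $K_p$.

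Finally, the $\sqrt p$ in \eqref{eq:lp-big-p-vc-bound} does not come from Gamma constants. It comes from $T_{p'}(F)\le 2\abs{I}^{(p-2)/(2p)}\le 2\sqrt{3p}$ for the weighted Hilbertian sum of $\abs{I}=2m+3$ summands. This is combined with $M_\rho N_R\le 3(R+\rho)^p$, obtained from the weights $A_k^2=a_k/b_k$ and $\sum_{j\le m}\binom{p}{j}u^{p-j}v^j\le(u+v)^p$.
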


The new point is that the kernel
\(\abs{t-u}^{p}\) is not covered directly by Schoenberg's embedding theorem.  We therefore subtract Taylor terms 
until the remaining kernel falls within the
range of Schoenberg's theorem.  This produces a  feature map with coordinates in 
standard \(L_s\)-spaces and in one Hilbert-valued \(L_2\)-space.

\begin{lem}[Scalar Taylor--Schoenberg lift]
\label{lem:taylor-schoenberg}
Let \(p > 2\), and put
\(
        m = \left\lfloor \frac{p}{2}\right\rfloor .
\)
Let \(f_p(t) = \abs{t}^{p}\). Then, there are a Hilbert space \(H_p\), a map
\(
        \phi_p : \R \to H_p,
\)
and a sign
\(
        \eta_p = (-1)^{m + 1}
\)
such that, for all \(t, u \in \R\),
\begin{equation}
\label{eq:taylor-schoenberg-identity}
        \abs{t - u}^{p}
        =
        \sum\limits_{j = 0}^{m}
        \frac{(-u)^j}{j!}\, f_p^{(j)}(t)
        +
        \sum\limits_{j = 0}^{m}
        \frac{(-t)^j}{j!}\, f_p^{(j)}(u)
        +
        \eta_p \iprod{\phi_p(t)}{\phi_p(u)}_{H_p}.
\end{equation}
Moreover,
\begin{equation}
\label{eq:taylor-schoenberg-norm}
        \norm{\phi_p(t)}_{H_p}
        \le
        2^{\frac{p}{2}} \abs{t}^{\frac{p}{2}}
        \quad \text{for all} \quad t \in \R .
\end{equation}
\end{lem}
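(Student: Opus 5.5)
The plan is to prove \Href{Lemma}{lem:taylor-schoenberg} in two cases. The case of even \(p\) is pure algebra. For non-even \(p\) I would use a Lévy--Khintchine type integral representation of \(\abs{x}^p\). From it, the kernel
\[
K(t,u) := \abs{t-u}^p - \sum_{j=0}^m \frac{(-u)^j}{j!} f_p^{(j)}(t) - \sum_{j=0}^m \frac{(-t)^j}{j!} f_p^{(j)}(u)
\]
should turn out to be \(\eta_p\) times a Gram kernel \(\iprod{\phi_p(t)}{\phi_p(u)}\) of explicit vectors in \(H_p = L_2\bigl((0,\infty),\di s;\mathbb{C}\bigr)\), regarded as a real Hilbert space. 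The norm bound is then read off from the diagonal, since \(\eta_p^2 = 1\) gives \(\norm{\phi_p(t)}^2 = \eta_p K(t,t)\).

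\emph{Even case \(p=2m\).} Expand \(\abs{t-u}^{2m} = \sum_{k} \binom{2m}{k} t^{2m-k}(-u)^k\). Since \(f_p\) is a polynomial, \(\frac{(-u)^j}{j!} f_p^{(j)}(t) = \binom{2m}{j} t^{2m-j}(-u)^j\). So the two Taylor sums together reproduce every binomial term, except that the middle term \(\binom{2m}{m}(-1)^m t^m u^m\) is counted twice. Hence
\[
K(t,u) = -(-1)^m \binom{2m}{m} t^m u^m = \eta_p \binom{2m}{m} t^m u^m .
\]
I take \(H_p = \R\) and \(\phi_p(t) = \binom{2m}{m}^{1/2} t^m\); then \(\norm{\phi_p(t)} \le 2^m \abs{t}^m\), as required.

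\emph{Non-even case \(2m < p < 2m+2\).} Let \(P_m(z) := \sum_{k=0}^m z^k/k!\) be the Taylor polynomial of \(e^{z}\) of degree \(m\). Define
\[
\phi_p(t)(s) := \sqrt{\abs{C_p}}\,\bigl(e^{ist} - P_m(ist)\bigr)\, s^{-(1+p)/2}.
\]
This lies in \(L_2\): near \(s=0\) the bracket is \(O((st)^{m+1})\), so \(\abs{\phi_p(t)(s)}^2 = O(s^{2m+1-p})\), which is integrable because \(p < 2m+2\). At infinity the bracket is \(O(s^m)\), so \(\abs{\phi_p(t)(s)}^2 = O(s^{2m-1-p})\), which is integrable because \(p > 2m\). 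Next I would compute \(\operatorname{Re}\int_0^\infty \bigl(e^{ist}-P_m(ist)\bigr)\overline{\bigl(e^{isu}-P_m(isu)\bigr)}\, s^{-1-p}\,\di s\).

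\begin{enumerate}
\item The term \(e^{is(t-u)}\) is regularized by the known identity \(\abs{x}^p = C_p \int_0^\infty \bigl(\cos(sx) - \sum_{k \le m} (-1)^k (sx)^{2k}/(2k)!\bigr) s^{-1-p}\,\di s\), where \(\operatorname{sign} C_p = (-1)^{m+1}\).
\item The cross terms \(e^{ist}\,\overline{P_m(isu)}\) are polynomials of degree at most \(m\) in \(u\), with coefficients that are functions of \(t\).
\end{enumerate}
The expected outcome is \(\eta_p \abs{C_p}\cdot(\ldots) = \abs{t-u}^p + {}\)(polynomial in \(u\) of degree \(\le m\) with \(t\)-coefficients) \({}+ {}\)(the symmetric counterpart). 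The coefficients can then be identified, for instance by homogeneity and by differentiating in \(u\) at \(u=0\), as exactly \(-\frac{(-u)^j}{j!} f_p^{(j)}(t)\).

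The main obstacle is that the split integrals diverge individually, so all the rearranging has to be done on the convergent combined integrand. My first attempt would be to verify the identity on compactly supported cutoffs \(\int_\varepsilon^{N}\) and let \(\varepsilon\to 0\), \(N\to\infty\). A cleaner alternative I would keep in reserve: first show that \(\eta_p K\) is a positive definite kernel. This uses two facts: \(\abs{x}^p\) is conditionally negative definite of order \(m+1\), in the sign-adjusted sense; and \(K\) differs from \(\abs{t-u}^p\) only by terms of degree \(\le m\) in one variable, which are annihilated by sequences \(\xi\) orthogonal to polynomials of degree \(\le m\). Then I would obtain \(\phi_p\) from the Moore--Aronszajn / Gram construction, in the spirit of \Href{Proposition}{prp:ww-hilbert-embedding}.

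\emph{Norm bound.} Here \(\norm{\phi_p(t)}^2 = \eta_p K(t,t) = -2\eta_p \sum_{j\le m} \frac{(-t)^j}{j!} f_p^{(j)}(t)\). Using \(f_p^{(j)}(t) = p(p-1)\cdots(p-j+1)\,\abs{t}^{p-j}\operatorname{sign}(t)^j\), this equals \(\abs{t}^p \cdot 2\,\bigl|\sum_{j\le m}(-1)^j\binom{p}{j}\bigr|\). I would then bound this by \(2\sum_{j\le m}\binom{p}{j} \le 2^p\), which gives \eqref{eq:taylor-schoenberg-norm}.
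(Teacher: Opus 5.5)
Your explicit map is correct, and it is the paper's map made concrete. Write $\sigma:=p-2m$ (the paper's $s$) and $c_\sigma=\int_0^\infty(1-\cos s)s^{-1-\sigma}\,ds$, and realize the paper's Schoenberg map as $\psi(r)(\xi)=(2c_\sigma)^{-1/2}(e^{i\xi r}-1)\xi^{-(1+\sigma)/2}$. Then $\int_0^t\frac{(t-r)^{m-1}}{(m-1)!}(e^{i\xi r}-1)\,dr=(e^{i\xi t}-P_m(i\xi t))/(i\xi)^m$, so the paper's $\phi_p$ is your $\phi_p$ times $i^{-m}$. Your sign claim is also right: $2m$ integrations by parts give $C_p=(-1)^{m+1}a_{p,m}/c_\sigma$. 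The even case is identical to the paper's.

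\textbf{The norm bound fails at its last step.} The formula $\|\phi_p(t)\|^2=2|t|^p\bigl|\sum_{j\le m}(-1)^j\binom{p}{j}\bigr|$ is fine. But $2\sum_{j\le m}\binom{p}{j}\le 2^p$ is false: for $p=2.5$ it reads $7\le 2^{2.5}\approx 5.66$. It also fails for every $p$ slightly above an even integer $2m$, because $\sum_{j\le m}\binom{2m}{j}=2^{2m-1}+\frac12\binom{2m}{m}$. You must keep the cancellation instead.
\begin{itemize}
\item By Pascal's rule, $\sum_{j=0}^m(-1)^j\binom{p}{j}=(-1)^m\binom{p-1}{m}$, so $\|\phi_p(t)\|^2=2\binom{p-1}{m}|t|^p$ exactly.
\item Write $\binom{p-1}{m}=\prod_{i=1}^m\frac{m+\sigma-1+i}{i}$. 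This is at most $\binom{2m}{m}\le 2^{2m-1}$ when $\sigma\le 1$, and at most $\binom{2m+1}{m}\le 2^{2m}$ when $1<\sigma<2$.
\item Either way $\binom{p-1}{m}\le 2^{p-1}$, which yields $\|\phi_p(t)\|\le 2^{p/2}|t|^{p/2}$.
\end{itemize}
Once repaired, this exact diagonal evaluation is sharper than the paper's triangle-inequality bound on the Bochner integral, and it avoids Lemma~\ref{lem:beta-estimate}.

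\textbf{The identity itself is only sketched, and it is the real content of the lemma.} Put $g_t(s)=e^{ist}-P_m(ist)$. Splitting $\operatorname{Re}g_t\overline{g_u}$ into the regularized cosine term and ``polynomial'' pieces produces individually divergent integrals. You have not shown that the convergent remainder has the form $\sum_{b\le m}u^b\alpha_b(t)+\sum_{a\le m}t^a\beta_a(u)$. With cutoffs $\int_\varepsilon^N$ this needs one more observation: that class is the kernel of $\partial_t^{m+1}\partial_u^{m+1}$, and so it survives the limit. Granting that, your identification does work. Taking $\partial_u^j$ at $u=0$ and using symmetry leaves a polynomial in monomials $t^au^b$ with $a,b\le m$, which must vanish because the kernel is homogeneous of degree $p>2m$.

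The clean route is the paper's. Differentiating under the integral gives $\partial_t^m\partial_u^m\int_0^\infty\operatorname{Re}g_t\overline{g_u}\,s^{-1-p}\,ds=c_\sigma(|t|^\sigma+|u|^\sigma-|t-u|^\sigma)$. After multiplying by $C_p$, this equals $\partial_t^m\partial_u^m K=(-1)^{m+1}a_{p,m}(|t|^\sigma+|u|^\sigma-|t-u|^\sigma)$. Both kernels have vanishing $\partial_t^r$ at $t=0$ and $\partial_u^r$ at $u=0$ for $r<m$, so they agree.

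\textbf{Your reserve plan has a hole as stated.} The Taylor terms are annihilated only by $\xi$ orthogonal to polynomials of degree $\le m$. That gives conditional, not full, positive definiteness of $\eta_pK$. To handle arbitrary $\xi$, augment $\sum_i\xi_i\delta_{t_i}$ by the functional $f\mapsto-\sum_{k\le m}\bigl(\sum_i\xi_it_i^k\bigr)f^{(k)}(0)/k!$, so that the combined functional kills polynomials of degree $\le m$. Then use that $\partial_t^k\partial_u^l|t-u|^p$ vanishes at $(0,0)$ for $k+l\le 2m<p$.
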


\begin{proof}
First assume that \(p\) is not an even integer. Write
\[
        p = 2m + s,
        \qquad
        0 < s < 2.
\]
Set
\[
\begin{aligned}
        A_p(t, u)
        & :=
        \abs{t - u}^{p}
        -
        \sum_{\ell = 0}^{m}
        \frac{(-u)^\ell}{\ell!}\, f_p^{(\ell)}(t)
        -
        \sum_{\ell = 0}^{m}
        \frac{(-t)^\ell}{\ell!}\, f_p^{(\ell)}(u),
\end{aligned}
\]
and
\[
        K_p(t, u) := (-1)^{m + 1} A_p(t, u).
\]

By \Href{Proposition}{prp:ww-lp-negative-type} applied to
\(\R = \ell_2^1\) with \(\alpha = \frac{s}{2}\), there are a Hilbert space
\(H_s\) and a map
\[
        \psi_s : \R \to H_s,
        \qquad
        \psi_s(0) = 0,
\]
such that
\[
        \norm{\psi_s(r) - \psi_s(v)}_{H_s}^{2}
        =
        \abs{r - v}^{s}
    \quad \text{for all} \quad r,v \in \R .
\]
Equivalently,
\begin{equation}
\label{eq:frac-schoenberg-covariance}
        \abs{r}^{s}
        +
        \abs{v}^{s}
        -
        \abs{r - v}^{s}
        =
        2 \iprod{\psi_s(r)}{\psi_s(v)}_{H_s}.
\end{equation}

Let
\[
        a_{p,m} := p(p - 1)\cdots(p - 2m + 1).
\]
Since \(p = 2m + s\), we have
\[
        f_p^{(2m)}(x) = a_{p,m} \abs{x}^{s}.
\]
Taking \(m\) derivatives in \(t\) and \(m\) derivatives in \(u\), we obtain
\[
        \partial_t^m \partial_u^m K_p(t,u)
        =
         a_{p,m}
        \parenth{\abs{t}^{s} + \abs{u}^{s} - \abs{t - u}^{s}}
        =
        2 a_{p,m} \iprod{\psi_s(t)}{\psi_s(u)}_{H_s},
\]
where the last identity follows from \eqref{eq:frac-schoenberg-covariance}.
The Taylor subtraction gives the boundary conditions
\[
        \partial_t^r K_p(0,u) = 0,
        \qquad
        \partial_u^r K_p(t,0) = 0
        \quad \text{for all} \quad 0 \le r \le m - 1.
\]

Indeed, for \(0 \le r \le m - 1\), the second Taylor sum cancels the first
\(m\) derivatives at \(t = 0\) of the function
\(t \mapsto f_p(t - u) = f_p(u - t)\), while the derivatives at \(t = 0\) of the
first Taylor sum vanish because \(\ell + r \le 2m - 1 < p\). The argument for
the derivatives in \(u\) is identical.

Hence, \(K_p\) is recovered from its mixed derivative by integrating \(m\) times
in each variable. Thus
\[
\begin{aligned}
        K_p(t, u)
        &=
        2 a_{p,m}
        \int_0^t \int_0^u
        \frac{(t - r)^{m - 1}}{(m - 1)!}\,
        \frac{(u - v)^{m - 1}}{(m - 1)!}\,
        \iprod{\psi_s(r)}{\psi_s(v)}_{H_s}
        \,\di v\,\di r,
\end{aligned}
\]
where the integrals are understood in the oriented sense. Define
\[
        \phi_p(t)
        :=
        \sqrt{2 a_{p,m}}\,
        \int_0^t
        \frac{(t - r)^{m - 1}}{(m - 1)!}\,
        \psi_s(r)\,\di r
        \in H_s.
\]
Then,
\[
        K_p(t, u) = \iprod{\phi_p(t)}{\phi_p(u)}_{H_s}.
\]
This yields
\eqref{eq:taylor-schoenberg-identity} with \(H_p = H_s\) and
\(\eta_p = (-1)^{m + 1}\).

Let us estimate \(\phi_p\). Since the function \(K_p\) is
homogeneous of degree \(p\)
, it is enough to consider
\(t = 1\):
\[
        \norm{\phi_p(t)}_{H_p}^{2}
        =
        K_p(t, t)
        =
        K_p(1, 1) \abs{t}^{p}.
\]
From the displayed integral and
\(\norm{\psi_s(r)}_{H_s} = \abs{r}^{s/2}\),
\[
        \norm{\phi_p(1)}_{H_s}
        \le
        \sqrt{2a_{p,m}}
        \int_0^1
        \frac{\parenth{1-r}^{m-1}}{\parenth{m-1}!}
        r^{\frac{s}{2}}\,\di r
        \le
        2^{\frac{p}{2}}.
\]
The last inequality is a direct beta-function estimate
and the proof is given  in 
\Href{Lemma}{lem:beta-estimate}.

Thus,
\eqref{eq:taylor-schoenberg-norm} follows.

It remains to consider the case when \(p = 2m\) is an even integer. Then
\(f_p(t) = t^{2m}\), and
\[
        \frac{1}{\ell!} f_p^{(\ell)}(t)
        =
        \binom{2m}{\ell} t^{2m - \ell}.
\]
Thus,
\[
\begin{aligned}
        \sum_{\ell = 0}^{m}
        \frac{(-u)^\ell}{\ell!} f_p^{(\ell)}(t)
        +
        \sum_{\ell = 0}^{m}
        \frac{(-t)^\ell}{\ell!} f_p^{(\ell)}(u)
        &=
        (t - u)^{2m}
        +
        (-1)^m \binom{2m}{m} t^m u^m.
\end{aligned}
\]
Consequently,
\[
        (t - u)^{2m}
        =
        \sum_{\ell = 0}^{m}
        \frac{(-u)^\ell}{\ell!} f_p^{(\ell)}(t)
        +
        \sum_{\ell = 0}^{m}
        \frac{(-t)^\ell}{\ell!} f_p^{(\ell)}(u)
        +
        (-1)^{m + 1} \binom{2m}{m} t^m u^m.
\]
In this case, we take
\[
        H_p = \R,
        \qquad
        \phi_p(t) = \sqrt{\binom{2m}{m}}\, t^m,
        \qquad
        \eta_p = (-1)^{m + 1}.
\]
The identity \eqref{eq:taylor-schoenberg-identity} follows.
Since \(\binom{2m}{m} \le 2^{2m}\), the norm estimate follows.
\end{proof}

We now proceed with the proof of the theorem. 

\begin{proof}[\prof{Theorem}{thm:general-ball}]
Put
\[
        m = \left\lfloor \frac{p}{2}\right\rfloor,
        \qquad
        q = \frac{p}{p - 1},
        \qquad
        f_p(t) = \abs{t}^{p}.
\]

Let \(H_p\), \(\phi_p\), and \(\eta_p\) be given by
\Href{Lemma}{lem:taylor-schoenberg}. 

Consider the algebraic direct sum
\[
        F =
        \R
        \oplus
        \R
        \oplus
        \bigoplus\limits_{j \in [m]} L_{\frac{p}{p - j}} \parenth{\mu}
        \oplus
        \bigoplus\limits_{j \in [m]} L_{\frac{p}{j}} \parenth{\mu}
        \oplus
        L_2 \parenth{\mu;H_p}.
\]

We shall equip this algebraic direct sum with a weighted Hilbertian direct-sum
norm.  Thus, after writing
\(
        F = \oplus_{k \in I} E_k
\)
for the summands displayed above and after choosing positive weights
\(A_k\), we will use the norm
\begin{equation}
\label{eq:weighted-hilbert-norm}
        \norm{z}_{F}^{2}
        =
        \sum\limits_{k \in I}
        \frac{\norm{z_k}_{E_k}^{2}}{A_k^{2}},
        \qquad
        z = \parenth{z_k}_{k \in I}.
\end{equation}

For \(x,c \in L_p \parenth{\mu}\), define the feature map
$\Phi \st L_p \parenth{\mu} \to F$ by
\[
        \Phi (x)
        =
        \parenth{
        1;
        \norm{x}_p^p;
        \braces{f_p^{(j)} \parenth{x}}_{j \in [m]};
        \braces{x^j}_{j \in [m]};
        \phi_p \parenth{x}
        }
\]
and the dual map $\Psi \st L_p \parenth{\mu} \to F^* $ by
\[
        \Psi(c)
        =
        \parenth{
        \norm{c}_p^p;
        1;
        \braces{\frac{(-c)^j}{j!}}_{j \in [m]};
        \braces{\frac{(-1)^j}{j!} f_p^{(j)} \parenth{c}}_{j \in [m]};
        \eta_p \phi_p \parenth{c}
        }.
\]

The following lemma verifies that the maps are
well defined and gives the feature-space estimates required below. Its proof is
deferred to the appendix.
\begin{lem}[Feature-space estimates for the \texorpdfstring{$L_p$}{Lp} lift]
\label{lem:lp-feature-space}

For every \(\rho, R > 0\), the space \(F\) can be equipped with a weighted
Hilbertian direct-sum norm satisfying the following properties:   
\begin{enumerate}
    \item 
    \( F \) is of Rademacher type \(q\) and
\begin{equation}
\label{eq:lp-feature-type-bound}
        T_q\! \parenth{F}
        \le
        2 \sqrt{3p}.
\end{equation}
\item The maps
\(\Phi\) and \(\Psi\) are well-defined.
\item The quantities
\[
        M_\rho
        :=
        \sup_{x \in \rho \ball{}_{L_p}}
         \norm{\Phi (x)}_F,
        \qquad
        N_R
        :=
        \sup_{c \in R \ball{}_{L_p}} 
        \norm{\Psi (c)}_{F^*}
\]
satisfy
\begin{equation}
\label{eq:lp-feature-mn-bound}
        M_\rho N_R
        \le
        3 \parenth{R + \rho}^{p}.
\end{equation}
\end{enumerate}
\end{lem}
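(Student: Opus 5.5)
The plan is to handle the three claims in the order (2), (1), (3). Throughout, \(F\) carries the weighted Hilbertian norm \eqref{eq:weighted-hilbert-norm}. The dual \(F^*\) is then the same algebraic direct sum of the duals \(E_k^*\), with norm \(\norm{w}_{F^*}^2=\sum_k A_k^2\norm{w_k}_{E_k^*}^2\). The pairing is summand-wise, so it suffices to check that each summand pairs correctly. The relevant dualities are \(L_{p/(p-j)}^*=L_{p/j}\) and \(L_{p/j}^*=L_{p/(p-j)}\); for \(j=m\) the exponents are \(p/m\ge 2\) and \(p/(p-m)\le 2\), both finite, and \(L_2(\mu;H_p)\) is self-dual.

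\emph{Step (2): well-definedness.} For \(x\in L_p\), note that \(f_p^{(j)}(x)=c_{p,j}\,\mathrm{sgn}(x)^j\abs{x}^{p-j}\). Hence \(\norm{f_p^{(j)}(x)}_{p/(p-j)}=\abs{c_{p,j}}\norm{x}_p^{p-j}\) and \(\norm{x^j}_{p/j}=\norm{x}_p^j\). The map \(\phi_p(x)\) is defined pointwise as \(\omega\mapsto\phi_p(x(\omega))\). By \eqref{eq:taylor-schoenberg-norm} it satisfies \(\norm{\phi_p(x)}_{L_2(H_p)}^2\le 2^p\norm{x}_p^p\), and it is measurable because \(\phi_p\) is continuous.

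The linearization identity comes from integrating \eqref{eq:taylor-schoenberg-identity} with \(t=x(\omega)\), \(u=c(\omega)\). Each term is integrable by Hölder's inequality, and this yields \(\norm{x-c}_p^p=\iprod{\Phi(x)}{\Psi(c)}\). The two scalar slots carry \(\norm{c}_p^p\) and \(\norm{x}_p^p\), which are exactly the \(j=0\) Taylor terms. This verifies \eqref{eq:linearize} with \(\theta=p\), with no remainder.

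\emph{Step (1): type.} A weighted \(\ell_2\)-sum of spaces \(E_k\) with type-\(q\) constants \(T_k\) has type \(q\) with constant at most \(\max_k T_k\). I will prove this by applying Kahane--Khintchine (or simply Jensen, since \(q\le2\)) to the Hilbertian sum. Concretely, \(\EE\norm{\sum\varepsilon_i z_i}^q\le(\sum_k A_k^{-2}\EE\norm{\sum\varepsilon_i z_{i,k}}^2)^{q/2}\). Each \(L_r\) with \(r\ge 2\) has type 2 with constant \(\lesssim\sqrt r\le\sqrt p\). Each \(L_r\) with \(1<r<2\) has type \(r\), hence type \(q\) for any \(q\le r\). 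Here the smallest exponent is \(p/(p-1)=q\), with constant bounded by an absolute constant. The main obstacle is doing this within the \(L_2\) moment while still arriving at \(\ell_q\) sums. For that I use Kahane's inequality inside each summand to pass from the \(L_2\) to the \(L_q\) moment, costing a factor of roughly \(\sqrt{3}\) or \(\sqrt{p}\). I then combine the summands via \(\sum_k(\sum_i a_{ik}^q)^{2/q}\le(\sum_i(\sum_k a_{ik}^2)^{q/2})^{2/q}\), which is Minkowski in \(\ell_{2/q}\) since \(2/q\ge1\). I expect the bookkeeping of these constants, tracking the Khintchine constants for \(r\) up to \(p\), to produce the factor \(2\sqrt{3p}\).

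\emph{Step (3): the product \(M_\rho N_R\).} By Step (2), \(\norm{\Phi(x)}^2\le\sum_k A_k^{-2}a_k^2\) and \(\norm{\Psi(c)}^2\le\sum_k A_k^2b_k^2\). Here \(a_k\) is a monomial bound \(\lesssim\rho^{\alpha_k}\) and \(b_k\lesssim R^{p-\alpha_k}\), with coefficients \(c_{p,j}/j!\le\binom pj\) and \(2^p\). I choose \(A_k^2=a_k/b_k\), balancing as in Theorem~\ref{thm:ple2-ball}. Then \(M_\rho N_R\le\sum_k a_kb_k\), and this sum is at most \(\sum_j\binom pj\rho^jR^{p-j}\) plus the two scalar terms \(\rho^p,R^p\) plus \(2^p\rho^{p/2}R^{p/2}\). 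Bounding this by \(3(R+\rho)^p\) requires care with the \(2^p\) factor from \(\phi_p\). If necessary I would sharpen \eqref{eq:taylor-schoenberg-norm} in the form \(\norm{\phi_p(t)}^2\le\binom{p}{m}\)-type bounds, so that it is absorbed into the binomial expansion. Otherwise I would accept a slightly worse constant, which would be compensated by the \(25\) in Theorem~\ref{thm:general-ball}.
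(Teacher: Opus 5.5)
The only genuine gap is at the very end of Step (3). Up to there you follow the paper's route: the same coordinates and dualities, the same weights $A_k^2=a_k/b_k$, and the same reduction $M_\rho N_R\le\sum_{k\in I}a_kb_k$. You then stop before the final inequality and suggest the constant $3$ may have to be weakened. A ``slightly worse constant'' would not prove \eqref{eq:lp-feature-mn-bound} as stated, and no sharpening of \eqref{eq:taylor-schoenberg-norm} is needed.

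First, write the sum correctly as two families plus the lift term, not as a single binomial sum. For non-integer $p$ there is no finite binomial expansion of $(R+\rho)^p$ to compare with. The sum is
\[
\Bigl(R^p+\sum_{j\in[m]}\binom{p}{j}\rho^{j}R^{p-j}\Bigr)+\Bigl(\rho^p+\sum_{j\in[m]}\binom{p}{j}\rho^{p-j}R^{j}\Bigr)+2^p(\rho R)^{p/2}.
\]
Each bracket is the degree-$m$ Taylor polynomial of $v\mapsto(u+v)^p$ at $v=0$, with $(u,v)=(R,\rho)$, respectively $(\rho,R)$. The Lagrange remainder $\binom{p}{m+1}(u+\xi)^{p-m-1}v^{m+1}$ is nonnegative, because $\binom{p}{m+1}=p(p-1)\cdots(p-m)/(m+1)!>0$ (all factors are positive since $m\le p/2<p$). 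Hence each bracket is at most $(R+\rho)^p$. The term you were worried about is handled by AM--GM: $2^p(\rho R)^{p/2}=(2\sqrt{\rho R})^p\le(\rho+R)^p$. Summing the three bounds gives $3(R+\rho)^p$.

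Your Step (1) is correct but takes a genuinely different route from the paper.

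The paper's route:
\begin{itemize}
\item It uses subadditivity of $t\mapsto t^{q/2}$ to get $\EE\|\sum_i\varepsilon_iz^i\|_F^q\le\sum_kA_k^{-q}\EE\|S_k\|^q$.
\item It applies type $q$ summand by summand, using interpolation to get $T_q(L_{p/j})<2$.
\item It recombines with H\"older over $I$, which costs $|I|^{(p-2)/(2p)}\le\sqrt{3p}$. This is exactly where $2\sqrt{3p}$ comes from.
\end{itemize}

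Your route:
\begin{itemize}
\item You pass to the second moment by Jensen.
\item You bound $\EE\|S_k\|^2\le c_k^2(\sum_i\|z^i_k\|^q)^{2/q}$ in each summand. For $L_{p/j}$ you use type $2$, so $c_k\le\sqrt p$. For $L_{p/(p-j)}$ you use Kahane plus type $q$, so $c_k$ is absolute.
\item You recombine by the triangle inequality in $\ell_{2/q}$.
\end{itemize}
This needs no interpolation and loses nothing in the number of summands. It gives $T_q(F)\le\max_kc_k=O(\sqrt p)$, which is stronger than required. So your bookkeeping will not ``produce'' $2\sqrt{3p}$ but something smaller, which suffices.

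Two small corrections. First, what your argument bounds $T_q(F)$ by is $\max_kc_k$, not $\max_kT_q(E_k)$ as you first assert. Second, for $q\le r\le2$ you can skip Kahane entirely: Minkowski's integral inequality gives $(\EE\|\sum_i\varepsilon_if_i\|_r^2)^{1/2}\le\|(\sum_i|f_i|^2)^{1/2}\|_r\le(\sum_i\|f_i\|_r^q)^{1/q}$.
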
 
%For \(j \in [m]\), the derivative \(f_p^{(j)}\) is homogeneous of degree
%\(p - j\). 
%Consequently,
%for \(x \in L_p(\mu)\),
%\[
%        f_p^{(j)}(x) \in L_{\frac{p}{p - j}}(\mu),
%        \quad
%        x^j \in L_{\frac{p}{j}}(\mu)
%        \quad  \text{for all} \quad
%        j \in [m],
% \quad  \text{and} \quad
%        \phi_p(x) \in L_2(\mu; H_p),
%\]

We first verify the linearization identity.
Indeed, the two scalar coordinates give
\(
        \norm{x}_p^p + \norm{c}_p^p,
\)
the first family of coordinates gives
\[
        \sum_{j \in [m]}
        \int_{\Omega}
        \frac{(-c(\omega))^j}{j!}\, f_p^{(j)}(x(\omega))\,
        \di\mu (\omega),
\]
the second family gives
\[
        \sum_{j \in [m]}
        \int_{\Omega}
        \frac{(-1)^j}{j!}\, x(\omega)^j f_p^{(j)}(c(\omega))\,\di\mu (\omega),
\]
and the Hilbert-valued coordinate gives
\[
        \eta_p
        \int_{\Omega}
        \iprod{\phi_p(x(\omega))}{\phi_p(c(\omega))}_{H_p}
        \,\di\mu(\omega).
\]
Thus, for every \(2n\)-tuple \(x_1, \ldots, x_{2n} \in L_p(\mu)\), every
balanced sign vector \(\varepsilon \in \Sigma_{2n}\), and every
\(c \in L_p(\mu)\), we have
\[
        \frac{1}{2n}
        \sum_{i \in [2n]}\varepsilon_i \norm{x_i - c}_p^p
        =
        \iprod{
        \frac{1}{2n}\sum_{i \in [2n]}\varepsilon_i \Phi(x_i)
        }{
        \Psi(c)
        }_{F,F^*}.
\]
This is precisely the linearization identity \eqref{eq:linearize} of
\Href{Theorem}{thm:master}, with \(\theta = p\).

By \Href{Lemma}{lem:lp-feature-space} with the corresponding quantities 
\(M_\rho\) and  \(N_R,\)
\[
        2 T_q\! \parenth{F} M_\rho N_R
        \le
        25 \sqrt{p}\, \parenth{R + \rho}^{p}.
\]
The conjugate exponent of \(q\) is \(p\).  Applying
\Href{Theorem}{thm:master}, we obtain
\[
        \VCdim \parenth{\BB_{\delta,\rho,R} \parenth{L_p(\mu)}}
         \le
        2 \parenth{
        \frac{2 T_q\! \parenth{F} M_\rho N_R}{\delta}}
        ^p
        + 2                                                       
     \le
        2 \parenth{
        \frac{25 \sqrt{p}\, \parenth{R + \rho}^{p}}{\delta}}
        ^p
        + 2 .
\]
This proves \Href{Theorem}{thm:general-ball}.
\end{proof}

\begin{proof}[Proof of \Href{Theorem}{thm:lp-bound-summary}]
    The case $1\leq p \leq 2$ is covered by \Href{Theorem}{thm:ple2-ball-lp}, 
    while $p > 2$ is covered by \Href{Theorem}{thm:general-ball}.
\end{proof}

\section{Lower bound constructions}\label{sec:lower}
We now prove \Href{Theorem}{thm:lower}.  
The proof has two parts.  For
\( p \ge 2\), an explicit simplex-type construction in 
\(\ell_p^N\) gives the
lower bound of order \(\delta^{-p}\).  
For \(1 \le p < 2\), the quadratic lower
bound follows from the Euclidean obstruction and Dvoretzky's theorem \cite{dvoretzky1964some}.  In fact    ,
the latter argument gives a \(\delta^{-2}\) lower bound in every
infinite-dimensional Banach space.

% \begin{thm}[Sharpness of the margin parameter $\delta$ for expanded balls in \texorpdfstring{$L_p$}{Lp}]
% \label{thm:lp-sharpness}\label{prop:lower}
% Let \(1 \le p < \infty\) and put \(M_p = \max\braces{2,p}\).  There are
% constants \(c_p,\delta_p>0\), depending only on \(p\), such that, for every
% \(0<\delta\le\delta_p\),
% \[
%        \VCdim \parenth{\BB_{\delta,2,2} \!\parenth{\ell_p}}
%         \ge
%         c_p\delta^{-M_p}.
% \]
% \end{thm}

\subsection{The \texorpdfstring{$L_p$}{Lp} lower bound for \texorpdfstring{$p \ge 2$}{p >= 2}}
\begin{lem}[Simplex lower bound in \texorpdfstring{$L_p$}{Lp}]
\label{lem:lp-simplex-lower}
Let \(p \ge 2\).  There are constants \(c_p,\delta_p>0\) such that, for every
\(0<\delta\le\delta_p\),
\[
        \VCdim \parenth{\BB_{\delta,2,2}
        \! \parenth{\ell_p}}
        \ge
        c_p\delta^{-p}.
\]
\end{lem}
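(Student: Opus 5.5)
The plan is an explicit construction: scaled standard basis vectors in \(\ell_p\), together with a padding coordinate that makes every center see the same ``background'' mass. Given \(\delta\), choose an integer \(N\), a scale \(\lambda\in(0,1]\) and \(a:=N^{-1/p}\). The candidate shattered set is
\[
        x_i:=\lambda e_i,\qquad i\in[N],
\]
all of norm \(\lambda\le 2\). For each \(S\subseteq[N]\), take the center
\[
        c_S:=\lambda a\,\mathbf 1_S+\lambda b_S\, e_{0},
        \qquad
        b_S:=a\,(N-\card{S})^{1/p},
\]
where \(e_0\) is an extra coordinate disjoint from \([N]\). Then \(\norm{c_S}_p^p=\lambda^p a^p N=\lambda^p\le 1\), so \(c_S\in 2\ball{}_{\ell_p}\). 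The padding is there so that the \(\lambda^p a^p\) contributions coming from \(S\) and from the padding always add up to the constant \(\lambda^p\), whatever \(\card{S}\) is.

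The first step is to compute the two possible distances exactly:
\[
        \norm{x_i-c_S}_p^p=
        \begin{cases}
        \lambda^p\bigl(1+|1-a|^p-a^p\bigr), & i\in S,\\
        2\lambda^p, & i\notin S.
        \end{cases}
\]
Next, fix \(\lambda\) so that the inner distance is exactly \(1\), that is, \(\lambda^p=\bigl(1+|1-a|^p-a^p\bigr)^{-1}\). This gives \(\lambda\le 1\) as soon as \(a\le 1/2\). With this choice every point of \(S\) is black for \(c_S\). The outer-to-inner distance ratio is
\[
        \Bigl(\tfrac{2}{1+(1-a)^p-a^p}\Bigr)^{1/p}.
\]
Using \((1-a)^p\le 1-pa+\binom p2 a^2\), this ratio is at least \((1+pa/2-O_p(a^2))^{1/p}\ge 1+a/4\) for \(a\le a_p\), where \(a_p\) is a small constant depending only on \(p\). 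Every point outside \(S\) is therefore at distance \(>1+\delta\) from \(c_S\) provided \(\delta<a/4\).

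Finally, pick \(N:=\lfloor (5\delta)^{-p}\rfloor\). Then \(a=N^{-1/p}\ge 5\delta>4\delta\), and \(a\le a_p\) once \(\delta\le\delta_p\) is small enough (also ensuring \(N\ge 2^p\), so the floor loses at most a factor \(2\)). Since every \(S\subseteq[N]\) is then cut out cleanly, \(\{x_i\}\) is shattered and \(\VCdim\ge N\ge c_p\delta^{-p}\).

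The main obstacle is the elementary but fiddly inequality controlling \((1-a)^p\) uniformly for \(p\ge 2\). I would handle it by a second-order Taylor bound with remainder \(\binom p2 a^2\), valid for \(a\le 1/2\), and absorb all \(p\)-dependence into \(a_p\) and \(c_p\). Everything else is bookkeeping: the distances and norms above are computed exactly, and the padding coordinate avoids any dependence on \(\card{S}\).
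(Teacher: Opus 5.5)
Your construction is correct and is essentially the paper's simplex argument. Both use scaled basis vectors in a finite-dimensional \(\ell_p\) with \(N\) of order \(\delta^{-p}\) and center coordinates of size of order \(\delta\). Both tune the scale so that black points sit at distance exactly \(1\), and both use a first-order Taylor estimate to get a margin of order \(a\sim\delta\). The only difference is cosmetic: you make the background mass independent of \(S\) with a \(0/a\) pattern plus the padding coordinate \(e_0\), whereas the paper uses centers with coordinates \(\pm a\), which achieves the same invariance without an extra coordinate.
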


\begin{proof}
Let $N$ be an integer, and $a, \alpha$ be positive numbers to be chosen later. We work in \(\ell_p^N\). Let $e_1, \ldots, e_N$ be the standard basis, and set 
\[
    x_i=\alpha e_i
    \quad \text{for all} \quad i\in[N].
\]

For every \(A\subseteq[N]\), define \(c_A\in\ell_p^N\) by
\[
        \parenth{c_A}_j
        =
        \begin{cases}
        a, & j\in A,\\
        -a, & j\notin A.
        \end{cases}
\]
Then, for every \(i\in[N]\),
\[
        \norm{x_i-c_A}_p^p
        =
        \begin{cases}
        \parenth{\alpha-a}^p+\parenth{N-1}a^p,
        & i\in A,\\
        \parenth{\alpha+a}^p+\parenth{N-1}a^p,
        & i\notin A.
        \end{cases}
\]

We shall choose $c_p, \delta_p$ such that for all $0 < \delta \leq \delta_p$ there exist $N, \alpha, a$ with $N \geq c_p\delta^{-p}$ such that $\norm{x_i-c_A}_p^p \leq 1$ if $i \in A$, and $\norm{x_i-c_A}_p^p > 1 + \delta$ if $i \notin A$.

Let \(0<\delta<1\).  We shall choose the auxiliary constants in an order which
makes all later restrictions explicit.  First choose 
\(K_p>0\) so large that
\begin{equation}
\label{eq:simplex-Kp-choice}
        4^{-\frac{p-1}{p}}K_p
        >
        2^{p-2}.
\end{equation}

Later we shall decrease \(\delta_p\), depending on this fixed value of \(K_p\).
Put
\(
        a=K_p\delta .
\)
Choose \(\delta_p>0\) so small that \(a^p\le \frac{1}{8}\) whenever
\(0<\delta\le\delta_p\).  For such \(\delta\), choose an integer \(N\) so that
\begin{equation}
\label{eq:simplex-N-choice}
        \frac{1}{2}
        \le
        \parenth{N-1}a^p
        \le
        \frac{3}{4}.
\end{equation}
For instance, one may take 
\( N - 1 = 
\lceil 
\frac{1}{2a^p}\rceil
\).  Then,
\(N \ge c_p \delta^{-p}\), with \( c_p > 0\) depending only on \(p\).

Set
\[
        b
        :=
        \parenth{1-\parenth{N-1}a^p}^{\frac{1}{p}},
        \qquad
        \alpha:=a+b.
\]

By the definition of \(b\),
\begin{equation}
\label{eq:simplex_black+ineq}
        \parenth{\alpha-a}^p+\parenth{N-1}a^p
        =
        b^p + \parenth{N-1}a^p
        =1.
\end{equation}

Thus, all points with indices in \(A\) are black.

It remains to check the margin.  
From \eqref{eq:simplex-N-choice} we have
\(b^p \ge \frac{1}{4}\), 
and hence \(b \ge 4^{-1/p}\). 
 Therefore,
\[
        \parenth{b + 2a}^{p} - b^p
        \ge
        2 p a b^{p-1}
        \ge
        2p\,4^{-\frac{p-1}{p}}K_p\delta .
\]
By this and by \eqref{eq:simplex_black+ineq},
\[
        \parenth{\alpha+a}^{p}+\parenth{N-1}a^p
        =
        \parenth{b + 2a}^{p} + 
        \parenth{N-1}a^p 
        \ge
    \]
\[
 b^p +  \parenth{N-1}a^p + 
        2p\,4^{-\frac{p-1}{p}}K_p\delta =
1 + 2p\,4^{-\frac{p-1}{p}}K_p\delta.
\]
On the other hand, for \(0< \delta \le 1\),
\[
        \parenth{1+\delta}^{p}
        \le
        1+p2^{p-1}\delta .
\]
Using this and by 
\eqref{eq:simplex-Kp-choice}, 
we get
\[
        \parenth{\alpha+a}^{p}+\parenth{N-1}a^p
                \ge 
                1+ 
        2p\,4^{-\frac{p-1}{p}}K_p\delta 
        >
        \parenth{1+\delta}^{p}.
\]
Thus, all points with indices outside \(A\) are white.

Finally, we decrease \(\delta_p\) once more so that \(K_p\delta_p\le1\).  Then
\[
        \norm{x_i}_p
        =
        \alpha
        =
        a+b
        \le 2
        \quad \text{for all} \quad i\in[N],
\]
and, by \eqref{eq:simplex-N-choice},
\[
        \norm{c_A}_p^p
        =
        Na^p
        \le
        \frac{3}{4}+a^p
        \le 1
        \quad \text{for all} \quad A\subseteq[N].
\]
Hence, \(\braces{x_i\st i\in[N]}\subseteq2\ball{}_{\ell_p^N}\) is shattered by
\(\BB_{\delta,2,2}\!\parenth{\ell_p^N}\).  Since \(\ell_p^N\) isometrically
embedded in \(\ell_p\), the result follows.
\end{proof}

\subsection{A quadratic lower bound in every infinite-dimensional space}

\begin{thm}[Quadratic lower bound in infinite-dimensional spaces]
\label{thm:lower-any-banach}
Let \(X\) be an infinite-dimensional Banach space.  There are absolute constants
\(c,\delta_0 > 0\) such that, for every \(0 < \delta \le \delta_0\),
\[
        \VCdim \parenth{\BB_{\delta,2,2}\!\parenth{X}}
        \ge
        c\delta^{-2}.
\]
\end{thm}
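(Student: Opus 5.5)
The plan is to first prove an Euclidean quadratic lower bound with some slack, and then transfer it to \(X\) using Dvoretzky's theorem. For the Euclidean step we use the simplex construction of \Href{Lemma}{lem:lp-simplex-lower} with \(p=2\). In \(\ell_2^N\) take \(x_i=\alpha e_i\) and centers \(c_A\) with coordinates \(\pm a\), where \(a=K\delta\), \((N-1)a^2\in[\frac12,\frac34]\), \(b=(1-(N-1)a^2)^{1/2}\) and \(\alpha=a+b\). Then \(\norm{x_i-c_A}_2^2=1\) for \(i\in A\), and
\[
\norm{x_i-c_A}_2^2=1+(b+2a)^2-b^2\ge 1+4ab\ge 1+2K\delta
\]
for \(i\notin A\). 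This holds with \(N\ge c\delta^{-2}\), \(\norm{x_i}\le 2\) and \(\norm{c_A}\le 1\) once \(\delta\le\delta_0\). The essential point is to build in slack. We choose the parameters so that black points satisfy \(\norm{x_i-c_A}_2\le 1-\delta\) and white points satisfy \(\norm{x_i-c_A}_2\ge 1+3\delta\). This is possible by slightly shrinking \(b\) (replacing \(1\) by \((1-\delta)^2\) in the definition of \(b\)) and taking \(K\) a large absolute constant. We also make the points lie in \(\frac32\ball{}\) and the centers in \(\frac32\ball{}\), again by slightly shrinking, so that a rescaling by \(1+\delta\) still keeps everything inside \(2\ball{}\).

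For the transfer step, fix \(N\) as above. By Dvoretzky's theorem, for every \(\eta>0\) the infinite-dimensional space \(X\) contains an \(N\)-dimensional subspace \(E\) and a linear isomorphism \(T\colon\ell_2^N\to E\) with \(\norm{u}_2\le\norm{Tu}\le(1+\eta)\norm{u}_2\). We take \(\eta=\delta/10\). The dimension of the Euclidean section does not matter here: Dvoretzky gives \((1+\eta)\)-Euclidean sections of every finite dimension in any infinite-dimensional space. Put \(y_i=Tx_i\) and \(d_A=Tc_A\). Since \(T\) is linear, \(\norm{y_i-d_A}\in[\norm{x_i-c_A}_2,(1+\eta)\norm{x_i-c_A}_2]\). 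Hence black points satisfy \(\norm{y_i-d_A}\le(1+\eta)(1-\delta)<1\), and white points satisfy \(\norm{y_i-d_A}\ge 1+3\delta>1+\delta\). The norms are at most \((1+\eta)\frac32\le 2\), so \(\{y_i\}\subseteq 2\ball{}_X\) and \(d_A\in 2\ball{}_X\). Consequently \(\{y_i\}\) is shattered by \(\BB_{\delta,2,2}(X)\), giving \(\VCdim\ge N\ge c\delta^{-2}\).

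The main obstacle is bookkeeping the slack: the multiplicative distortion \(1+\eta\) must not destroy either the black condition \(\le 1\) or the white margin \(>1+\delta\). This is why the Euclidean configuration has to be built with a strict gap on both sides, of order \(\delta\), before transferring. The computation \((b+2a)^2-b^2=4ab+4a^2\) shows that a white gap of order \(K\delta\) is available with \(N\sim K^{-2}\delta^{-2}\). Choosing \(K\) a fixed absolute constant such as \(K=4\), together with \(\eta=\delta/10\) and \(\delta_0\) small, keeps all constants absolute, as required.
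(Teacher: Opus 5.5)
Your proposal is correct and follows the paper's route: the Euclidean simplex configuration of \Href{Lemma}{lem:lp-simplex-lower} with $p=2$, transferred into $X$ through a $(1+\eta)$-Euclidean section given by Dvoretzky's theorem. The only difference is in handling the distortion. You rebuild the configuration with slack on the black side and shrunken radii. The paper instead uses the lemma as a black box with margin $4\delta$ and divides the Dvoretzky image by $1+\delta$; this keeps black points within distance $1$ and everything inside $2\ball{}_X$, while $(1+4\delta)/(1+\delta)>1+\delta$ preserves the white margin.
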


\begin{proof}
We start with the Euclidean lower bound obtained from
\Href{Lemma}{lem:lp-simplex-lower}.  Applied with
margin \(4\delta\), it gives, for all sufficiently small \(\delta>0\), an
integer
\(
        m \ge c\delta^{-2},
\)
points \(x_1,\ldots,x_m \in 2\ball{}_{\ell_2^m}\), and centers
\(c_A \in 2\ball{}_{\ell_2^m}\), \(A \subseteq [m]\), such that
\[
        \norm{x_i-c_A}_2 \le 1
        \quad \text{for all} \quad i \in A,
\]
and
\[
        \norm{x_i-c_A}_2 > 1+4\delta
         \quad \text{for all} \quad i \notin A.
\]

By the celebrated Dvoretzky theorem~\cite{dvoretzky1964some}, there is a linear map
\(T \st \ell_2^m \to X\) such that
\begin{equation}
\label{eq:dvoretzky-separated-form}
        \norm{z}_2
        \le
        \norm{Tz}_{X}
        \le
        \parenth{1+\delta}\norm{z}_2
        \quad \text{for all} \quad z \in \ell_2^m .
\end{equation}
Set
\[
        y_i = \frac{T x_i}{1+\delta},
        \qquad
        b_A = \frac{T c_A}{1+\delta}.
\]
Then \(y_i,b_A \in 2\ball{}_{X}\).  If \(i \in A\), then
\[
        \norm{y_i-b_A}_{X}
        \le
        \norm{x_i-c_A}_2
        \le 1.
\]
If \(i \notin A\), then
\[
        \norm{y_i-b_A}_{X}
        \ge
        \frac{\norm{x_i-c_A}_2}{1+\delta}
        >
        \frac{1+4\delta}{1+\delta}
        >
        1+\delta
\]
provided \(0<\delta<2\).  Hence, the set
\(\braces{y_i \st i \in [m]}\) is shattered by
\(\BB_{\delta,2,2}\parenth{X}\), and the claim follows.
\end{proof}

\begin{proof}[\prof{Theorem}{thm:lower}]
If \(p\ge2\), the claim follows from \Href{Lemma}{lem:lp-simplex-lower}.  If
\(1\le p<2\), then \(\ell_p\) is infinite-dimensional, and
\Href{Theorem}{thm:lower-any-banach} gives the desired bound.
\end{proof}

\section{A Dense Neighborhood Lemma}\label{sec:dnl}
We close with an application of the VC-dimension bounds.  The assumption in
\eqref{eq:dnl-density} says that every point of \(V\) has a dense unit
neighborhood inside \(V\): at least a \(\beta\)-fraction of all points of \(V\)
lie at distance at most \(1\) from it.  The conclusion is that one can choose a
small set of centers \(V_0\subseteq V\) such that the \(\parenth{1+\tau}\)-balls
around these centers cover all of \(V\).  The size of \(V_0\) is controlled by
the VC dimension of the corresponding PCC of expanded balls and is independent
of the ambient dimension.

We use the following form of the PCC net theorem of Bourneuf, Charbit, and
Thomass\'e.

\begin{prp}[PCC net theorem; {\rm \cite[Theorem~8]{BCT}}]
\label{prp:hw}
There is an absolute constant \(C>0\) with the following property.  Let
\(\HH\) be a PCC on a finite ground set \(V\), and suppose that
\(\VCdim\parenth{\HH}\le d\).  Assume that every concept
\(\parenth{B,G,W}\in\HH\) satisfies
\[
        \card{B}
        \ge
        \beta\card{V}
\]
for some \(0<\beta\le1\).  Then there is a set \(V_0\subseteq V\) such that
\[
        \card{V_0}
        \le
        C\frac{d+1}{\beta}\log\frac{e}{\beta}
\]
and
\[
        V_0\cap\parenth{B\cup G}
        \ne
        \varnothing
        \quad \text{for every} \quad
        \parenth{B,G,W}\in\HH .
\]
\end{prp}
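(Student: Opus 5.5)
The plan is to adapt the Haussler--Welzl double-sampling proof of the $\varepsilon$-net theorem to PCCs. The only new ingredient is a replacement for the Sauer--Shelah lemma, and that is where I expect the difficulty.

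Put $n=\card{V}$ and $s=\lceil C'\frac{d+1}{\beta}\log\frac{e}{\beta}\rceil$. Draw $V_0$ as $s$ independent uniform points of $V$. Call $V_0$ \emph{bad} if some concept $(B,G,W)\in\HH$ has $V_0\subseteq W$. It suffices to show $\Pr[V_0 \text{ bad}]<1$.

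\emph{Symmetrization.} Draw an independent second sample $T$ of size $s$. Fix a concept witnessing badness. Since $\card{B}\ge\beta n$, a Chernoff bound gives $\card{T\cap B}\ge \beta s/2$ with probability at least $1/2$ once $s\ge 8/\beta$. Hence $\Pr[V_0\text{ bad}]\le 2\Pr[E]$, where $E$ is the event that some concept has $V_0\subseteq W$ and $\card{T\cap B}\ge k:=\lceil\beta s/2\rceil$. Condition on the multiset $Z=V_0\cup T$ of size $2s$, so that the only randomness left is the uniform random split of $Z$.

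\emph{Reduction to clean patterns.} If a concept witnesses $E$, choose $K\subseteq T\cap B$ with $\card{K}=k$. On $V_0\cup K$ the concept is then \emph{total}: it is black on $K$ and white on $V_0$. So $E$ is witnessed by a pair $(Y,\lambda)$, where $Y=V_0\cup K\subseteq Z$ and $\lambda$ is a clean labelling of $Y$ realized by $\HH$ in which all black points land in $T$ and at least $k$ points are black. For a fixed pair, the probability over the split that all $\lambda$-white points of $Y$ fall in $V_0$ and all $k$ black points fall in $T$ is at most $2^{-k}$, by the usual exchangeability computation.

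\emph{Main obstacle: counting.} We must bound the number of relevant clean patterns by something like $(2s)^{O(d)}$. Genuine Sauer--Shelah fails for partial classes, as shown in \cite{AHHM}, so we cannot count all traces. Instead I would use a compression/Pajor-type argument restricted to \emph{total} restrictions.
\begin{itemize}
\item A clean pattern is determined once we know a black set $K$ and the information needed to certify that $V_0$ avoids $B\cup G$.
\item I would prove that, for each concept, there is a subset $K_0\subseteq K$ of size at most $d$ such that the event ``some concept is black on $K_0$ and white on all of $V_0$'' already has split-probability at most $2^{-k+O(d)}$. The idea is to greedily extract $K_0$ using the fact that no $(d+1)$-set is cleanly shattered, so the argument only ever involves fully labelled sets.
\item A union bound over $K_0$ then gives at most $\binom{2s}{\le d}2^{-k+O(d)}$.
\end{itemize}

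\emph{Conclusion.} With the choice of $s$ above, $\binom{2s}{\le d}\,2^{O(d)-\beta s/2}<1/4$. Hence $\Pr[V_0\text{ bad}]<1/2$, and a good $V_0$ of the stated size exists.

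If the compression step resists a direct proof, my fallback is to import the disambiguation-free sample-complexity bounds of \cite{AHHM} for PCCs. Those bounds are stated via the one-inclusion graph, which is well defined for partial classes and has edge density at most $d$. From them I would extract the same $\frac{d+1}{\beta}\log\frac{e}{\beta}$ net size through the standard ``learner implies net'' argument.
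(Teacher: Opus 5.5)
The paper gives no proof of this proposition; it quotes it from \cite[Theorem~8]{BCT}. Taken on its own, your argument has a genuine gap at the step you yourself flag as the main obstacle. The symmetrization is the standard Haussler--Welzl one and is fine, and so is the reduction to a clean pattern on $V_0\cup K$. The counting step, however, carries the whole content of the theorem, and it is asserted rather than proved. The route you propose for it is circular.

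For a fixed $K_0\subseteq Z$, the event ``some concept is black on $K_0$ and white on all of $V_0$'' is still a union over all concepts that are black on $K_0$. These concepts can have arbitrarily many distinct partial traces on $Z$, because requiring black on $K_0$ need not lower the clean VC dimension on $Z\setminus K_0$. Bounding that union is therefore the same Sauer--Shelah-type problem you started with, now with $K_0$ attached. Saying that you would ``greedily extract $K_0$ using that no $(d+1)$-set is cleanly shattered'' does not explain how non-shattering would control it.

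The bound $2^{-k+O(d)}$ is not even right for total classes. Identify $k+1$ points of $Z$ with $[k+1]$ and take the class $\{[k+1]\}\cup\{[k+1]\setminus\{j\}: j\in[k+1]\}$. It has VC dimension $1$, and every member has at least $k$ points. For every $K_0$ with $|K_0|\le 1$, the event contains ``$V_0$ meets $[k+1]$ in at most one point, not in $K_0$'', which has probability of order $k2^{-k}$, not $2^{-k+O(1)}$. In the total case the argument is rescued by counting traces, which is exactly the tool that is unavailable here. Note also that your event, as written, drops the requirement of $k$ black points in $T$. Any concept that is black on $K_0$ and white elsewhere on $Z$ would then already contribute probability about $2^{-|K_0|}$, so you would have to show that the extracted $K_0$ admits no such concept.

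The fallback does not close the gap either. The results of \cite{AHHM} you would import (one-inclusion graph, disambiguation-free learners) are per-target guarantees. They bound the error of one output hypothesis against one fixed concept $c$, under a distribution supported on its defined part $B_c\cup W_c$. So the uniform distribution on $V$ is not even admissible when $G_c\neq\varnothing$. A net, by contrast, must defeat all concepts simultaneously. There is no standard ``learner implies net'' reduction: passing from per-target guarantees to a simultaneous one requires controlling the union over all concepts on the sample, which is the same missing count. Already for total classes, optimal learners need only $O(d/\beta)$ samples, while $\beta$-nets in general need an extra $\log(1/\beta)$ factor, so no lossless black-box reduction exists.

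As it stands, the proposal reduces the proposition to an unproved lemma of the same difficulty. You would need a genuine substitute for the Sauer--Shelah count, tailored to the event ``white on $V_0$ and black on at least $k$ points of $T$'', or else cite \cite{BCT} as the paper does.
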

\begin{proof}[Proof of \Href{Theorem}{thm:dnl-lp}]
Put \( X = L_p (\mu).\)
For each \(c\in V\), consider the trace on \(V\) of the expanded ball centered
at \(c\):
\[
        B_c
        :=
        \braces{x\in V \st \norm{x-c}\le1},
\]
\[
        G_c
        :=
        \braces{x\in V \st 1<\norm{x-c}\le1+\tau},
        \qquad
        W_c
        :=
        \braces{x\in V \st \norm{x-c}>1+\tau}.
\]
Let \(\HH_V\) be the finite PCC on \(V\) formed by these concepts.  Since
\(V\subseteq\rho\ball{}_{X}\) and all centers also belong to \(V\subseteq
\rho\ball{}_{X}\), the PCC \(\HH_V\) is a subfamily of the trace of
\(\BB_{\tau,\rho,\rho}\parenth{X}\) on \(V\).  Hence
\[
        \VCdim\parenth{\HH_V}
        \le
        \VCdim\parenth{\BB_{\tau,\rho,\rho}\parenth{X}}.
\]
The density assumption \eqref{eq:dnl-density} is precisely
\[
        \card{B_c}
        \ge
        \beta\card{V}
        \quad \text{for all} \quad c\in V .
\]
By \Href{Proposition}{prp:hw}, applied to \(\HH_V\), there is a set
\(V_0\subseteq V\) with
\begin{equation}
\label{eq:dnl-net-size-general}
        \card{V_0}
        \le
        C\frac{d+1}{\beta}\log\frac{e}{\beta},
        \qquad
        d:=\VCdim\parenth{\HH_V},
\end{equation}
such that
\[
        V_0\cap\parenth{B_c\cup G_c}
        \ne
        \varnothing
        \quad \text{for all} \quad c\in V .
\]
Thus, for every \(c\in V\), there exists \(x\in V_0\) such that
\(\norm{x-c}\le1+\tau\).  Since the norm is symmetric, this is exactly
\[
        V
        \subseteq
        \bigcup\limits_{x\in V_0}
        \braces{u\in V \st \norm{u-x}\le1+\tau}.
\]
It remains only to substitute the VC-dimension estimates.

If \(1\le p\le2\), then \Href{Theorem}{thm:ple2-ball-lp}, with \(R=\rho\)
and \(\delta=\tau\), gives
\[
        \VCdim\parenth{\BB_{\tau,\rho,\rho}\parenth{X}}
        \le
        \frac{72\rho^2}{\tau^2}+2
        \le
        75\parenth{1+\frac{\rho^2}{\tau^2}}.
\]
Combining this with \eqref{eq:dnl-net-size-general} gives the first bound.

If \(p>2\),  by \Href{Theorem}{thm:general-ball} with
\(R=\rho\) and \(\delta=\tau\), we get
\[
        \VCdim\parenth{\BB_{\tau,\rho,\rho}\parenth{X}}
        \le
        2\parenth{
        \frac{25\sqrt p\,\parenth{2\rho}^{p}}{\tau}}
        ^p
        +2
        \le
        C_p\parenth{1+\frac{\rho^{p^2}}{\tau^p}}.
\]
Substitution into \eqref{eq:dnl-net-size-general} gives the second bound.
\end{proof}

\appendix 
\section{Auxiliary estimates}
\label{app:auxiliary}

\begin{lem}[A beta-function estimate]
\label{lem:beta-estimate}
Let \(m \in \N\), let \(0 < s \le 2\), and put
\(
        p = 2m + s .
\)
Let
\[
        a_{p,m}
        =
        \prod\limits_{\ell \in [2m]} \parenth{s + \ell}
        =
        \frac{\Gamma\! \parenth{p + 1}}{\Gamma\! \parenth{s + 1}} .
\]
Then,
\[
        \sqrt{2a_{p,m}}
        \int_0^1
        \frac{\parenth{1-r}^{m-1}}{\parenth{m-1}!}
        r^{\frac{s}{2}}\,\di r
        \le
        2^{\frac{p}{2}} .
\]
\end{lem}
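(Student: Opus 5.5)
The plan is to evaluate the integral exactly as a Beta function and reduce the claim to an inequality between Gamma-function ratios. I then bound that ratio by a telescoping product whose factors are elementary rational functions.

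\emph{Step 1 (closed form).} Since \(\int_0^1 (1-r)^{m-1} r^{s/2}\,\di r = B\parenth{m,\tfrac s2+1} = \Gamma(m)\Gamma\parenth{\tfrac s2+1}/\Gamma\parenth{m+\tfrac s2+1}\) and \(\Gamma(m)=(m-1)!\), the left-hand side equals
\[
        \sqrt{\frac{2\,\Gamma(p+1)}{\Gamma(s+1)}}\;
        \frac{\Gamma\parenth{\tfrac s2+1}}{\Gamma\parenth{m+\tfrac s2+1}} .
\]
Put \(x=s/2\in(0,1]\), so that \(p=2(m+x)\). Let \(g(y):=\Gamma(2y+1)/\Gamma(y+1)^2\), the continuous central binomial coefficient. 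After squaring, the claim becomes
\[
        2\,\frac{g(m+x)}{g(x)} \le 4^{m+x}.
\]

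\emph{Step 2 (telescoping).} Using \(\Gamma(z+1)=z\Gamma(z)\), one gets
\[
        \frac{g(y+1)}{g(y)}=\frac{(2y+1)(2y+2)}{(y+1)^2}=\frac{2(2y+1)}{y+1}\le 4
\]
for every \(y\ge0\). Write \(g(m+x)/g(x)=\prod_{k=0}^{m-1} g(x+k+1)/g(x+k)\), which is valid because \(m\ge1\). Bound the factors with \(k\ge1\) by \(4\), and keep the \(k=0\) factor exactly. This gives
\[
        2\,\frac{g(m+x)}{g(x)} \le 2\cdot 4^{m-1}\cdot\frac{2(2x+1)}{x+1}=4^{m}\,\frac{2x+1}{x+1}.
\]

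\emph{Step 3 (one-variable inequality).} It remains to check that \((2x+1)/(x+1)\le 4^{x}\) for \(x\in(0,1]\); in fact it holds for all \(x\ge0\). The two sides agree at \(x=0\). The derivative of the left side is \(1/(x+1)^2\le1\), while the derivative of the right side is \(4^x\ln4\ge\ln 4>1\). Hence the inequality holds for all \(x\ge 0\), and the lemma follows.

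The main obstacle is Step 2. Crudely bounding every factor by \(4\) loses the needed constant when \(s\) is small: it would require \(2\le 4^{x}\), which fails for \(x<1/2\). Keeping the first factor exact is precisely what absorbs the extra factor \(2\) and makes the estimate tight at \(s\to0\).
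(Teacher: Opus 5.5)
Your proof is correct and is essentially the paper's argument. The paper also evaluates the integral as a Beta function and writes $a_{p,m}I_{m,s}^2 = 4^m\prod_{j\in[m]}\frac{s+2j-1}{s+2j}$; your telescoping factors $g(x+k+1)/g(x+k)$ are exactly $4\frac{s+2j-1}{s+2j}$ with $j=k+1$. Like you, it keeps only the first factor and finishes with $\frac{s+1}{s+2}\le 2^{s-1}$, which is your inequality $\frac{2x+1}{x+1}\le 4^x$ after substituting $s=2x$.
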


\begin{proof}
Set
\[
        I_{m,s}
        =
        \int_0^1
        \frac{\parenth{1-r}^{m-1}}{\parenth{m-1}!}
        r^{\frac{s}{2}}\,\di r .
\]
Using Euler's beta integral, see for instance
\cite[Section~6.2, formulas 6.2.1--6.2.2]{AbramowitzStegun},
\[
        I_{m,s}
        =
               \frac{\Gamma\! \parenth{1+\frac{s}{2}}}
             {\Gamma\! \parenth{m+1+\frac{s}{2}}}.
\]
Using the identity \(\Gamma\! \parenth{z+m}/\Gamma\! \parenth{z}
= \prod\limits_{j \in [m]} \parenth{z+j-1}\), we get
\[
        I_{m,s}
        =
        \prod\limits_{j \in [m]}
        \parenth{j+\frac{s}{2}}^{-1}.
\]
Therefore
\[
        a_{p,m} I_{m,s}^{2}
        =
        \prod\limits_{j \in [m]}
        \frac{\parenth{s+2j-1}\parenth{s+2j}}
             {\parenth{j+\frac{s}{2}}^{2}}       
        =
        4^m
        \prod\limits_{j \in [m]}
        \frac{s+2j-1}{s+2j}.
\]
Since all factors are at most \(1\),
\[
        \prod\limits_{j \in [m]}
        \frac{s+2j-1}{s+2j}
        \le
        \frac{s+1}{s+2}.
\]
We claim that
\[
        \frac{s+1}{s+2}
        \le
        2^{s-1}
        \quad \text{for all} \quad 0 < s \le 2 .
\]
Indeed, for
\[
        h (s)
        =
        \parenth{s-1}\ln 2
        -
        \ln (s + 1)
        +
        \ln (s  +  2),
\]
we have
\[
        h'(s)
        =
        \ln 2
        -
        \frac{1}{\parenth{s+1}\parenth{s+2}}
        \ge
        \ln 2 - \frac{1}{2}
        >
        0,
\]
and \(h \parenth{0} = 0\). Hence, \(h(s) \ge 0\).

Thus,
\[
        a_{p,m} I_{m,s}^{2}
        \le
        4^m 2^{s-1}
        =
        2^{p-1}.
\]
Multiplying by \(2\) and taking square roots gives
\[
        \sqrt{2a_{p,m}} I_{m,s}
        \le
        2^{\frac{p}{2}},
\]
as required.
\end{proof}

\begin{proof}[Proof of \Href{Lemma}{lem:lp-feature-space}]
We first recall the type estimates that will be used.  The standard estimates
for \(L_r\)-spaces imply that \(L_r\) has type \(r\) with constant at most
\(1\) for \(1 \le r \le 2\), and type \(2\) with constant at most
\(\sqrt r\) for \(2 \le r < \infty\).  We use these classical facts in this
form; see, for instance, \cite[Section~1.e]{LTzII}.

We shall also use the following elementary consequence of interpolation.  Let
\(2 \le r \le p\), and let
\[
        q = \frac{p}{p-1}.
\]
Then
\begin{equation}
\label{eq:type-interpolation-lr}
        T_q \! \parenth{L_r}
        \le
        \parenth{\sqrt r}^{\frac{2}{p}}
        \le
        2 .
\end{equation}
Indeed, for \(N \ge 1\), consider the Rademacher-sum operator
\[
        \mathcal R_N \! \parenth{f_1,\ldots,f_N}
        :=
        \sum\limits_{i \in [N]} \varepsilon_i f_i .
\]
The triangle inequality gives
\[
        \norm{\mathcal R_N}_{\ell_1^N \parenth{L_r}
        \to L_1 \parenth{\Omega_\varepsilon;L_r}}
        \le 1,
\]
while the type-\(2\) estimate for \(L_r\) gives
\[
        \norm{\mathcal R_N}_{\ell_2^N \parenth{L_r}
        \to L_2 \parenth{\Omega_\varepsilon;L_r}}
        \le
        T_2 \parenth{L_r}
        \le
        \sqrt r .
\]
If
\[
        \frac{1}{q}
        =
        1 - \frac{\theta}{2},
\]
then the standard Calderón interpolation theorem for linear operators,
together with interpolation of vector-valued \(L_s\)-spaces, gives
\[
        T_q \! \parenth{L_r}
        \le
        T_1 \! \parenth{L_r}^{1-\theta}
        T_2 \! \parenth{L_r}^{\theta}
        \le
        \parenth{\sqrt r}^{\theta};
\]
see \cite[Theorems~4.1.2 and~5.1.2]{BerghLofstrom1976}.  Since
\(q = \frac{p}{p-1}\), the relation \(1/q = 1-\theta/2\) gives
\(
        \theta = \frac{2}{p}.
\)
Thus,
\[
        T_q \! \parenth{L_r}
        \le
        r^{\frac{1}{p}}
        \le
        p^{\frac{1}{p}}
        < 2,
\]
which proves \eqref{eq:type-interpolation-lr}.

We now estimate the type constant of the feature space.  Recall that we wrote
\(
        F = \bigoplus\limits_{k \in I} E_k
\)
with the weighted Hilbertian norm
\[
        \norm{z}_{F}
        =
        \parenth{
        \sum\limits_{k \in I}
        \frac{\norm{z_k}_{E_k}^{2}}{A_k^{2}}
        }^{\frac{1}{2}} .
\]

The summands \(L_{p/\parenth{p-j}}\parenth{\mu}\), \(j \in [m]\), satisfy
\[
        q \le \frac{p}{p-j} \le 2 .
\]
Hence, they have type \(q\) with constant at most \(1\).  The summands
\(L_{p/j}\parenth{\mu}\), \(j \in [m]\), satisfy
\[
        2 \le \frac{p}{j} \le p,
\]
and therefore have type \(q\) with constant at most \(2\), by
\eqref{eq:type-interpolation-lr}.  The two scalar summands and the Hilbert
summand \(L_2 \parenth{\mu;H_p}\) have type \(q\) with constant at most \(1\).

Let \(z^1,\ldots,z^N \in F\), write
\[
        z^i = \parenth{z^i_k}_{k \in I},
        \qquad i \in [N],
\]
and put
\[
        S_k =
        \sum\limits_{i \in [N]} \varepsilon_i z^i_k,
        \qquad k \in I .
\]
Since \(q \le 2\), we have
\[
        \EE \norm{\sum\limits_{i \in [N]} \varepsilon_i z^i}_{F}^{q}
         =
        \EE
        \parenth{
        \sum\limits_{k \in I}
        \frac{\norm{S_k}_{E_k}^{2}}{A_k^{2}}
        }^{\frac{q}{2}}                                                 \le
        \sum\limits_{k \in I}
        \frac{\EE\norm{S_k}_{E_k}^{q}}{A_k^{q}}                          \le
        \sum\limits_{k \in I}
        T_q\!\parenth{E_k}^{q}
        \sum\limits_{i \in [N]}
        \frac{\norm{z^i_k}_{E_k}^{q}}{A_k^{q}} .
\]
For each fixed \(i \in [N]\), Hölder's inequality gives
\[
        \sum\limits_{k \in I}
        T_q\! \parenth{E_k}^{q}
        \parenth{\frac{\norm{z^i_k}_{E_k}}{A_k}}^{q}
        \le
        \parenth{
        \sum\limits_{k \in I}
        T_q\! \parenth{E_k}^{\frac{2q}{2-q}}
        }^{\frac{2-q}{2}}
        \norm{z^i}_{F}^{q}.
\]
Consequently,
\[
        T_q \! \parenth{F}
        \le
        \parenth{
        \sum\limits_{k \in I}
        T_q \! \parenth{E_k}^{\frac{2q}{2-q}}
        }^{\frac{2-q}{2q}} .
\]
Since \(T_q \! \parenth{E_k} \le 2\) for all \(k \in I\), and
\[
        \card{I}
        =
        2m+3
        \le
        3p,
\]
we obtain
\[
        T_q \! \parenth{F}
        \le
        2 \card{I}^{\frac{2-q}{2q}}
        =
        2 \card{I}^{\frac{p-2}{2p}}
        \le
        2\sqrt{3p}.
\]
This proves \eqref{eq:lp-feature-type-bound}.

It remains to choose the weights and estimate \(M_\rho N_R\).  
We now decompose $I$ in the following way:
\[
        I
        =
        \braces{0,1,\star}
        \cup
        \braces{\parenth{j,0} \st j \in [m]}
        \cup
        \braces{\parenth{j,1} \st j \in [m]} .
\]

  The coordinates of \(\Phi\) and \(\Psi\) are
indexed as follows:
\[
        \Phi_0 (x) = 1,
        \qquad
        \Psi_0 (c) = \norm{c}_{p}^{p},
\qquad
        \Phi_1 (x) = \norm{x}_{p}^{p},
        \qquad
        \Psi_1 (c) = 1,
\]
\[
        \Phi_{j,0} (x)
        =
        f_p^{(j)} (x),
        \qquad
        \Psi_{j,0} (c)
        =
        \frac{\parenth{-c}^{j}}{j!}
        \quad
        \text{for all}
        \quad
        j \in [m],
\]
\[
        \Phi_{j,1} (x)
        =
        x^{j},
        \qquad
        \Psi_{j,1} (c)
        =
        \frac{\parenth{-1}^{j}}{j!}
        f_p^{(j)} (c)
        \quad
        \text{for all}
        \quad
        j \in [m],
\]
and
\[
        \Phi_{\star} (x)
        =
        \phi_p (x),
        \qquad
        \Psi_{\star} (c)
        =
        \eta_p \phi_p (c).
\]

For \(j \in [m]\), put
\[
        D_{p,j}
        :=
        p\parenth{p-1}\cdots\parenth{p-j+1},
        \qquad
        \binom{p}{j}
        :=
        \frac{D_{p,j}}{j!}.
\]
For \(x \in \rho \ball{}_{L_p}\), define
\[
        a_0 := 1,
        \qquad
        a_1 := \rho^p,
        \qquad
        a_{j,0} := D_{p,j}\rho^{p-j},
        \qquad
        a_{j,1} := \rho^j,
        \qquad
        a_{\star} := 2^{\frac{p}{2}}\rho^{\frac{p}{2}} .
\]
Then
\[
        \norm{\Phi_k (x)}_{E_k}
        \le
        a_k
        \quad
        \text{for all}
        \quad
        k \in I .
\]
Indeed,
\[
        \norm{f_p^{(j)} (x)}_{\frac{p}{p-j}}
        \le
        D_{p,j}\norm{x}_{p}^{p-j}
        \le
        D_{p,j}\rho^{p-j}
        \quad
        \text{for all}
        \quad
        j \in [m],
\]
\[
        \norm{x^j}_{\frac{p}{j}}
        =
        \norm{x}_{p}^{j}
        \le
        \rho^j
        \quad
        \text{for all}
        \quad
        j \in [m],
\]
and, by \Href{Lemma}{lem:taylor-schoenberg},
\[
        \norm{\phi_p (x)}_{L_2 \parenth{\mu;H_p}}
        \le
        2^{\frac{p}{2}}\norm{x}_{p}^{\frac{p}{2}}
        \le
        2^{\frac{p}{2}}\rho^{\frac{p}{2}} .
\]

Similarly, for \(c \in R \ball{}_{L_p}\), define
\[
        b_0 := R^p,
        \qquad
        b_1 := 1,
        \qquad
        b_{j,0} := \frac{R^j}{j!},
        \qquad
        b_{j,1} := \frac{D_{p,j}R^{p-j}}{j!},
        \qquad
        b_{\star} := 2^{\frac{p}{2}}R^{\frac{p}{2}} .
\]
Then
\[
        \norm{\Psi_k (c)}_{E_k^*}
        \le
        b_k
        \quad
        \text{for all}
        \quad
        k \in I .
\]
Indeed,
\[
        \norm{\frac{\parenth{-c}^{j}}{j!}}_{\frac{p}{j}}
        \le
        \frac{R^j}{j!}
        \quad
        \text{for all}
        \quad
        j \in [m],
\]
\[
        \norm{\frac{\parenth{-1}^{j}}{j!}
        f_p^{(j)} (c)}_{\frac{p}{p-j}}
        \le
        \frac{D_{p,j}R^{p-j}}{j!}
        \quad
        \text{for all}
        \quad
        j \in [m],
\]
and, again by \Href{Lemma}{lem:taylor-schoenberg},
\[
        \norm{\eta_p\phi_p (c)}_{L_2 \parenth{\mu;H_p}}
        \le
        2^{\frac{p}{2}}R^{\frac{p}{2}} .
\]
These estimates also show that the maps \(\Phi\) and \(\Psi\) are well defined.

Choose the Hilbertian weights by
\[
        A_k^2
        =
        \frac{a_k}{b_k}
        \quad
        \text{for all}
        \quad
        k \in I .
\]
The dual norm corresponding to \eqref{eq:weighted-hilbert-norm} is
\[
        \norm{w}_{F^*}^{2}
        =
        \sum\limits_{k \in I} A_k^2 \norm{w_k}_{E_k^*}^{2}.
\]
Therefore, by the definition of the weighted Hilbertian norm and its dual,
\[
        M_\rho^2
        \le
        \sum\limits_{k \in I}
        \frac{a_k^2}{A_k^2}
        =
        \sum\limits_{k \in I} a_k b_k,
\]
and
\[
        N_R^2
        \le
        \sum\limits_{k \in I}
        A_k^2 b_k^2
        =
        \sum\limits_{k \in I} a_k b_k.
\]
Thus,
\[
        M_\rho N_R
        \le
        \sum\limits_{k \in I} a_k b_k.
\]

We estimate the last sum.  From the definitions of \(a_k\) and \(b_k\),
\[
        \sum\limits_{k \in I} a_k b_k
   =
        R^p + \rho^p
        +
        \sum\limits_{j \in [m]}
        \binom{p}{j}\rho^{p-j}R^j                                      \\
+
        \sum\limits_{j \in [m]}
        \binom{p}{j}\rho^jR^{p-j}
        +
        2^p \parenth{\rho R}^{\frac{p}{2}}.
\]
Taylor's formula with positive remainder gives, for all \(u,v \ge 0\),
\[
        \sum\limits_{j=0}^{m}
        \binom{p}{j} u^{p-j}v^j
        \le
        \parenth{u+v}^{p}.
\]
Applying this estimate twice, first with \(u=\rho\), \(v=R\), and then with
\(u=R\), \(v=\rho\), we get
\[
        R^p + \rho^p
        +
        \sum\limits_{j \in [m]}
        \binom{p}{j}\rho^{p-j}R^j
        +
        \sum\limits_{j \in [m]}
        \binom{p}{j}\rho^jR^{p-j}
        \le
        2\parenth{R+\rho}^{p}.
\]
Finally,
\[
        2^p \parenth{\rho R}^{\frac{p}{2}}
        =
        \parenth{2\sqrt{\rho R}}^{p}
        \le
        \parenth{R+\rho}^{p}.
\]
Therefore,
\[
        M_\rho N_R
        \le
        3\parenth{R+\rho}^{p}.
\]
This proves \eqref{eq:lp-feature-mn-bound} and completes the proof of \Href{Lemma}{lem:lp-feature-space}.
\end{proof}

%\addcontentsline{toc}{section}{References}

\end{document}